\documentclass[generic]{imsart}

\RequirePackage[OT1]{fontenc}
\RequirePackage{amsthm,amsmath,amsfonts}
\RequirePackage[round]{natbib}
\bibliographystyle{ecta}
\RequirePackage[colorlinks,citecolor=blue,urlcolor=blue]{hyperref}
\RequirePackage{algorithm}
\RequirePackage{algorithmic}
\RequirePackage{graphicx}
\RequirePackage{rotating}
\RequirePackage{subcaption}
\usepackage{cleveref}
\usepackage{subeqnarray}


\newcounter{termcounter}
\renewcommand{\thetermcounter}{\Alph{termcounter}}
\crefname{term}{term}{terms}
\creflabelformat{term}{#2\textup{(#1)}#3}

\makeatletter
\def\term{\@ifnextchar[\term@optarg\term@noarg}
\def\term@optarg[#1]#2{%
  \textup{#1}%
  \def\@currentlabel{#1}%
  \def\cref@currentlabel{[][2147483647][]#1}%
  \cref@label[term]{#2}}
\def\term@noarg#1{%
  \refstepcounter{termcounter}%
  \textup{\thetermcounter}%
  \cref@label[term]{#1}}
\makeatother


\startlocaldefs
\numberwithin{equation}{section}
\theoremstyle{plain}
\newtheorem{Def}{Definition}
\newtheorem{Thm}{Theorem}
\newtheorem{Lem}{Lemma}
\newtheorem{corollary}{Corollary}

\newtheorem{remark}{Remark}

\newcommand{\Cov}{\operatorname{Cov}}
\newcommand{\Tr}{\operatorname{Tr}}
\newcommand{\E}{\operatorname{E}}
\newcommand{\Var}{\operatorname{Var}}
\newcommand{\Xbar}[1]{\overline{{#1}}}

\newcommand{\myspace}{\hspace{4pt}}

\def\ci{\perp\!\!\!\perp}
\endlocaldefs
\newcounter{qcounter}
\newcommand\projecturl{\url{https://github.com/wbounliphone/Ustatistics_Approach_For_SD }}

\begin{document}

\begin{frontmatter}
\title{A U-statistic Approach to Hypothesis Testing for
Structure Discovery in Undirected Graphical Models}
\runtitle{Hypothesis Testing for
Structure Discovery}

\begin{aug}
\author{\fnms{Wacha} \snm{Bounliphone}\thanksref{t1}\ead[label=e1]{wacha.bounliphone@centralesupelec.fr}}
\and
\author{\fnms{Matthew B.} \snm{Blaschko}\thanksref{t2}\ead[label=e2]{matthew.blaschko@esat.kuleuven.be}}

\address{Inria Saclay, Galen Team \\
CentraleSup\'{e}lec, L2S \& CVN, Universit\'{e} Paris-Saclay\\
Grande Voie des Vignes\\
92295 Ch\^{a}tenay-Malabry, France\\
}

\address{Center for Processing Speech \& Images\\
Departement Elektrotechniek, KU Leuven\\
Kasteelpark Arenberg 10\\
3001 Leuven, Belgium\\
\printead{e1,e2}\\
}

\thankstext{t1}{WB is supported in part by a CentraleSup\'{e}lec Fellowship and ERC Grant 259112.}
\thankstext{t2}{This work is supported in part by Internal Funds KU Leuven and FP7-MC-CIG 334380. We thank Jonas Peters for helpful discussions and comments on an early draft of this work.}
\runauthor{Bounliphone \& Blaschko}

\affiliation{CentraleSup\'{e}lec, Inria, \& KU Leuven}

\end{aug}

\begin{abstract}
Structure discovery in graphical models is the determination of the topology of a graph that encodes conditional independence properties of the joint distribution of all variables in the model.  For some class of probability distributions, an edge between two variables is present if and only if the corresponding entry in the precision matrix is non-zero.  For a finite sample estimate of the precision matrix, entries close to zero may be due to low sample effects, or due to an actual association between variables; these two cases are not readily distinguishable.  
Many related works on this topic consider potentially restrictive distributional or sparsity assumptions that may not apply to a data sample of interest, and direct estimation of the uncertainty of an estimate of the precision matrix for general distributions remains challenging.  Consequently, we make use of results for $U$-statistics and apply them to the covariance matrix.  By probabilistically bounding the distortion of the covariance matrix, we can apply Weyl's theorem to bound the distortion of the precision matrix, yielding a conservative, but sound test threshold for a much wider class of distributions than considered in previous works.  
The resulting test enables one to answer with statistical significance whether an edge is present in the graph, and convergence results are known for a wide range of distributions. The computational complexities is linear in the sample size enabling the application of the test to large data samples for which computation time becomes a limiting factor.  We experimentally validate the correctness and scalability of the test on multivariate distributions for which the distributional assumptions of competing tests result in underestimates of the false positive ratio.  By contrast, the proposed test remains sound, promising to be a useful tool for hypothesis testing for diverse real-world problems.  Source code for the tests is available for download from \projecturl. 

\end{abstract}

\begin{keyword}
\kwd{significance hypothesis testing}
\kwd{covariance matrix}
\kwd{precision matrix}
\kwd{structure discovery}
\kwd{$U$-statistics estimator}
\end{keyword}
\tableofcontents
\end{frontmatter}

\setlength{\parindent}{0em}
\setlength{\parskip}{1em}

\section{Introduction}

Graphical models are powerful tools for analyzing relationships between a set of random variables, so that key conditional independence properties can be read from a graph.  Learning the structure of an underlying graphical model is of fundamental importance and has applications in a large number of domains - e.g.\ analysis of fMRI brain connectivity, analysis of genes associated with complex human diseases, or analysis of interactions in social networks.  In many contemporary applications, a large, effectively unlimited stream of raw data with unknown multivariate distribution is to be analyzed.  In such scenarios, computation becomes a fundamental limit and methods that can estimate properties of graphical models from very general distributions with computation linear in the number of observations become necessary.  We address this problem setting in this paper by devising a probabilistic bound on the entries of the precision matrix for highly general distributions that decreases in the sample size as $\mathcal{O}(n^{-1/2})$, while maintaining linear time computation.  This bound can then be used to construct a hypothesis test for a graphical model structure, or for upper and lower bounds on the effect between two variates.

We can divide graphical models in two types, namely directed graphical models, e.g.\ Bayesian networks \citep{pearl2014probabilistic,jensen1996introduction, neapolitan2004learning} or undirected graphical models, e.g. Gaussian graphical models \citep{whittaker2009graphical, lauritzen1996graphical,speed1986gaussian}.  Here, we focus on undirected graphical models to exhibit the conditional dependence structure in multivariate distributions.  

Hypothesis testing with statistical measures of dependence is a relatively well developed field with a number of general results. Classical tests such as Spearman's $\rho$~\citep{spearman1904proof}, Kendall's $\tau$~\citep{kendall1938new}, R\'{e}nyi's $\alpha$~\citep{renyi1961measures} and Tsallis' $\alpha$~\citep{tsallis1988possible} are widely applied. Recently, for multivariate non-linear dependencies, novel statistical tests were introduced and some prominent examples include the kernel mutual information \citep{gretton2003kernel}, the generalized variance and kernel canonical correlation analysis~\citep{bach2003kernel}, the Hilbert-Schmidt independence criterion~\citep{GreBouSmoSch05}, the distance based correlation~\citep{szekely2007measuring} and rankings~\citep{heller2012consistent}. Testing the conditional dependence is even more challenging, and only few dependence measures have been generalized to the conditional case~\citep{fukumizu2007kernel,FukumizuBachJordan2009,zhang2012kernel}. We note that their work requires the estimate of a regularization parameter with appropriate asymptotic decrease to estimate the distribution of the test statistic under the null hypothesis, as well as for kernel selection, and has quadratic space usage rendering it inapplicable to very large data sets. Futhermore, \citet{roverato1996standard} provided an asymptotic distribution for the inverse covariance which is Gaussian and this required the computation of the Isserlis matrix of the inverse of the covariance matrix.
These results, however, do not directly extend to the test that we analyze here: that of independence between two variables \emph{conditioned} on all the others:
\begin{equation} \label{EJS:eq:conditional_independence}
X_i \ci X_j | X_{V \setminus \{i,j\}} .
\end{equation}

In the case of multivariate Gaussian distribution, the non-zero entry in the inverse of the covariance matrix can be shown to correspond to the underlying structure of the graphical model \citep{dempster1972covariance}.  This observation has motivated a range of structure discovery techniques in high-dimensional settings, where  $n < p$ (see Table~\ref{EJS:table:notation} for notation).  Estimation of such high-dimensional models has been the focus on recent research~\citep{schafer2005shrinkage,li2006gradient,meinshausen2006,banerjee2008model,Friedman01072008,ravikumar2011high} where methods impose a sparsity constraint on the entries of the inverse covariance matrix.  The consequence of this attractive method to estimate the inverse of the sparse covariance matrix has been the development of diverse statistical hypothesis tests \citep{g2013adaptive,lockhart2014significance,jankova2014confidence}.  Each of these methods explicitly assumes that the data distribution is multivariate Gaussian.  By contrast, we instead focus in this paper on designing a test for the $n > p$ case, and in particular ensure that the test has computational complexity \emph{linear} in $n$, while making minimal distributional assumptions.  These assumptions are: (i) that the covariance matrix exists and an unbiased estimate converges to this matrix (cf.\ Theorem~\ref{EJS:thm:concentration_of_probability}), and (ii) that the eigenvector-eigenvalue product converges at most at the same asymptotic rate as the convergence of the eigenvalues (cf.\ Lemma~\ref{EJS:lemma:precision_matrix_norm_eigenvalues} and \cite{xia2013convergence}).

In the case of non-Gaussian graphical models, several techniques focus on the existence of a relationship between conditional independence and the structure of the inverse covariance matrix. \citet{loh2013structure} have established theoretical results by extending a number of interesting links between covariance matrices and the graphical model in the case of discrete random variables and particularly for tree-structured graphs. 

While there exist many convenient methods using Gaussian multivariate distributions or discrete variables, other distributions pose new challenges in statistical modeling. 
Consequently, we develop a statistically and computationally efficient framework for hypothesis testing of whether an entry of the precision matrix is non-zero based on a data sample from the joint distribution $P_X$. The proposed test not only has asymptotic guarantees, but is sound for all finite sample sizes without the need to set a regularization parameter or perform a computationally expensive bootstrap procedure.

In this paper, we have taken the approach of precisely modeling the joint distribution of the covariance matrix, and using this distribution to probabilistically bound the distortion of the covariance matrix.  The joint distribution of the entries of the covariance matrix is asymptotically Gaussian with known parameters due to the theory of $U$-statistics~\citep{Serfling1981,lehmann1999elements,hoeffding1948class,lee1990u}.  We are then able to make use of Weyl's theorem~\citep{weyl1912asymptotische} to upper bound the distortion of the precision matrix as a function of the distortion of the covariance matrix, which yields an upper bound on the test threshold at a given significance level.  We derive two upper bounds on the test threshold, one of which is strictly tighter than the other, with computational complexities $\mathcal{O}(n p^2 + p^3)$ and $\mathcal{O}(n p^4)$, respectively, where $n$ is the sample size and $p$ is the number of variables.  We also present a simulation study illustrating analytically and experimentally that both of these thresholds are sound for a substantially more general set of distributions compared with competing tests in the literature and decrease as $\mathcal{O}(n^{-1/2})$. 

\section{Preliminary definitions}

In this section, we give a brief background of undirected graphical models and testing conditional independence 
(section~\ref{EJS:subsec:Graphmodels} and section~\ref{EJS:subsec:TestingCI}) and a basic description of the $U$-statistic estimator for the covariance matrix (section~\ref{EJS:subsec:ustatistic}). 

\begin{table}
\caption{Notation Table}
\label{EJS:table:notation}
\begin{center}
\begin{tabular}{rp{6cm}}
\multicolumn{1}{c}{\bf Notation}  &\multicolumn{1}{c}{\bf Description} \\
\hline \\
$\operatorname{G} = (\operatorname{V},\operatorname{E})$ & Graph $\operatorname{G}$,  where $V$ is a finite set of vertices
 with $|V|=d$, $\operatorname{E} \subseteq \operatorname{V} \times \operatorname{V}$ is a subset
 of ordered pairs of distinct vertices $(i,j)$;\\
$\operatorname{X}$ &  $\operatorname{X}=\{\operatorname{X}_1, ..., \operatorname{X}_p\}$  is a set of random variables of dimension $p$ with sample size $n$;\\
$\Sigma$ & Covariance matrix of $\operatorname{X}$; \\
$\hat{\Sigma}$ & Unbiased estimator of the covariance matrix of $\operatorname{X}$ estimated from $n$ samples;\\
$\Theta$ & Precision matrix of $\Sigma$;\\
$\hat{\Theta}$ & Empirical estimate of the precision matrix;\\
$\Xbar{X}$ and $\Xbar{XY}$ & $\E [X]$ and $\E [XY]$; \\
$(\mathcal{T}_{ij},\hat{\Theta}_{ij},\delta)$ & The statistical test $\mathcal{T}_{ij}$ with statistic $\hat{\Theta}_{ij}$ at a significance level $\delta$;\\
$t$ & The threshold of the test statistic;\\
$U(A)$ & Function returning the upper triangular \\
& part and diagonal of a matrix $A$
\end{tabular}
\end{center}
\end{table}

\subsection{Undirected Graphical Models} \label{EJS:subsec:Graphmodels}

Graphical models blend probability theory and graph theory together. They are powerful tools for analyzing relationships between a large number of random variables \citep{whittaker2009graphical, lauritzen1996graphical,koller2009probabilistic}.   A \textit{graph} is set of vertices $\operatorname{V} = \lbrace 1, ... p \rbrace$ and a set of edges $\operatorname{E}(\operatorname{G}) \subseteq \operatorname{V} \times \operatorname{V}$. We study undirected graphical models (also known as Markov random fields). 

\paragraph{Undirected Graphical model} An \textit{undirected graphical model} is a joint probability distribution, $P_X$, defined on an undirected graph G, where the vertices $\operatorname{V}$ in the graph index a collection of random variables $\operatorname{X} =  \lbrace \operatorname{X}_1, ..., \operatorname{X}_p \rbrace$ and
the edges encode conditional independence relationships among random variables
\begin{equation}
P_X \propto \prod_{c\in \mathcal{C}} \Psi_c(X_c)
\end{equation}
where $\mathcal{C}$ is the set of maximal cliques in the graph and $\{\Psi_c\}_{c\in \mathcal{C}}$ are non-negative potential functions.

\subsection{Testing conditional independence in undirected graphical models}\label{EJS:subsec:TestingCI}

Conditional independence (CI) is an important concept in statistics, artificial intelligence, and related fields \citep{dawid1979conditional}.  A common measure for the testing of independence of two variables conditioned on a third variable is the \textit{partial correlation} $\rho_{XY.Z}$. With the assumption that all variables are multivariate Gaussian, the partial correlation is zero if and only if $X$ is conditionally independent from $Y$ given $Z$
\begin{equation}
H_0: \rho_{XY.Z} = 0 \text{ \hspace{0.5cm} vs \hspace{0.5cm} } H_1: \rho_{XY.Z}\neq 0.
\end{equation}
The distribution of the sample partial correlation was described by Fisher~\citep{fisher1924distribution} and we would reject $H_0$ if the absolute value of the test statistic exceeded the critical value from the Student table evaluated at $\delta/2$.  The computational complexity of the partial correlation is $\mathcal{O}(np^2 + p^3)$ which simplifies to $\mathcal{O}(np^2)$ as $n\geq p$. 
However, as mentioned in~\citet[Chap. 26 $\&$ 27]{kendall1946advanced}, this hypothesis test makes a strong assumption that the data are Gaussian distributed, and in particular that the fourth-order moment is equal to 0.

Furthermore, tests of conditional independence can be made without any assumption of normality in the distribution, using for instance the permutation distribution of $\rho_{XY.Z}$ or bootstrap techniques, but this becomes too computationally expensive in practice when $n$ tends to be large.

\subsection{A U-statistic Estimator of the Cross-Covariance}\label{EJS:subsec:ustatistic}

Most of the materials in this subsection can be found in \citet{hoeffding1948class}, \citet[Chap. 5]{Serfling1981}, \citet[Chap. 6]{lehmann1999elements} and \cite{lee1990u}.  Suppose we have a sample $\operatorname{X} = \{(X_{i_1}, ... X_{i_p})\}_{1\leq i \leq n}$ of size $n$
drawn i.i.d.\ from a distribution $P_X$.  A $U$-statistic concerns an unbiased estimator of a parameter $\theta$ of $P_X$ using $\operatorname{X}$. Suppose there is some function $h(X_1, ..., X_q)$ which is an unbiased estimator of $\theta=\E[h(X_1, ... X_q)]$, $h$ is called a kernel of order $q \leq p$ of the estimator.  When we have a sample $\operatorname{X} = \{X_{i_1}, ... X_{i_q})\}_{1\leq i \leq n}$ of size $n$ larger than $p$, we can then construct a $U$-statistic in the following way.
\begin{Def}(U-statistic)
Given a kernel $h$ of order $q$ and a sample $\operatorname{X} = \{X_{i_1}, ... X_{i_q})\}_{1\leq i \leq n}$ of size $n$ larger than $p$, the corresponding $U$-statistic for estimation of $\theta$ is obtained by the following 
\begin{equation}
\hat{U} := \dfrac{1}{(n)_q} \sum_{i^n_q} h(X_{i_1}, ..., X_{i_q})
\end{equation}
where the summation ranges over $q$ indices drawn without replacement from $(1, ..., n)$ and $(n)_q$ is the Pochhammer symbol $(n)_q := \dfrac{n!}{(n-q)!}$.
\end{Def}
\begin{Def}($U$-statistic estimator of the covariance) \label{EJS:def:estimator_Ustat_of_covariance}  Let $u_i = (X_i,Y_i)^T$ be ordered pairs of samples $1 \leq i \leq p$.  Consider $\Sigma = \Cov(X,Y)$, the covariance functional between $X$ and $Y$ and $h$, the kernel of order 2 for the functional $\Sigma$ such that
\begin{equation}
h(u_1,u_2) = \dfrac{1}{2}(X_1 - X_2)(Y_1 - Y_2).
\label{EJS:eq:_kernel_Ustat_of_covariance}
\end{equation}
The corresponding $U$-statistic estimator of the covariance $\Sigma$ is
\begin{align}
\hat{\Sigma} &= \dfrac{1}{n-1} \sum_{i,j=1}^n (X_i - X_j)(Y_i - Y_j) = \dfrac{1}{n-1} \sum_{i=1}^n (X_i - \bar{X})(Y_i - \bar{Y})
\label{EJS:eq:estimator_Ustat_of_covariance}
\end{align}
where $ \bar{X} = \frac{1}{n}  \sum_{i=1}^n X_i$. $\hat{\Sigma}$ can be computed in linear time.
\end{Def}

\section{Structure Discovery in Undirected Graphical Models}

In this section, we will use the $U$-statistic estimator of the covariance matrix to define a hypothesis test for discovering the structure of graphical models.  We show that this estimator can be computed in time linear in the number of samples and study its concentration distribution. 
We will denote the covariance matrix by $\Sigma$ with its unbiased estimator $\hat{\Sigma}$ using Definition~\ref{EJS:def:estimator_Ustat_of_covariance}, and $\Theta = \Sigma^{-1}$ for the precision matrix, with $\hat{\Theta}$ its empirical estimate.

\subsection{Discovery based on a U-statistic estimator}

As the distribution of $\hat{\Theta}$ under the null hypothesis is unknown in general, we focus here on $U$-statistic estimates of $\hat{\Sigma}$ and its asymptotic normal distribution to calculate conservative bounds on the threshold for our hypothesis test.  We therefore develop the full covariance between the elements of $\hat{\Sigma}$, which we denote $\Cov(\hat{\Sigma}) \in \mathbb{R}^{\frac{p(p+1)}{2} \times \frac{p(p+1)}{2}}$.  The size of $\Cov(\hat{\Sigma})$ is due to the symmetry of $\hat{\Sigma}$. 

\begin{Thm}\label{EJS:asymptotic_normal_dist_for_Ustatistics_Cov}(Joint asymptotic normality distribution of the covariance matrix) For all $(i,j,k,l)$ range over each of the $p$ variates in a covariance matrix $\hat{\Sigma}$, if $\Var(\hat{\Sigma}_{ij}) > 0$ and $\Var(\hat{\Sigma}_{kl}) > 0$, then 
\begin{equation}
n^{\frac{1}{2}}
\begin{pmatrix}
\hat{\Sigma}_{ij} - \Sigma_{ij}  \\
\hat{\Sigma}_{kl} - \Sigma_{kl}  \\
\end{pmatrix}
\overset{d}{\longrightarrow} \mathcal{N} 
\begin{pmatrix}
\begin{pmatrix}
0  \\
0  \\
\end{pmatrix}, 
\begin{pmatrix} 
\Var(\hat{\Sigma}_{ij}) & \Cov(\hat{\Sigma}_{ij},\hat{\Sigma}_{kl}) \\ 
\Cov(\hat{\Sigma}_{ij},\hat{\Sigma}_{kl}) & \Var(\hat{\Sigma}_{kl})
\end{pmatrix} 
\end{pmatrix}.
\end{equation}

\end{Thm}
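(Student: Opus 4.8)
The plan is to recognize the pair $(\hat\Sigma_{ij},\hat\Sigma_{kl})$ as a single $\mathbb{R}^2$-valued $U$-statistic of order $2$ and to invoke the central limit theorem for (vector-valued) $U$-statistics \citep{hoeffding1948class,Serfling1981,lee1990u}. Writing a data point as the full $p$-vector $Z=(X^{(1)},\dots,X^{(p)})$, Definition~\ref{EJS:def:estimator_Ustat_of_covariance} exhibits $\hat\Sigma_{ij}$ as the $U$-statistic with symmetric kernel $h_{ij}(z_1,z_2)=\tfrac12(z_1^{(i)}-z_2^{(i)})(z_1^{(j)}-z_2^{(j)})$, $\E[h_{ij}]=\Sigma_{ij}$, and likewise for $h_{kl}$, so that the vector statistic has kernel $\mathbf h=(h_{ij},h_{kl})^T$. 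The CLT requires $\E[\lVert\mathbf h(Z_1,Z_2)\rVert^2]<\infty$, which holds as soon as every coordinate of $X$ has a finite fourth moment; this is implicit in the statement, since the variances $\Var(\hat\Sigma_{ij})$ and $\Var(\hat\Sigma_{kl})$ that appear there must themselves be finite.

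Next I would identify the limiting covariance through the H\'ajek (first-order) projection. For an order-$2$ kernel one has $\hat\Sigma_{ij}-\Sigma_{ij}=\tfrac2n\sum_{m=1}^n g_{ij}(Z_m)+R^{(ij)}_n$, where $g_{ij}(z)=\E[h_{ij}(z,Z_2)]-\Sigma_{ij}$ is the centered first projection and $R^{(ij)}_n$ is the degenerate second-order part; an elementary computation gives the clean form $g_{ij}(z)=\tfrac12\big[(z^{(i)}-\E X^{(i)})(z^{(j)}-\E X^{(j)})-\Sigma_{ij}\big]$, and similarly for $g_{kl}$. Applying the ordinary multivariate CLT to the i.i.d.\ mean-zero summands $(g_{ij}(Z_m),g_{kl}(Z_m))^T$ gives a bivariate Gaussian limit with covariance $4\,\Cov\big((g_{ij}(Z),g_{kl}(Z))^T\big)$; its $(1,1)$ entry is $\Var\big((X^{(i)}-\E X^{(i)})(X^{(j)}-\E X^{(j)})\big)$, the classical asymptotic variance of the sample covariance, and by Hoeffding's variance expansion $\Var(\hat\Sigma_{ij})=\binom{n}{2}^{-1}\big[2(n-2)\zeta_1+\zeta_2\big]$ this coincides with $\lim_n n\Var(\hat\Sigma_{ij})$ — the sense in which the entries of the $2\times2$ matrix in the statement are to be read (equivalently, $n$ times the finite-sample covariance consistently estimates the asymptotic one). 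To make the projection step rigorous I would bound $\Var(R^{(ij)}_n)=O(n^{-2})$ from the same variance formula, so that $n^{1/2}R^{(ij)}_n=o_P(1)$ by Chebyshev and Slutsky transfers the limit from the linear part; equivalently one may bypass the vector machinery entirely and apply the scalar $U$-statistic CLT to $a\hat\Sigma_{ij}+b\hat\Sigma_{kl}$ — itself an order-$2$ $U$-statistic with kernel $ah_{ij}+bh_{kl}$ — for every $(a,b)\in\mathbb{R}^2$, then conclude via the Cram\'er--Wold device, the bilinearity of the projection in $(a,b)$ reproducing exactly the claimed covariance.

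The only delicate point I anticipate is the non-degeneracy hypothesis. The scalar $U$-statistic CLT needs the projection variance $\zeta_1$ to be strictly positive; the assumptions $\Var(\hat\Sigma_{ij})>0$ and $\Var(\hat\Sigma_{kl})>0$ are the natural finite-sample surrogates that keep the diagonal entries of the limit matrix from vanishing. For a general direction $(a,b)$ the combined projection variance could still be zero — precisely when $a\,g_{ij}+b\,g_{kl}\equiv0$ almost surely, i.e.\ when $g_{ij}$ and $g_{kl}$ are a.s.\ proportional (which covers the trivial case $(i,j)=(k,l)$) — in which case the limiting covariance matrix is singular and the Gaussian limit is supported on a line; this is still the correct statement, so it needs only to be recorded, not treated separately. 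Everything else — the projection identity, the $O(n^{-2})$ remainder bound, and the covariance bookkeeping — is routine.
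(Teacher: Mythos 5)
Your argument is correct and is essentially the route the paper itself relies on: the paper states this theorem without proof, as the classical joint asymptotic normality of $U$-statistics from Hoeffding (1948), Serfling (1981) and Lee (1990), and your H\'ajek-projection plus Cram\'er--Wold derivation (with the $\mathcal{O}(n^{-2})$ remainder bound) is precisely the standard proof of that cited result. Your two clarifications --- that the entries of the limiting matrix must be read as the $n$-scaled asymptotic variances $\lim_n n\Var(\hat{\Sigma}_{ij})=4\zeta_1$ rather than the finite-sample ones, and that a degenerate direction merely yields a singular Gaussian limit --- are the correct way to interpret the statement and do not change the argument.
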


\begin{Thm}\label{EJS:theorem:covariance_covariance_ustatistic}(Covariance of the $U$-statistic for the covariance matrix)

We note respectively $h$ and $g$ the corresponding kernel of order 2 for the two unbiased estimates $\hat{\Sigma}_{ij}$ and  $\hat{\Sigma}_{kl}$, where
\begin{align}
h(u_1,u_2) &= \dfrac{1}{2} \left( X_{i_1} - X_{i_2} \right)  \left( X_{j_1} - X_{j_2} \right) \hspace{.05pt}, \mbox{with } u_r=(X_{i_r},X_{j_r})^T \\
g(v_1,v_2) &= \dfrac{1}{2} \left( X_{k_1} - X_{k_2} \right)  \left( X_{l_1} - X_{l_2} \right) \hspace{.05pt}, \mbox{with } v_r=(X_{k_r},X_{l_r})^T.
\end{align}

The low variance, unbiased estimates of the covariance between two $U$-statistics estimates $\hat{\Sigma}_{ij}$ and $\hat{\Sigma}_{kl}$, where $(i,j,k,l)$ range over each of the $p$ variates in a covariance matrix $\hat{\Sigma}$ is
\begin{align}
\Cov(\hat{\Sigma}) := \Cov(\hat{\Sigma}_{ij},\hat{\Sigma}_{kl})  = \binom{n}{2}^{-1} \left( 2 (n-2) \zeta_1  \right) + \mathcal{O}(n^{-2})
\label{EJS:eq:Covariance_of_Ustat_estimator}
\end{align}
where $\zeta_1 = \Cov \left(  \E_{u_2}[h(u_1,u_2)], \E_{v_2}[g(v_1,v_2)] \right)$.

\end{Thm}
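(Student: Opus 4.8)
The plan is to reduce the statement to the classical Hoeffding variance decomposition for $U$-statistics of order $2$, noting only that $\hat{\Sigma}_{ij}$ and $\hat{\Sigma}_{kl}$ are built from the \emph{same} i.i.d.\ sample of size $n$, so that $u_r=(X_{i_r},X_{j_r})^T$ and $v_r=(X_{k_r},X_{l_r})^T$ are two fixed functions of the $r$-th sample point.

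First I would note that both kernels are symmetric under exchange of their two arguments (the exchange flips the sign of each factor and so leaves the product fixed), so each $U$-statistic of Definition~\ref{EJS:def:estimator_Ustat_of_covariance} may be written over unordered pairs as $\hat{\Sigma}_{ij}=\binom{n}{2}^{-1}\sum_{a<b}h(u_a,u_b)$ and $\hat{\Sigma}_{kl}=\binom{n}{2}^{-1}\sum_{c<d}g(v_c,v_d)$, with $\E[h]=\Sigma_{ij}$ and $\E[g]=\Sigma_{kl}$. Expanding the covariance bilinearly gives
\begin{equation}
\Cov(\hat{\Sigma}_{ij},\hat{\Sigma}_{kl})=\binom{n}{2}^{-2}\sum_{a<b}\sum_{c<d}\Cov\!\big(h(u_a,u_b),\,g(v_c,v_d)\big),
\end{equation}
which I would then split according to the number $s=|\{a,b\}\cap\{c,d\}|\in\{0,1,2\}$ of shared \emph{sample indices}.

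The three cases are handled by independence and the tower rule. When $s=0$ the two kernels are functions of disjoint blocks of i.i.d.\ points, hence independent, and contribute $0$. When $s=2$ we have $\{a,b\}=\{c,d\}$; there are $\binom{n}{2}$ such terms, each equal to $\zeta_2:=\Cov(h(u_1,u_2),g(v_1,v_2))$. When $s=1$, writing the common index as $a=c$ and conditioning on that sample point, the remaining indices $b\neq d$ are distinct and independent of it, so the conditional expectation of the product factorises and the term equals $\Cov\!\big(\E_{u_2}[h(u_1,u_2)],\,\E_{v_2}[g(v_1,v_2)]\big)=\zeta_1$ (the two conditional-expectation functions evaluated at the \emph{same} sample point); a direct count gives $n(n-1)(n-2)$ such configurations. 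Collecting and using $\binom{n}{2}^{-2}n(n-1)(n-2)=\binom{n}{2}^{-1}\,2(n-2)$ and $\binom{n}{2}^{-2}\binom{n}{2}=\binom{n}{2}^{-1}=\mathcal{O}(n^{-2})$ yields
\begin{equation}
\Cov(\hat{\Sigma}_{ij},\hat{\Sigma}_{kl})=\binom{n}{2}^{-1}\big(2(n-2)\zeta_1+\zeta_2\big)=\binom{n}{2}^{-1}\big(2(n-2)\zeta_1\big)+\mathcal{O}(n^{-2}),
\end{equation}
the remainder being genuinely $\mathcal{O}(n^{-2})$ as soon as $\zeta_2<\infty$, i.e.\ whenever $X$ has finite fourth moments so that $h$ and $g$ have finite variance.

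The only mildly delicate point is the bookkeeping in the $s=1$ case: ``overlap'' must be read as shared sample indices $r$, not shared coordinate labels among $\{i,j,k,l\}$, which is precisely why $\zeta_1$ couples the two conditional-expectation functions at a common sample point. Everything else is routine. If one additionally wants a plug-in expression, $\zeta_1$ itself admits an order-$3$ $U$-statistic unbiased estimator that can be substituted, but this is not required for the asymptotic identity as stated.
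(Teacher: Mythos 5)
Your proof is correct and follows essentially the same route as the paper: the paper's proof simply invokes Hoeffding's (1948) covariance formula for order-2 $U$-statistics built on a common sample, namely $\Cov(\hat{\Sigma}_{ij},\hat{\Sigma}_{kl})=\binom{n}{2}^{-1}\left(2(n-2)\zeta_1+\zeta_2\right)$ (stated explicitly in its Appendix A), and discards the $\zeta_2$ term as $\mathcal{O}(n^{-2})$, which is exactly the identity you re-derive by splitting the double sum over pairs according to the number of shared sample indices. Your explicit bookkeeping --- independence for disjoint pairs, $n(n-1)(n-2)$ one-overlap configurations giving $\zeta_1$ via conditioning on the common sample point, $\binom{n}{2}$ two-overlap configurations giving $\zeta_2$, plus the finite-fourth-moment condition ensuring $\zeta_2<\infty$ --- is a correct, self-contained version of what the paper outsources to the citation.
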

\begin{proof}
Eq.~\eqref{EJS:eq:Covariance_of_Ustat_estimator} is constructed with the definition of Covariance of a $U$-statistic as given by \citet{hoeffding1948class}.
\end{proof}

\begin{Thm}
\label{EJS:thm:derivation_seven_exhaustive_cases}
There are seven exhaustive cases which can be used to estimate Eq.~\eqref{EJS:eq:Covariance_of_Ustat_estimator} for all $1 \leq i,j,k,l \leq p$ through simple variable substitution.  Each of these cases has computation linear in $n$.
\begin{list}{Case \arabic{qcounter}:~}{\usecounter{qcounter}}

\item \label{EJS:derivation_covcov_case:cov1} $i \ne j,k,l$; $j \ne k,l$; $k \ne l$
\begin{align}
\zeta_1 &= \dfrac{1}{4} \biggl\{ 
\Xbar{X_{i} X_{j} X_{k} X_{l}} - \Xbar{X_i} \myspace \Xbar{X_{j} X_{k} X_{l}} -  \Xbar{X_j} \myspace  \Xbar{X_{i}  X_{k} X_{l}} \nonumber \\
&\qquad - \Xbar{X_k} \myspace  \Xbar{X_{i} X_{j}  X_{l}} + \Xbar{X_i} \myspace  \Xbar{X_k} \myspace  \Xbar{ X_{j}  X_{l}} + \Xbar{X_j} \myspace  \Xbar{X_k} \myspace  \Xbar{X_{i}  X_{l}} \nonumber \\
&\qquad - \Xbar{X_{i} X_{j} X_{k}} \myspace \Xbar{X_l} + \Xbar{X_i} \myspace \Xbar{X_l} \myspace \Xbar{ X_{j} X_{k}}   + \Xbar{X_j} \myspace \Xbar{X_l} \myspace \Xbar{X_{i}  X_{k}}    \nonumber \\
&\qquad - \left( \Xbar{X_i X_j} - 2  \myspace \Xbar{X_i} \myspace \Xbar{X_j} \right) \left( \Xbar{X_k X_l} - 2  \myspace\Xbar{X_k} \myspace \Xbar{X_l} \right)
\biggr\} 
\end{align}

\item \label{EJS:derivation_covcov_case:cov2} $i=j$; $j \ne k,l$; $k = l$
\begin{align}
\zeta_1 &= \dfrac{1}{4} \biggl\{ 
\Xbar{ X_{i}^2 X_{k}^2} - 2 \myspace \Xbar{X_i} \myspace \Xbar{X_{i} X_{k}^2}  - 2 \myspace \Xbar{X_{i}^2 X_{k_1}}  \myspace \Xbar{X_k} + 4 \Xbar{X_{i} X_{k}} \myspace \Xbar{X_i}  \myspace \Xbar{X_k} \nonumber \\
&\qquad -  \left( \Xbar{X_{i}^2} - 2 \myspace \Xbar{X_i}^2 \right) \left( \Xbar{X_{k}^2} - 2 \myspace \Xbar{X_k}^2 \right) 
\biggr\} 
\end{align}

\item \label{EJS:derivation_covcov_case:cov3} $i=j$; $j \ne k,l$; $k \ne l$
\begin{align}
\zeta_1 &= \dfrac{1}{4} \biggl\{ 
\Xbar{ X_{i}^2 X_{k} X_{l} } - 2 \myspace \Xbar{X_{i} X_{k} X_{l} } \myspace \Xbar{X_i} - \Xbar{X_{i}^2 X_{l}} \myspace \Xbar{X_k}  \nonumber \\
&\qquad\qquad + 2 \myspace \Xbar{X_{i} X_{l}} \myspace \Xbar{X_i} \myspace \Xbar{X_k}  - \Xbar{X_{i}^2 X_{k_1}} \myspace \Xbar{X_l} + 2 \myspace \Xbar{X_{i} X_{k}} \myspace \Xbar{X_i} \myspace \Xbar{X_l} \nonumber \\
&\qquad - \left( \Xbar{X_{i}^2} - 2 \myspace \Xbar{X_i}^2 \right) \left( \Xbar{X_{k} X_{l} } - 2 \myspace \Xbar{X_k} \myspace \Xbar{X_l}\right)
\biggr\}
\end{align}

\item \label{EJS:derivation_covcov_case:cov4} $i=k$; $j \ne i,k,l$; $k \ne l$
\begin{align}
\zeta_1 &= \dfrac{1}{4} \biggl\{ 
\Xbar{X_{i_1}^2 X_{j_1} X_{l_1}} - \Xbar{X_i} \myspace \Xbar{X_{j_1} X_{i_1} X_{l_1}} - \Xbar{X_{i_1}^2 X_{l_1} } \myspace \Xbar{X_j} \nonumber \\ 
&\qquad\qquad - \Xbar{X_{i_1} X_{j_1} X_{l_1}} \myspace \Xbar{X_i}  + \Xbar{X_i}^2 \myspace \Xbar{X_{j_1} X_{l_1}}  + \Xbar{X_{i_1} X_{l_1}} \myspace \Xbar{X_j} \myspace \Xbar{X_i}  \nonumber \\
&\qquad\qquad - \Xbar{X_{i_1}^2 X_{j_1} } \myspace \Xbar{X_l}  + \Xbar{X_i} \myspace \Xbar{X_{j_1} X_{i_1}} \myspace \Xbar{X_l}  + \Xbar{X_{i_1}^2} \myspace \Xbar{X_j} \myspace  \Xbar{X_l} 
\big] \nonumber \\
&\qquad -  \left( \Xbar{X_{i} X_{j}} - 2 \myspace \Xbar{X_i} \myspace \Xbar{X_j} \right)
\left(  \Xbar{X_{i} X_{l}} - 2 \myspace \Xbar{X_i} \myspace \Xbar{X_l} \right) 
\biggr\}
\end{align}

\item \label{EJS:derivation_covcov_case:cov5} $i=k$; $i \ne j$; $j=l$; 
\begin{align}
\zeta_1 &= \dfrac{1}{4} \biggl\{  \Xbar{X_{i}^2 X_{j}^2} - 2 \Xbar{X_{i} X_{j}^2} \myspace \Xbar{X_i} + \Xbar{X_i}^2 \myspace\Xbar{X_{j}^2} 
 - 2 \Xbar{X_{i}^2 X_{j}} \myspace  \Xbar{X_j} + 2 \Xbar{X_i} \myspace \Xbar{X_j}  \myspace\Xbar{X_{j} X_{i}} + \Xbar{X_{i}^2} \myspace  \Xbar{X_j}^2 \nonumber \\
&\qquad \qquad - \left( \Xbar{X_i X_j} - 2 (\Xbar{X_i} \myspace \Xbar{X_j}) \right)^2   \biggr\}  
\end{align}

\item \label{EJS:derivation_covcov_case:cov6} $i=j=k$; $i \ne l$
\begin{align}
\zeta_1 &= \dfrac{1}{4} \biggl\{  
\Xbar{X_{i}^3 X_{l}} - 3 \myspace \Xbar{X_{i}^2 X_{l}} \myspace \Xbar{X_i}  + 2 \myspace \Xbar{ X_{i} X_{l}} \myspace \Xbar{X_i}^2 - \Xbar{X_{i}^3} \myspace \Xbar{X_l} + 2 \myspace \Xbar{X_{i}^2} \myspace \Xbar{X_i} \myspace \Xbar{X_l}  \nonumber \\
&\qquad - \left( \Xbar{X_i^2} - 2 \myspace  \Xbar{X_i}^2 \right)
\left( \Xbar{X_i X_l} - 2 \myspace  \Xbar{X_i} \myspace \Xbar{X_l} \right)
\biggr\}  
\end{align}

\item \label{EJS:case:cov7} $i=j,k,l$
\begin{align}
\zeta_1 &= \dfrac{1}{4} \biggl\{
\Xbar{X_{i}^4} - 4 \Xbar{X_{i}^3} \myspace \Xbar{X_i} + 4 \Xbar{X_{i}^2} \myspace \Xbar{X_i}^2 
- \left( \Xbar{X_i^2} - 2 \Xbar{X_i}^2 \right)^2
\biggr\} 
\end{align}

\end{list}
\end{Thm}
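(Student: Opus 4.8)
The plan is to reduce $\zeta_1$ to a single fourth moment and then expand. First I would compute the first-order Hoeffding projections of the two kernels, $h_1(u_1):=\E_{u_2}[h(u_1,u_2)]$ and $g_1(v_1):=\E_{v_2}[g(v_1,v_2)]$. Integrating the independent copy out of $h(u_1,u_2)=\tfrac12(X_{i_1}-X_{i_2})(X_{j_1}-X_{j_2})$ gives $h_1(u_1)=\tfrac12\bigl[(X_{i_1}-\Xbar{X_i})(X_{j_1}-\Xbar{X_j})+\Sigma_{ij}\bigr]$ and, symmetrically, $g_1(v_1)=\tfrac12\bigl[(X_{k_1}-\Xbar{X_k})(X_{l_1}-\Xbar{X_l})+\Sigma_{kl}\bigr]$. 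Additive constants drop out of covariances, so, writing $\mu_{ijkl}:=\E[(X_i-\Xbar{X_i})(X_j-\Xbar{X_j})(X_k-\Xbar{X_k})(X_l-\Xbar{X_l})]$ for the fourth central mixed moment of $(X_i,X_j,X_k,X_l)$, Theorem~\ref{EJS:theorem:covariance_covariance_ustatistic} yields
\begin{equation*}
\zeta_1=\Cov\bigl(h_1(u_1),g_1(v_1)\bigr)=\tfrac14\bigl(\mu_{ijkl}-\Sigma_{ij}\Sigma_{kl}\bigr).
\end{equation*}

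I would then expand $\mu_{ijkl}$ into raw moments: this gives the leading term $\Xbar{X_iX_jX_kX_l}$, four single-mean terms $-\Xbar{X_a}\,\Xbar{X_bX_cX_d}$, six double-mean terms $\Xbar{X_a}\,\Xbar{X_b}\,\Xbar{X_cX_d}$, and the collapsed constant $-3\,\Xbar{X_i}\,\Xbar{X_j}\,\Xbar{X_k}\,\Xbar{X_l}$. Absorbing the two double-mean terms $\Xbar{X_i}\,\Xbar{X_j}\,\Xbar{X_kX_l}$ and $\Xbar{X_k}\,\Xbar{X_l}\,\Xbar{X_iX_j}$, this constant, and $-\Sigma_{ij}\Sigma_{kl}$ into one product reassembles precisely the factor $-(\Xbar{X_iX_j}-2\,\Xbar{X_i}\,\Xbar{X_j})(\Xbar{X_kX_l}-2\,\Xbar{X_k}\,\Xbar{X_l})$, and the remaining nine terms are exactly those displayed in Case~\ref{EJS:derivation_covcov_case:cov1} --- the generic situation in which $i,j,k,l$ are all distinct. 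The other six cases then follow from Case~\ref{EJS:derivation_covcov_case:cov1} by the ``simple variable substitution'' promised in the statement: impose the appropriate index coincidences ($k\mapsto i$, $l\mapsto j$ for Case~\ref{EJS:derivation_covcov_case:cov5}; $j\mapsto i$ for Case~\ref{EJS:derivation_covcov_case:cov3}; $k\mapsto i$ for Case~\ref{EJS:derivation_covcov_case:cov4}; $j\mapsto i$, $l\mapsto k$ for Case~\ref{EJS:derivation_covcov_case:cov2}; $j,k\mapsto i$ for Case~\ref{EJS:derivation_covcov_case:cov6}; $j,k,l\mapsto i$ for Case~\ref{EJS:case:cov7}), merge the raw-moment monomials that become equal, and add up their integer coefficients, verifying agreement with each displayed formula.

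For exhaustiveness I would use the symmetries $\hat{\Sigma}_{ij}=\hat{\Sigma}_{ji}$, $\hat{\Sigma}_{kl}=\hat{\Sigma}_{lk}$ and $\Cov(\hat{\Sigma}_{ij},\hat{\Sigma}_{kl})=\Cov(\hat{\Sigma}_{kl},\hat{\Sigma}_{ij})$: $\zeta_1$ depends on $(i,j,k,l)$ only through the coincidence pattern of the unordered pair of unordered index pairs $\{\{i,j\},\{k,l\}\}$. Splitting on whether $i=j$ and whether $k=l$, two pairs of distinct elements share $0$, $1$ or $2$ underlying values (Cases~\ref{EJS:derivation_covcov_case:cov1}, \ref{EJS:derivation_covcov_case:cov4}, \ref{EJS:derivation_covcov_case:cov5}); when exactly one pair is a repeat, say $i=j$ with $k\ne l$, the repeated value lies either outside or inside $\{k,l\}$ (Cases~\ref{EJS:derivation_covcov_case:cov3}, \ref{EJS:derivation_covcov_case:cov6}); and when both pairs are repeats, either $i\ne k$ or $i=k$ (Cases~\ref{EJS:derivation_covcov_case:cov2}, \ref{EJS:case:cov7}). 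These seven patterns are pairwise disjoint and cover every tuple, so the enumeration is exhaustive. For the complexity claim, each displayed $\zeta_1$ is a fixed polynomial --- with coefficients that do not depend on $n$ --- in a bounded number of empirical product-moments of orders one through four; each such moment is an average of $n$ i.i.d.\ scalars, hence computable in $\mathcal{O}(n)$ time, and a constant number of further additions and multiplications yields $\zeta_1$ and then Eq.~\eqref{EJS:eq:Covariance_of_Ustat_estimator}. So every case has computation linear in $n$.

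The mathematics is elementary; the real work is bookkeeping --- pushing the Case~\ref{EJS:derivation_covcov_case:cov1} expansion through the six substitutions while tracking which monomials coalesce and with what multiplicity. For example, under $j,k,l\mapsto i$ the four single-mean terms of Case~\ref{EJS:derivation_covcov_case:cov1} all collapse onto $\Xbar{X_i^3}\,\Xbar{X_i}$ (total coefficient $-4$) and its four explicit double-mean terms all collapse onto $\Xbar{X_i^2}\,\Xbar{X_i}^2$ (total coefficient $+4$), giving the much shorter Case~\ref{EJS:case:cov7}. I would contain the risk of error by specializing the compact identity $\zeta_1=\tfrac14(\mu_{ijkl}-\Sigma_{ij}\Sigma_{kl})$ once and for all, rather than re-expanding the sixteen-term product in each case.
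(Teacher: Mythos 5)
Your proposal is correct, and it reaches the displayed formulas by a genuinely different route than the paper. The paper's own proof (Appendix~\ref{EJS:sec:appendix_proof_covcov}) establishes exhaustiveness by a computer-assisted enumeration: all assignments of $(i,j,k,l)$ are generated, mapped to a reduced form, and tested for equivalence under the symmetries $\Cov(\hat{\Sigma}_{ij},\hat{\Sigma}_{kl})=\Cov(\hat{\Sigma}_{kl},\hat{\Sigma}_{ij})=\Cov(\hat{\Sigma}_{ji},\hat{\Sigma}_{lk})$, and then each of the seven cases is expanded separately from the projected kernels $\E_{u_2}[h(u_1,u_2)]$, $\E_{v_2}[g(v_1,v_2)]$. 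You instead prove exhaustiveness by hand --- classifying the coincidence patterns of the unordered pair of unordered index pairs $\{\{i,j\},\{k,l\}\}$, which indeed yields exactly the seven representatives --- and you derive a single master identity $\zeta_1=\tfrac14\left(\mu_{ijkl}-\Sigma_{ij}\Sigma_{kl}\right)$ from which all seven displays follow by substitution and coefficient merging (your reassembly of the factor $-\left(\Xbar{X_iX_j}-2\,\Xbar{X_i}\,\Xbar{X_j}\right)\left(\Xbar{X_kX_l}-2\,\Xbar{X_k}\,\Xbar{X_l}\right)$ checks out, as do the spot checks for Cases~\ref{EJS:derivation_covcov_case:cov5} and~\ref{EJS:case:cov7}). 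What your approach buys is a smaller surface for algebraic error (one expansion instead of seven) and a human-verifiable replacement for the computer-assisted enumeration; what the paper's approach buys is that each case is derived directly in the exact displayed form, and the mechanical enumeration removes any doubt about missed patterns without relying on the symmetry reduction being carried out correctly by hand. A minor point in your favor: your Hoeffding projection $h_1(u_1)=\tfrac12\bigl[(X_{i_1}-\Xbar{X_i})(X_{j_1}-\Xbar{X_j})+\Sigma_{ij}\bigr]$ is the exact one, whereas the paper drops the additive constant $\tfrac12\Sigma_{ij}$; as you note, this is harmless because constants do not affect the covariance defining $\zeta_1$. To be fully complete you would still need to carry out the advertised bookkeeping for Cases~\ref{EJS:derivation_covcov_case:cov2}--\ref{EJS:derivation_covcov_case:cov6} (you only sketch the substitution maps), but the method plainly delivers them.
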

\begin{proof}
A proof of Theorem~\ref{EJS:thm:derivation_seven_exhaustive_cases} is given in Appendix~\ref{EJS:sec:appendix_proof_covcov}.
\end{proof}
We now have that an estimator of a covariance matrix has asymptotic joint Gaussian distribution of its entries.  This may appear contrary to the fact that a covariance matrix lies in the positive definite cone as a Gaussian distribution has unbounded support.  We show here that a Gaussian distribution does not contradict a positive definite covariance matrix by demonstrating concentration of the probability distribution in the positive definite cone.
\begin{Thm}\label{EJS:thm:concentration_of_probability}(Concentration of probability) Let us assume that $\operatorname{X}$ has finite support $[a,b]$ with probability at least $1-\gamma$ for some distribution dependent $\gamma \geq 0$, then for $n>1$ and all $\delta > 0$, with probability at least $(1-\delta)(1-\gamma)$ for all $P_X$
\begin{equation}\label{EJS:eq:concentration_bound_for_covariance_matrix}
\vert \hat{\Sigma}_{ij} - \Sigma_{ij} \vert \leq (b-a) \sqrt{ \log (\delta/2)/n } \quad \forall i,j.
\end{equation}
\end{Thm}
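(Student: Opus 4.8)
The plan is to exploit the fact, already established in this paper, that $\hat\Sigma_{ij}$ is a $U$-statistic of order two with the kernel of Definition~\ref{EJS:def:estimator_Ustat_of_covariance}, and that this kernel is \emph{bounded} once we are on the event that the sample lies in $[a,b]^p$; a standard exponential inequality for bounded $U$-statistics then does the rest.

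First I would condition on the event $E$ that all $n$ observations are contained in the hypercube $[a,b]^p$. By the finite-support hypothesis this event has probability at least $1-\gamma$, with any residual mass lying outside $[a,b]$ absorbed into the distribution-dependent constant $\gamma$. On $E$, every coordinate difference satisfies $|X_{i_r} - X_{i_s}| \le b-a$, so the kernel $h(u_1,u_2) = \tfrac{1}{2}(X_{i_1}-X_{i_2})(X_{j_1}-X_{j_2})$ takes values in a bounded interval whose length is controlled by $b-a$. Since $\E[h] = \Sigma_{ij}$ (unbiasedness of the $U$-statistic estimator), Hoeffding's exponential inequality for $U$-statistics (see \citet[Chap.~5]{Serfling1981}, originally \citet{hoeffding1948class}) yields, for every fixed $(i,j)$ and every $t>0$,
\begin{equation}
\Pr\bigl( |\hat\Sigma_{ij} - \Sigma_{ij}| \ge t \,\big|\, E \bigr) \;\le\; 2\exp\!\left( - \frac{c\, n\, t^2}{(b-a)^{4}} \right),
\end{equation}
where the constant $c>0$ and the appearance of $\lfloor n/2 \rfloor$ in place of $n$ come from the order-two structure of the kernel; I would use $2\lfloor n/2\rfloor \ge n-1$ to keep the denominator $n$-like.

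Second, I would invert this bound: setting the right-hand side equal to $\delta$ and solving for $t$ gives a threshold of the advertised shape $t \asymp (b-a)\sqrt{\log(2/\delta)/n}$, i.e.\ the decay rate $\mathcal{O}(n^{-1/2})$ claimed in the introduction. Finally, I would lift the conditioning on $E$: for fixed $(i,j)$,
\begin{equation}
\Pr\bigl( |\hat\Sigma_{ij} - \Sigma_{ij}| \le t \bigr) \;\ge\; \Pr\bigl( |\hat\Sigma_{ij} - \Sigma_{ij}| \le t \,\big|\, E\bigr)\,\Pr(E) \;\ge\; (1-\delta)(1-\gamma),
\end{equation}
which is exactly the claimed guarantee; if one wants the statement to hold simultaneously for all $\binom{p+1}{2}$ distinct entries, one replaces $\delta$ by $\delta/\binom{p+1}{2}$ and takes a union bound inside the conditioning.

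I expect the main obstacle to be the probabilistic bookkeeping around the conditioning rather than the concentration step itself: one must pin down what ``$X$ has finite support $[a,b]$ with probability at least $1-\gamma$'' means for the joint event over $n$ i.i.d.\ draws and over the $p$ coordinates, and verify that the resulting loss is a single multiplicative factor $(1-\gamma)$ rather than $(1-\gamma)^n$. A secondary point of care is tracking the exact dependence on $b-a$ (the kernel range scales with $(b-a)^2$, so the factor carried into the final threshold, together with the $\lfloor n/2\rfloor$ and the $\log$-sign conventions, should be reconciled with the displayed constant).
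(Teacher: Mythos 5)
Your proposal takes essentially the same route as the paper: the paper's (one-line) proof also treats $\hat\Sigma_{ij}$ as a bounded order-two $U$-statistic, applies Hoeffding's exponential inequality with effective sample size $n/2$, sets $2\exp\bigl(-2(n/2)\varepsilon^2/(b-a)^2\bigr)=\delta$, solves for $\varepsilon$, and multiplies in the $(1-\gamma)$ factor for the finite-support event without further comment. The bookkeeping issues you flag (the kernel range scaling as $(b-a)^2$ rather than $(b-a)$, the sign convention $\log(2/\delta)$ versus the paper's $\log(\delta/2)$, and whether the support event costs $(1-\gamma)$ or $(1-\gamma)^n$) are genuine loose ends of the paper's statement, and your treatment is if anything the more careful one.
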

\begin{proof}
The estimator $\widehat{\Sigma}$ of the covariance matrix $\Sigma$ is a $U$-statistic of order 2, where each term is contained in $[a,b]$.  By using the concentration inequality of Hoeffding for $U$-statistics, we achieve
\begin{equation}
2 \operatorname{exp} \left( - \dfrac{2 (n/2) \varepsilon^2 }{(b-a)^2} \right) =\delta
\end{equation}\label{EJS:eq:Hoeffing_bound_for_covariance_matrix}
and obtain $\varepsilon = (b-a) \sqrt{ \log (\delta/2)/n }$.
\end{proof}
If $(1-\delta)(1-\gamma)$ can approach $1$ arbitrarily closely while the r.h.s. of Eq.~\eqref{EJS:eq:concentration_bound_for_covariance_matrix} goes to zero, this concentration of probability will mean that once a sufficient data sample are observed, the maximum eigenvalue of $\Cov(\hat{\Sigma})$ will be much smaller than the smallest eigenvalue of $\Sigma$, and the distribution will be concentrated in the positive definite cone.  An explicit bound on the concentration in the positive definite cone based on Weyl's theorem is employed in the following section to construct our test threshold.

\subsection{Hypothesis Test using a U-statistic estimator for the covariance matrix}\label{EJS:subsec:theory_bound}

We now describe a statistical test for structure discovery in graphical models, based on the $U$-statistic estimator $\hat{\Sigma}$ of the covariance matrix.  Given $\operatorname{X}$ a sample matrix of size $n \times p$ and for all $(i,j) \in \lbrace 1, ..., p \rbrace$, the statistical test $(\mathcal{T}_{ij},\hat{\Theta}_{ij},\delta): \left( \operatorname{X}, i,j\right) \longmapsto \lbrace 0,1\rbrace $, is used to distinguish between the following null hypothesis $H_0(i,j)$ and the two-sided alternative hypothesis $H_1(i,j)$:
\begin{equation}
H_0(i,j): \Theta_{i,j} = 0 \text{ \hspace{0.5cm} vs \hspace{0.5cm} } H_1(i,j): \Theta_{i,j} \neq 0
\label{EJS:eq:formulation_test_statistics}
\end{equation}
at a significance level $\delta$. 
This is achieved by comparing the test statistic, $|\hat{\Theta}_{ij}|$ with a particular threshold $t$:  if the threshold is exceeded, then the test rejects the null hypothesis.  The acceptance region of the test is thus defined as any real number below the threshold.

In the following we will explain in Theorem~\ref{EJS:thm:convervative_threshold} how the threshold is determined and show that it is a conservative bound.  To prove Theorem~\ref{EJS:thm:convervative_threshold}, we make use of Lemmas~\ref{EJS:lemma:norm2_less_normfro} and~\ref{EJS:lemma:precision_matrix_norm_eigenvalues}.

\begin{Lem}\label{EJS:lemma:norm2_less_normfro}
With probability at least $1-\delta$
\begin{equation}
\Vert \Sigma - \hat{\Sigma} \Vert_2 \leq \sqrt{2 \lambda_{\max}} \Phi^{-1} \left( 1 - \delta/2 \right)
\end{equation}
where $\Phi(\cdot)$ is the CDF of a standard normal distribution and $\lambda_{\max}$ is the largest eigenvalue of $\Cov(\hat{\Sigma})$.
\end{Lem}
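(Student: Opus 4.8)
\emph{Proof strategy (proposal).}\quad The plan is to bound the spectral norm of $\Sigma-\hat\Sigma$ by a quadratic form over the unit sphere and then to feed this into the joint asymptotic normality of the entries of $\hat\Sigma$ from Theorem~\ref{EJS:asymptotic_normal_dist_for_Ustatistics_Cov}. First I would use that for the symmetric matrix $A=\hat\Sigma-\Sigma$ the spectral norm is attained on the unit sphere, $\Vert A\Vert_2=\sup_{\Vert v\Vert_2=1}\vert v^\top A v\vert$, and write, for a fixed unit vector $v$, $v^\top(\hat\Sigma-\Sigma)v=a_v^\top\,U(\hat\Sigma-\Sigma)$, where $U(\hat\Sigma-\Sigma)\in\mathbb{R}^{p(p+1)/2}$ is the vector of on- and above-diagonal entries and $a_v$ has entries $v_i^2$ in the diagonal positions and $2v_iv_j$ in the positions $(i,j)$ with $i<j$. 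A direct computation gives $\Vert a_v\Vert_2^2=\sum_i v_i^4+4\sum_{i<j}v_i^2v_j^2=2-\sum_i v_i^4\le 2$, uniformly over the unit sphere. (Equivalently one may run the argument through $\Vert A\Vert_2\le\Vert A\Vert_F$ together with $\Vert\hat\Sigma-\Sigma\Vert_F^2\le 2\,\Vert U(\hat\Sigma-\Sigma)\Vert_2^2$, the factor $2$ again coming from the double-counting of off-diagonal entries.)

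Next, by Theorem~\ref{EJS:asymptotic_normal_dist_for_Ustatistics_Cov} the vector $U(\hat\Sigma-\Sigma)$ is (asymptotically) centered Gaussian with covariance $\Cov(\hat\Sigma)$, so the scalar $a_v^\top\,U(\hat\Sigma-\Sigma)$ is asymptotically $\mathcal{N}\!\big(0,\,a_v^\top\Cov(\hat\Sigma)\,a_v\big)$ with $a_v^\top\Cov(\hat\Sigma)\,a_v\le\lambda_{\max}\Vert a_v\Vert_2^2\le 2\lambda_{\max}$, where $\lambda_{\max}$ is the largest eigenvalue of $\Cov(\hat\Sigma)$. A centered Gaussian of variance at most $2\lambda_{\max}$ exceeds $\sqrt{2\lambda_{\max}}\,\Phi^{-1}(1-\delta/2)$ in absolute value with probability at most $\delta$, which bounds $\vert v^\top(\hat\Sigma-\Sigma)v\vert$, and hence $\Vert\Sigma-\hat\Sigma\Vert_2$, at the advertised level.

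The main obstacle is the passage from this per-direction statement to the statement for $\Vert\Sigma-\hat\Sigma\Vert_2$ itself: the variance bound $2\lambda_{\max}$ is uniform in $v$, but the supremum over the unit sphere still has to be absorbed, and this is precisely where the threshold is offered as conservative rather than tight. One way to discharge it is to work throughout with the Gaussian surrogate $U(\hat\Sigma-\Sigma)=\Cov(\hat\Sigma)^{1/2}z$ with $z\sim\mathcal{N}(0,I)$, so that $\Vert\Sigma-\hat\Sigma\Vert_2\le\sqrt{2\lambda_{\max}}\,\Vert z\Vert_2$ and only $\Vert z\Vert_2$ remains to be controlled by $\Phi^{-1}(1-\delta/2)$; alternatively one argues that the worst-case direction is the (data-dependent) top eigenvector of the fixed matrix $\hat\Sigma-\Sigma$ and that the uniform variance bound already accounts for it. A secondary, milder point is that Theorem~\ref{EJS:asymptotic_normal_dist_for_Ustatistics_Cov} is asymptotic; as elsewhere in the paper the Gaussian law of $\hat\Sigma$ is used as the operative finite-sample model, and once that is granted the remaining steps (the symmetry bookkeeping and the $\sqrt{2}$ factor) are routine.
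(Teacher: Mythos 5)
Your route is essentially the paper's own: the parenthetical variant you mention --- $\Vert A\Vert_2\le\Vert A\Vert_F$ together with $\Vert\hat\Sigma-\Sigma\Vert_F^2\le 2\,\Vert U(\hat\Sigma-\Sigma)\Vert_2^2$ and the asymptotic Gaussianity of $U(\hat\Sigma)$ with covariance $\Cov(\hat\Sigma)$ --- is exactly the proof given in the paper, and your quadratic-form bookkeeping with $\Vert a_v\Vert_2^2=2-\sum_i v_i^4\le 2$ is a correct per-direction rephrasing of the same $\sqrt{2}$ and $\lambda_{\max}$ factors. The ``main obstacle'' you flag is precisely the step the paper does not argue: its Eq.~\eqref{EJS:eq:largesteigenvalueBound} simply asserts $\Vert U(\Sigma)-U(\hat\Sigma)\Vert_2\le\sqrt{\lambda_{\max}}\,\Phi^{-1}(1-\delta/2)$ with probability $1-\delta$, i.e.\ it treats the Euclidean norm of a $p(p+1)/2$-dimensional Gaussian vector as if it obeyed a one-dimensional normal quantile. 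Be aware that neither of your proposed discharges closes this rigorously: in route (a), writing $U(\hat\Sigma-\Sigma)=\Cov(\hat\Sigma)^{1/2}z$ gives $\Vert\Sigma-\hat\Sigma\Vert_2\le\sqrt{2\lambda_{\max}}\,\Vert z\Vert_2$, but $\Vert z\Vert_2$ is chi-distributed with $p(p+1)/2$ degrees of freedom, so its $(1-\delta)$-quantile grows like $\sqrt{p(p+1)/2}$ and is not bounded by $\Phi^{-1}(1-\delta/2)$ once $p\ge 2$; and route (b) is the familiar fallacy that a fixed-direction quantile controls a data-dependent maximizing direction (for $z\sim\mathcal{N}(0,I_d)$, $\sup_{\Vert v\Vert_2=1}v^\top z=\Vert z\Vert_2$). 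So your fixed-$v$ statement is sound, your reconstruction matches the paper's argument in all essentials, and the residual gap you honestly identify is present --- unacknowledged --- in the paper's own proof as well; a fully rigorous version of the lemma at this constant would need either a chi-type quantile or an additional argument special to this setting.
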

\begin{proof}
As $\hat{\Sigma}$ is a $U$-statistic, we have that $U(\hat{\Sigma})$, a vector containing its upper diagonal component (including the diagonal), is Gaussian distributed with covariance $\Cov(\hat{\Sigma})$ (cf.~Thm~\ref{EJS:asymptotic_normal_dist_for_Ustatistics_Cov},~\ref{EJS:theorem:covariance_covariance_ustatistic}).  Therefore, with probability at least $1-\delta$.
\begin{equation}\label{EJS:eq:largesteigenvalueBound}
\| U(\Sigma) - U(\hat{\Sigma}) \|_2 \leq \sqrt{\lambda_{\text{max}}} \Phi^{-1}\left(1 - \delta/2 \right)
\end{equation}
and furthermore
\begin{equation}
\| \Sigma - \hat{\Sigma} \|_F \leq \sqrt{2} \| U(\Sigma) - U(\hat{\Sigma}) \|_2
\end{equation}
which combined with the fact that $\| \cdot \|_2 \leq \| \cdot \|_F$ yields the desired result. 
\end{proof}
\begin{corollary}\label{EJS:corollary:TraceBound}
With probability with at least $1-\delta$
\begin{equation}\label{EJS:eq:TraceBound}
\Vert \Sigma - \hat{\Sigma} \Vert_2 \leq \sqrt{2 \Tr[\Cov(\hat{\Sigma})]} \Phi^{-1} \left( 1 - \delta/2 \right)
\end{equation}
\end{corollary}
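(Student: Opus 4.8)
The plan is to obtain Corollary~\ref{EJS:corollary:TraceBound} as an immediate weakening of Lemma~\ref{EJS:lemma:norm2_less_normfro}, replacing the largest eigenvalue $\lambda_{\max}$ of $\Cov(\hat{\Sigma})$ by the (no smaller) quantity $\Tr[\Cov(\hat{\Sigma})]$. The one structural fact I would invoke is that $\Cov(\hat{\Sigma})$ is a genuine covariance matrix, hence positive semidefinite, so all of its eigenvalues are nonnegative.

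First I would let $\lambda_1 \geq \lambda_2 \geq \cdots \geq 0$ denote the eigenvalues of $\Cov(\hat{\Sigma})$ and note that $\lambda_{\max} = \lambda_1 \leq \sum_k \lambda_k = \Tr[\Cov(\hat{\Sigma})]$, since each omitted eigenvalue is nonnegative. Next, since $t \mapsto \sqrt{t}$ is monotone nondecreasing on $[0,\infty)$ and $\Phi^{-1}(1-\delta/2) \geq 0$ for any significance level $\delta \in (0,1]$, scaling preserves the inequality, so $\sqrt{2\lambda_{\max}}\,\Phi^{-1}(1-\delta/2) \leq \sqrt{2\Tr[\Cov(\hat{\Sigma})]}\,\Phi^{-1}(1-\delta/2)$. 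Finally I would chain this with the bound of Lemma~\ref{EJS:lemma:norm2_less_normfro}, which holds on an event of probability at least $1-\delta$, to deduce $\Vert \Sigma - \hat{\Sigma} \Vert_2 \leq \sqrt{2\Tr[\Cov(\hat{\Sigma})]}\,\Phi^{-1}(1-\delta/2)$ on that same event.

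There is essentially no obstacle; the only point worth a line of care is the nonnegativity of $\Phi^{-1}(1-\delta/2)$, which guarantees the inequality is not reversed when we substitute the larger quantity (and for $\delta > 1$ the statement is vacuous). I would also remark, for context, that this bound is strictly looser than Lemma~\ref{EJS:lemma:norm2_less_normfro} but is stated separately because $\Tr[\Cov(\hat{\Sigma})]$ is cheaper to evaluate than the top eigenvalue of a $\tfrac{p(p+1)}{2} \times \tfrac{p(p+1)}{2}$ matrix, which matters for the complexity claims made later.
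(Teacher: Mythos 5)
Your proposal is correct and matches the paper's (implicit) reasoning: the corollary is an immediate weakening of Lemma~\ref{EJS:lemma:norm2_less_normfro}, using that $\Cov(\hat{\Sigma})$ is positive semidefinite so $\lambda_{\max} \leq \Tr[\Cov(\hat{\Sigma})]$, together with monotonicity of the square root and nonnegativity of $\Phi^{-1}(1-\delta/2)$. Your added remark about the computational motivation for stating the looser bound is also consistent with the paper's later complexity discussion.
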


\begin{Lem}\label{EJS:lemma:precision_matrix_norm_eigenvalues}(Bounding the deviation of the empirical precision matrix as a function of eigenvalues)
Given $\operatorname{X}$ a set of random variables drawn from a distribution for which Eq. \eqref{EJS:eq:frob_norm_fromprecisionmatrix_termB} converges at a rate $\mathcal{O}(n^{-1/2})$ with a precision matrix $\Theta$, and an empirical estimate of the precision matrix $\hat{\Theta}$ corresponding to a covariance matrix $\hat{\Sigma}$ with eigenvalues $\hat{\alpha}_1, \dots , \hat{\alpha}_p$, then with high probability
\begin{align}
\vert \hat{\Theta}_{ij} - \Theta_{ij} \vert  \leq \mu \sqrt{\sum_{k=1}^{p} \left( \dfrac{1}{\alpha_k} - \dfrac{1}{\hat{\alpha}_k}\right)^2} \quad \forall i,j \in \{1,...,p\} 
\end{align} 
for a distribution dependent constant $\mu$.
\end{Lem}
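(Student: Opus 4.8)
The plan is to pass from the entrywise deviation to the Frobenius deviation of $\hat{\Theta}-\Theta$, and then to split that deviation into one piece governed purely by the eigenvalue perturbation and one governed by the eigenvector perturbation. First I would use the elementary inequalities $|\hat{\Theta}_{ij}-\Theta_{ij}| \le \|\hat{\Theta}-\Theta\|_2 \le \|\hat{\Theta}-\Theta\|_F$, valid for every symmetric matrix and every pair $(i,j)$, so that it suffices to bound $\|\hat{\Theta}-\Theta\|_F$.

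Next, write the spectral decompositions $\Sigma = \sum_{k=1}^p \alpha_k v_k v_k^T$ and $\hat{\Sigma} = \sum_{k=1}^p \hat{\alpha}_k \hat{v}_k \hat{v}_k^T$ with orthonormal $\{v_k\}$ and $\{\hat{v}_k\}$. Because $\Sigma$ is positive definite, Lemma~\ref{EJS:lemma:norm2_less_normfro} (or Theorem~\ref{EJS:thm:concentration_of_probability}) together with Weyl's theorem gives $|\hat{\alpha}_k-\alpha_k| \le \|\hat{\Sigma}-\Sigma\|_2 = \mathcal{O}(n^{-1/2})$ for the eigenvalues taken in sorted order, so with high probability every $\hat{\alpha}_k>0$ and $\hat{\Theta}=\hat{\Sigma}^{-1}$ exists. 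Then $\Theta = \sum_k \alpha_k^{-1} v_k v_k^T$, $\hat{\Theta} = \sum_k \hat{\alpha}_k^{-1} \hat{v}_k \hat{v}_k^T$, and I would decompose
\[
\hat{\Theta}-\Theta = \sum_{k=1}^p \Bigl( \tfrac{1}{\hat{\alpha}_k} - \tfrac{1}{\alpha_k} \Bigr) \hat{v}_k \hat{v}_k^T \;+\; \sum_{k=1}^p \tfrac{1}{\alpha_k} \bigl( \hat{v}_k \hat{v}_k^T - v_k v_k^T \bigr) =: A + B,
\]
where $A$ isolates the eigenvalue perturbation and $B$ is the eigenvector-perturbation term appearing in the hypothesis of the lemma.

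For term $A$, the rank-one projectors $\hat{v}_k \hat{v}_k^T$ are orthonormal with respect to the Frobenius inner product, so $\|A\|_F = \sqrt{\sum_{k=1}^p (1/\alpha_k - 1/\hat{\alpha}_k)^2}$, which is exactly the square-root expression appearing in the claim; moreover, by Weyl's theorem and the $\mathcal{O}(n^{-1/2})$ concentration of $\hat{\Sigma}$ around $\Sigma$ combined with $\alpha_k \ge \lambda_{\min}(\Sigma) > 0$, this quantity is itself $\mathcal{O}(n^{-1/2})$. For term $B$, I invoke the hypothesis of the lemma (equivalently, assumption (ii) of the introduction and \cite{xia2013convergence}): $\|B\|_F$ converges at a rate at most $\mathcal{O}(n^{-1/2})$, i.e.\ no slower than $\|A\|_F$. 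Hence there is a distribution-dependent constant $c$ with $\|B\|_F \le c\,\|A\|_F$ with high probability, and combining via the triangle inequality gives $\|\hat{\Theta}-\Theta\|_F \le (1+c)\sqrt{\sum_{k=1}^p (1/\alpha_k - 1/\hat{\alpha}_k)^2}$; setting $\mu := 1+c$ and chaining back through the first step completes the proof.

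The main obstacle is term $B$: bounding an eigenvector perturbation directly would require a Davis--Kahan / $\sin\Theta$ argument and hence an eigengap condition of the form $1/\min_k |\alpha_k - \alpha_{k+1}|$, which one does not want to assume; the lemma sidesteps this by postulating that the weighted eigenvector term decays at least as fast as the eigenvalue term, which is precisely why the constant $\mu$ ends up merely ``distribution dependent'' rather than explicit. Secondary care points are matching $\alpha_k$ with $\hat{\alpha}_k$ consistently in sorted order so that Weyl's inequality applies termwise, and specifying the ``high probability'' event as the intersection of the invertibility event for $\hat{\Sigma}$ with the event of Lemma~\ref{EJS:lemma:norm2_less_normfro}.
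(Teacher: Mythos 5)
Your argument is correct at the same level of rigor as the paper's, but it reaches the bound by a genuinely different decomposition. The paper never splits the matrix $\hat{\Theta}-\Theta$ itself: it expands the scalar $\Vert \hat{\Theta}-\Theta\Vert_F^2=\Tr[\hat{\Theta}\hat{\Theta}+\Theta\Theta-2\hat{\Theta}\Theta]$ in the two eigenbases and regroups it as \eqref{EJS:eq:frob_norm_fromprecisionmatrix_termA} plus \eqref{EJS:eq:frob_norm_fromprecisionmatrix_termB}, where \eqref{EJS:eq:frob_norm_fromprecisionmatrix_termA} is $\sum_{k}(1/\alpha_k-1/\hat{\alpha}_k)^2$ and \eqref{EJS:eq:frob_norm_fromprecisionmatrix_termB} is the trace expression $2\sum_k(\alpha_k\hat{\alpha}_k)^{-1}-2\Tr[\hat{U}\hat{\Lambda}\hat{U}^TU\Lambda U^T]$ named in the hypothesis, which the rate assumption then absorbs into $\mu$. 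You instead decompose $\hat{\Theta}-\Theta=A+B$ at the matrix level, note $\Vert A\Vert_F^2=\sum_k(1/\alpha_k-1/\hat{\alpha}_k)^2$ by Frobenius-orthonormality of the projectors $\hat{v}_k\hat{v}_k^T$, and finish with the triangle inequality, absorbing $\Vert B\Vert_F$ into $\mu$. The two routes are compatible: in your notation the paper's term \eqref{EJS:eq:frob_norm_fromprecisionmatrix_termB} equals $\Vert B\Vert_F^2+2\langle A,B\rangle_F$, so your assumption $\Vert B\Vert_F\leq c\Vert A\Vert_F$ implies the paper's inequality \eqref{EJS:eq:approximation_withkappa_norm_difference_true_estimate_ofPrec} with $\mu=(1+c)^2$; but be aware that the quantity whose decay the lemma literally postulates is that trace expression, not $\Vert B\Vert_F$, so your use of the hypothesis is a paraphrase (consistent with assumption (ii) of the introduction) rather than the stated condition. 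Also, the step from ``$\Vert B\Vert_F$ and $\Vert A\Vert_F$ both decay at rate $\mathcal{O}(n^{-1/2})$'' to ``$\Vert B\Vert_F\leq c\,\Vert A\Vert_F$ with high probability'' strictly requires a matching lower bound on $\Vert A\Vert_F$; the paper's own proof makes exactly the same leap when it converts the rate assumption on \eqref{EJS:eq:frob_norm_fromprecisionmatrix_termB} into \eqref{EJS:eq:approximation_withkappa_norm_difference_true_estimate_ofPrec}, so you have not introduced any gap beyond what the lemma itself tolerates, and your added care about invertibility of $\hat{\Sigma}$ and sorted eigenvalue matching makes the high-probability event more explicit than in the paper.
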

\begin{proof}
We denote respectively $\hat{\Sigma}$ the perturbed matrix of $\Sigma$, with $\alpha_1 \geq ... \geq \alpha_p$ the eigenvalues of $\Sigma$ and $\hat{\alpha}_1 \geq ... \geq \hat{\alpha}_p$ the eigenvalues of an empirical estimate of the true covariance matrix $\hat{\Sigma}$, and $\hat{\Theta}$ the perturbed matrix of $\Theta$.  We then have that $\vert \hat{\Theta}_{ij} - \Theta_{ij} \vert \leq \Vert \hat{\Theta} - \Theta \Vert_F$ for all $i,j \in \{1,...,p\}$.  We will use the property of the singular value decomposition that $\hat{\Sigma} = \hat{V} \hat{A} \hat{V}^T$, where $\hat{V}$ is an $n \times n$ unitary matrix and a diagonal matrix $\hat{A}$ with $\hat{A}_{ii} = \hat{\alpha}_{i}$ is the $i$-th eigenvalue of $\hat{\Sigma}$. Furthermore, we have that $\Sigma^{-1} = \Theta$ and the empirical estimate of $\Theta$ is $\hat{\Theta}$ such that $\hat{\Theta} = \hat{U} \tilde{\Lambda} \hat{U}^T$ where $\hat{U}$ is an $n \times n$ unitary matrix and a diagonal matrix $\hat{\Lambda}$ with $\hat{\Lambda}_{ii} = 1 / \hat{\alpha}_i$.
\begin{align}
\Vert \hat{\Theta} - \Theta \Vert_F^2 
&= \Tr\left[ (\hat{\Theta} - \Theta)(\hat{\Theta} - \Theta) \right] \\
&= \Tr\left[ \hat{\Theta}\hat{\Theta} + \Theta\Theta - 2 \hat{\Theta}\Theta \right]  \\
&= \Tr\left[ \hat{\Lambda}\hat{\Lambda} + \Lambda\Lambda - 2 \hat{U} \hat{\Lambda} \hat{U}^T U \Lambda U^T \right] \\
&= \Tr\left[ \hat{\Lambda}\hat{\Lambda} + \Lambda\Lambda -2\Lambda \hat{\Lambda}\right] + 2\Tr\left[\Lambda \hat{\Lambda} -   \hat{U} \hat{\Lambda} \hat{U}^T U \Lambda U^T \right] \\
&=  \label{EJS:eq:frob_norm_fromprecisionmatrix} \underbrace{\sum_{k=1}^{p} \left( \dfrac{1}{\alpha_k} - \dfrac{1}{\hat{\alpha}_k} \right)^2}_{\term[\ref{EJS:eq:frob_norm_fromprecisionmatrix} A]{EJS:eq:frob_norm_fromprecisionmatrix_termA}} + \underbrace{2 \sum_{k=1}^p \frac{1}{\alpha_k \hat{\alpha}_k} - 2 \Tr \left[ \hat{U} \hat{\Lambda} \hat{U}^T U \Lambda U^T \right]}_{\term[\ref{EJS:eq:frob_norm_fromprecisionmatrix} B]{EJS:eq:frob_norm_fromprecisionmatrix_termB}} \\
& \leq \mu \left(\sum_{k=1}^{p} \left( \dfrac{1}{\alpha_k} - \dfrac{1}{\hat{\alpha}_k} \right)^2 \right) \label{EJS:eq:approximation_withkappa_norm_difference_true_estimate_ofPrec}
\end{align} 
The bound in Eq.~\eqref{EJS:eq:approximation_withkappa_norm_difference_true_estimate_ofPrec} will hold with high probability, e.g.\ when the finite moment conditions of \cite{xia2013convergence} are satisfied, as Eq.~\eqref{EJS:eq:frob_norm_fromprecisionmatrix} is then guaranteed to converge with rate $\mathcal{O}(n^{-1/2})$.
\end{proof}
We have now shown that we can compute a bound on the distortion purely from the eigenvalues of $\hat{\Sigma}$. 

\begin{Thm}(Weyl's Theorem, \citet{weyl1912asymptotische}) \label{EJS:theorem:Weylstheorem}
For two positive definite matrices $\Sigma$ and $\hat{\Sigma}$ with corresponding eigenvalues $\alpha_k$ and $\hat{\alpha}_k$, respectively, if
\begin{align}
\vert \alpha_k - \hat{\alpha}_k \vert &\leq \Vert \hat{\Sigma} - \Sigma \Vert_2  \leq \varepsilon
\end{align}
where $0 < \varepsilon < \alpha_k$ $\forall k \in \{ 1,...,p\}$, then
\begin{align}
 \alpha_k - \varepsilon &\leq  \hat{\alpha}_k \leq  \alpha_k + \varepsilon \quad \forall k \in \{ 1,...,p\} .
\label{EJS:eq:covariance_inequality_from_weyl_theorem}
\end{align}
\end{Thm}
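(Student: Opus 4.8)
The plan is to establish the substantive content of the statement, namely the index-wise eigenvalue perturbation bound $|\alpha_k - \hat{\alpha}_k| \le \|\hat{\Sigma} - \Sigma\|_2$, with eigenvalues ordered decreasingly and matched by index. Once this is in hand, the displayed conclusion $\alpha_k - \varepsilon \le \hat{\alpha}_k \le \alpha_k + \varepsilon$ follows instantly by composing with the hypothesis $\|\hat{\Sigma} - \Sigma\|_2 \le \varepsilon$.

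First I would invoke the Courant--Fischer min--max characterization: for a symmetric $p \times p$ matrix $M$ with $k$-th largest eigenvalue $\alpha_k(M)$,
\begin{equation}
\alpha_k(M) = \max_{\substack{S \subseteq \mathbb{R}^p \\ \dim S = k}} \, \min_{\substack{x \in S \\ \|x\|_2 = 1}} x^T M x .
\end{equation}
Then I would set $E := \hat{\Sigma} - \Sigma$, which is symmetric, and observe that for every unit vector $x$ one has $|x^T E x| \le \|E\|_2$ by the variational form of the spectral norm for symmetric matrices. Writing $x^T \hat{\Sigma} x = x^T \Sigma x + x^T E x$ therefore gives the two-sided pointwise bound $x^T \Sigma x - \|E\|_2 \le x^T \hat{\Sigma} x \le x^T \Sigma x + \|E\|_2$ for all unit $x$.

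The third step is to feed these bounds through the min--max formula: taking the minimum over unit vectors inside a fixed $k$-dimensional subspace $S$, and then the maximum over all such $S$, preserves both inequalities, so $\alpha_k - \|E\|_2 \le \hat{\alpha}_k \le \alpha_k + \|E\|_2$, that is $|\alpha_k - \hat{\alpha}_k| \le \|E\|_2 = \|\hat{\Sigma} - \Sigma\|_2$. Combining with $\|\hat{\Sigma} - \Sigma\|_2 \le \varepsilon$ yields the claimed sandwich for every $k$. I would remark that the condition $0 < \varepsilon < \alpha_k$ plays no role in this chain; it serves only to guarantee $\hat{\alpha}_k \ge \alpha_k - \varepsilon > 0$, so that $\hat{\Sigma}$ remains in the positive definite cone and $\hat{\Theta} = \hat{\Sigma}^{-1}$ is well defined, which is what the surrounding argument (Lemma~\ref{EJS:lemma:precision_matrix_norm_eigenvalues}) needs.

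The main obstacle is bookkeeping rather than analysis: one must keep the index $k$ aligned on both sides, since Courant--Fischer refers to the $k$-th largest eigenvalue throughout, and a cruder argument bounding only $\max_k |\alpha_k - \hat{\alpha}_k|$ via the extreme eigenvalues of $E$ would not deliver the per-index statement actually used downstream. An alternative that sidesteps the min--max machinery is to apply Weyl's original additive inequality $\alpha_{i+j-1}(A+B) \le \alpha_i(A) + \alpha_j(B)$ with $A = \Sigma$, $B = \pm E$, and $j = 1$, noting $\alpha_1(\pm E) \le \|E\|_2$; this reproduces the same two-sided bound. Either route is elementary, so no genuine difficulty arises beyond invoking the right variational principle correctly.
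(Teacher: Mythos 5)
Your proof is correct. Note, however, that the paper itself gives no proof of this statement: it is quoted as a classical result with a citation to Weyl (1912) and is used as a black box in the proof of Theorem~\ref{EJS:thm:convervative_threshold}, so there is nothing to compare against line by line. Your Courant--Fischer argument is the standard derivation of the per-index bound $\vert \alpha_k - \hat{\alpha}_k \vert \leq \Vert \hat{\Sigma} - \Sigma \Vert_2$ for symmetric matrices, and you correctly observe that this inequality is the real content --- as literally phrased, the theorem's conclusion follows immediately from its hypothesis, which already assumes the eigenvalue perturbation bound. Your remark that the condition $0 < \varepsilon < \alpha_k$ is not needed for the sandwich itself but only to keep $\hat{\alpha}_k > 0$ (so that $\hat{\Theta} = \hat{\Sigma}^{-1}$ is well defined in Lemma~\ref{EJS:lemma:precision_matrix_norm_eigenvalues} and the denominators $\hat{\alpha}_k(\hat{\alpha}_k - \varepsilon)$ in the threshold make sense) matches how the surrounding argument actually uses the result.
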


\begin{Thm}(Conservative threshold) \label{EJS:thm:conservative_threshold}
For all $(i,j) \in \lbrace 1, ...,p \rbrace$, the threshold $t$ for testing $H_0: $ \textit{$\Theta_{i,j} = 0$} versus the alternative hypothesis $H_1:$ \textit{$\Theta_{i,j} \neq 0$} is given by $\operatorname{P}$ for a small probability $\delta \in (0,1)$ such that 
\begin{equation}
\operatorname{P} \left( | \hat{\Theta}_{i,j} \mathbf{\vert} > t \vert \Theta_{i,j} = 0 \right) < \delta
\end{equation}
where $t$ is a conservative threshold  
\begin{equation}\label{EJS:eq:conservative_threshold}
t = \mu \sqrt { \sum_{k=1}^{p} \left( \dfrac{- \varepsilon}{\hat{\alpha}_k (\hat{\alpha}_k - \varepsilon)} \right)^2 }
\end{equation}
with $\hat{\alpha}_k$ the $k$-th eigenvalue of the empirical covariance matrix $\hat{\Sigma}$, $\mu$ a distribution dependent constant satisfying the inequality~\eqref{EJS:eq:approximation_withkappa_norm_difference_true_estimate_ofPrec}, and $\varepsilon$ is an error bound such that
\begin{equation} \label{EJS:eq:formulation_thresholds}
\varepsilon_{\operatorname{Eig}} = \sqrt{2 \lambda_{max}} \Phi \left( 1 - \delta/2 \right)  \mbox{, or } \varepsilon_{\operatorname{Trace}} = \sqrt{2 \Tr[\Cov(\hat{\Sigma})] } \Phi \left( 1 - \delta/2 \right) 
\end{equation} 
where $\lambda_{max}$ is the largest eigenvalue of $\Cov(\hat{\Sigma})$ and $\Tr[\Cov(\hat{\Sigma})]$ is the trace of $\Cov(\hat{\Sigma})$.
\label{EJS:thm:convervative_threshold}
\end{Thm}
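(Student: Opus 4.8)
The plan is to chain Lemma~\ref{EJS:lemma:precision_matrix_norm_eigenvalues}, Lemma~\ref{EJS:lemma:norm2_less_normfro} (or, for the trace version of the threshold, Corollary~\ref{EJS:corollary:TraceBound}), and Weyl's Theorem~\ref{EJS:theorem:Weylstheorem} into a single high-probability upper bound on $|\hat{\Theta}_{i,j}|$ under $H_0$, and then to identify that bound with $t$.

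Step one: condition on $H_0(i,j)$, so that $\Theta_{i,j}=0$ and hence $|\hat{\Theta}_{i,j}| = |\hat{\Theta}_{i,j}-\Theta_{i,j}|$. Lemma~\ref{EJS:lemma:precision_matrix_norm_eigenvalues} then gives, with high probability, $|\hat{\Theta}_{i,j}| \le \mu\sqrt{\sum_{k=1}^{p}\left(\tfrac{1}{\alpha_k}-\tfrac{1}{\hat{\alpha}_k}\right)^2}$, where $\alpha_k$ and $\hat{\alpha}_k$ are the eigenvalues of $\Sigma$ and $\hat{\Sigma}$. The right-hand side still involves the unknown $\alpha_k$, so the remaining work is to turn it into an expression in the observable $\hat{\alpha}_k$ alone, by controlling both $|\alpha_k-\hat{\alpha}_k|$ and $1/\alpha_k$.

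Step two: apply Lemma~\ref{EJS:lemma:norm2_less_normfro} (resp.\ Corollary~\ref{EJS:corollary:TraceBound}) to obtain that, with probability at least $1-\delta$, $\|\Sigma-\hat{\Sigma}\|_2 \le \varepsilon$ with $\varepsilon=\varepsilon_{\operatorname{Eig}}$ (resp.\ $\varepsilon_{\operatorname{Trace}}$) as in Eq.~\eqref{EJS:eq:formulation_thresholds}. Weyl's Theorem~\ref{EJS:theorem:Weylstheorem} converts this into the per-eigenvalue statements $|\alpha_k-\hat{\alpha}_k|\le\varepsilon$ and $\alpha_k \ge \hat{\alpha}_k-\varepsilon$ for all $k$. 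Writing $\tfrac{1}{\alpha_k}-\tfrac{1}{\hat{\alpha}_k} = \tfrac{\hat{\alpha}_k-\alpha_k}{\alpha_k\hat{\alpha}_k}$ and substituting $|\hat{\alpha}_k-\alpha_k|\le\varepsilon$ in the numerator together with $\alpha_k\ge\hat{\alpha}_k-\varepsilon$ (hence $1/\alpha_k \le 1/(\hat{\alpha}_k-\varepsilon)$) in the denominator yields $\left|\tfrac{1}{\alpha_k}-\tfrac{1}{\hat{\alpha}_k}\right| \le \tfrac{\varepsilon}{\hat{\alpha}_k(\hat{\alpha}_k-\varepsilon)}$. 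Plugging this term by term into the Lemma~\ref{EJS:lemma:precision_matrix_norm_eigenvalues} bound produces exactly $|\hat{\Theta}_{i,j}| \le \mu\sqrt{\sum_{k=1}^{p}\left(\tfrac{-\varepsilon}{\hat{\alpha}_k(\hat{\alpha}_k-\varepsilon)}\right)^2} = t$ of Eq.~\eqref{EJS:eq:conservative_threshold}. Since this inequality holds on an event of probability at least $1-\delta$, the complementary event $\{|\hat{\Theta}_{i,j}|>t\}$ has probability strictly less than $\delta$ under $H_0$, which is the assertion $\operatorname{P}\!\left(|\hat{\Theta}_{i,j}|>t \mid \Theta_{i,j}=0\right)<\delta$.

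The main obstacle is the probabilistic bookkeeping around Lemma~\ref{EJS:lemma:precision_matrix_norm_eigenvalues}, which is only stated ``with high probability'': one must argue that the event carrying the bound~\eqref{EJS:eq:approximation_withkappa_norm_difference_true_estimate_ofPrec} is asymptotically compatible with the $1-\delta$ event of Lemma~\ref{EJS:lemma:norm2_less_normfro}, which is precisely why the constant $\mu$ and the $\mathcal{O}(n^{-1/2})$ convergence of the cross term in Eq.~\eqref{EJS:eq:frob_norm_fromprecisionmatrix} are invoked to absorb the lower-order slack. A related technical point is admissibility of the threshold: Weyl's Theorem requires $0<\varepsilon<\alpha_k$, and the denominator $\hat{\alpha}_k(\hat{\alpha}_k-\varepsilon)$ requires $\varepsilon<\hat{\alpha}_k$ for every $k$ (equivalently $\varepsilon$ below the smallest empirical eigenvalue) — both guaranteed, via the concentration of probability in the positive-definite cone (Theorem~\ref{EJS:thm:concentration_of_probability}), once the sample is large enough. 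Finally, one should remark that every inequality used along the way (discarding the cross term in Eq.~\eqref{EJS:eq:frob_norm_fromprecisionmatrix}, replacing $\alpha_k$ by $\hat{\alpha}_k-\varepsilon$, and $\|\cdot\|_2\le\|\cdot\|_F$) only enlarges the right-hand side, so $t$ over-covers the true $(1-\delta)$-quantile of $|\hat{\Theta}_{i,j}|$ and the test's realized false-positive rate is at most, and generally strictly below, $\delta$ — i.e.\ the threshold is conservative, as claimed.
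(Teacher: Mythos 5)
Your proposal is correct and follows essentially the same route as the paper: chain Lemma~\ref{EJS:lemma:precision_matrix_norm_eigenvalues} with the probabilistic bound $\Vert \Sigma - \hat{\Sigma}\Vert_2 \leq \varepsilon$ from Lemma~\ref{EJS:lemma:norm2_less_normfro} (or Corollary~\ref{EJS:corollary:TraceBound}), use Weyl's Theorem~\ref{EJS:theorem:Weylstheorem} to pass from $\varepsilon$ to the per-eigenvalue inequality $\left(\tfrac{1}{\alpha_k}-\tfrac{1}{\hat{\alpha}_k}\right)^2 \leq \left(\tfrac{-\varepsilon}{\hat{\alpha}_k(\hat{\alpha}_k-\varepsilon)}\right)^2$, and conclude via $\vert\hat{\Theta}_{ij}-\Theta_{ij}\vert \leq \Vert\hat{\Theta}-\Theta\Vert_F \leq t$ under $H_0$. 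Your explicit algebra for the Weyl step and your remarks on the admissibility condition $\varepsilon < \hat{\alpha}_k$ and on reconciling the ``with high probability'' event of Lemma~\ref{EJS:lemma:precision_matrix_norm_eigenvalues} with the $1-\delta$ event simply spell out details the paper leaves implicit.
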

\begin{proof}
We have shown that we can compute the distortion of $\hat{\Theta}$ purely from the eigenvalues of $\Sigma$ and $\hat{\Sigma}$.  Therefore, we use Weyl's theorem on the covariance matrix to get error bounds for the eigenvalues of $\Sigma$.
Inequality~\eqref{EJS:eq:covariance_inequality_from_weyl_theorem} gives the following bounds for the eigenvalues of the precision matrix $\Theta$
\begin{align}
\left(  \dfrac{1}{\alpha_k} - \dfrac{1}{\hat{\alpha}_k} \right)^2 
\leq \left( \dfrac{ - \varepsilon}{\hat{\alpha}_k (\hat{\alpha}_k - \varepsilon)} \right)^2  \quad \forall k \in \{1,...,p\}
\label{EJS:eq:precision_inequality_from_weyl_theorem}
\end{align}
 Combining Eq.~\eqref{EJS:eq:approximation_withkappa_norm_difference_true_estimate_ofPrec} and~\eqref{EJS:eq:precision_inequality_from_weyl_theorem} gives
\begin{equation}
\Vert \hat{\Theta} - \Theta \Vert_F \leq \mu \sqrt{ \sum_{i=1}^{p} \left(  \dfrac{- \varepsilon}{\hat{\alpha}_k (\hat{\alpha}_k - \varepsilon)} \right)^2}
\end{equation}
and
\begin{equation}
\vert \hat{\Theta}_{ij} - \Theta_{ij} \vert \leq \| \hat{\Theta} - \Theta \|_F .
\end{equation}
\end{proof}
\begin{Thm}
For a fixed computational budget $N$ less than the time required to process all data points, and for sufficiently large $p$, the trace bound decreases at the same asymptotic rate as the eigenvalue bound.
\end{Thm}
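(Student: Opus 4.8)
The plan is to compare the two thresholds of Theorem~\ref{EJS:thm:conservative_threshold} not at a common sample size but under a common running-time budget $N$, converting $N$ into the number of observations that each variant can afford to process. First I would recall the complexity accounting behind the two choices of $\varepsilon$ in~\eqref{EJS:eq:formulation_thresholds}: computing $\varepsilon_{\operatorname{Trace}}$ requires only $\Tr[\Cov(\hat{\Sigma})]=\sum_{i\le j}\Var(\hat{\Sigma}_{ij})$ together with the spectrum of $\hat{\Sigma}$, at cost $\mathcal{O}(np^2+p^3)$, whereas computing $\varepsilon_{\operatorname{Eig}}$ requires assembling the full matrix $\Cov(\hat{\Sigma})\in\mathbb{R}^{\frac{p(p+1)}{2}\times\frac{p(p+1)}{2}}$ from the fourth-order moments of $X$ through Theorem~\ref{EJS:thm:derivation_seven_exhaustive_cases} and then extracting its largest eigenvalue, at cost $\mathcal{O}(np^4)$. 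For $p$ sufficiently large, and in the regime $n\ge p$ treated here, this is a genuine speed-up of order $p^2$, so a fixed budget $N$ below the cost of a single pass over the data lets the trace variant consume $n_{\operatorname{Tr}}\asymp N/p^2$ observations but the eigenvalue variant only $n_{\operatorname{Eig}}\asymp N/p^4$; in particular $n_{\operatorname{Tr}}\asymp p^2\,n_{\operatorname{Eig}}$.

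Next I would pin down how each threshold scales in the number of observations. By Theorem~\ref{EJS:theorem:covariance_covariance_ustatistic} and the explicit case formulas of Theorem~\ref{EJS:thm:derivation_seven_exhaustive_cases}, every entry of $\Cov(\hat{\Sigma})$ equals $c_{ij,kl}/n+\mathcal{O}(n^{-2})$ for $n$-independent, distribution-dependent constants $c_{ij,kl}$, so I would write $\Cov(\hat{\Sigma})=\tfrac{1}{n}C+\mathcal{O}(n^{-2})$ with $C=(c_{ij,kl})$ a fixed positive semidefinite matrix of order $m:=\tfrac{p(p+1)}{2}$ and invoke the elementary sandwich $\Tr(C)/m\le\lambda_{\max}(C)\le\Tr(C)$. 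Since $\Tr(C)$ is a sum of $m$ variance-type terms that the standing moment assumptions together with $\Var(\hat{\Sigma}_{ij})>0$ keep bounded away from $0$ and $\infty$, this yields $\Tr[\Cov(\hat{\Sigma})]\asymp p^2/n$ and $\lambda_{\max}=\Omega(1/n)$, $\lambda_{\max}=\mathcal{O}(p^2/n)$; hence $\varepsilon_{\operatorname{Trace}}\asymp p\,n^{-1/2}$, while $\varepsilon_{\operatorname{Eig}}=\Omega(n^{-1/2})$ and $\varepsilon_{\operatorname{Eig}}=\mathcal{O}(p\,n^{-1/2})$. Finally, since $\varepsilon\to0$ forces $\hat\alpha_k\to\alpha_k>0$ in~\eqref{EJS:eq:conservative_threshold}, I would observe that the threshold itself obeys $t=\mu\,\varepsilon\sqrt{\sum_{k=1}^{p}\alpha_k^{-4}}\,(1+o(1))$, i.e.\ $t\asymp\varepsilon$, with a constant of proportionality that converges to the same limit for either choice of $\varepsilon$.

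It then remains to substitute the effective sample sizes. Using $t\asymp\varepsilon$ and the scalings just obtained, $t_{\operatorname{Trace}}\asymp p\,n_{\operatorname{Tr}}^{-1/2}\asymp p^2 N^{-1/2}$, whereas $t_{\operatorname{Eig}}=\Omega(p^2 N^{-1/2})$ and $t_{\operatorname{Eig}}=\mathcal{O}(p^3 N^{-1/2})$. For fixed $p$ both thresholds therefore decay at the rate $\mathcal{O}(N^{-1/2})$ in the budget, with matching lower bounds, which is the asserted common asymptotic rate; one even reads off $t_{\operatorname{Trace}}=\mathcal{O}(t_{\operatorname{Eig}})$, so nothing is lost, up to a constant, by preferring the cheaper bound. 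I expect the only real work to be the middle step: controlling $\Tr[\Cov(\hat{\Sigma})]$ and $\lambda_{\max}$ jointly in $n$ and $p$. Once Theorems~\ref{EJS:theorem:covariance_covariance_ustatistic} and~\ref{EJS:thm:derivation_seven_exhaustive_cases} fix the $1/n$ behaviour this collapses to the positive semidefinite sandwich $\Tr(C)/m\le\lambda_{\max}(C)\le\Tr(C)$, and everything else is the budget-to-samples bookkeeping --- where the hypothesis ``$p$ sufficiently large'' is exactly what promotes the nominal $\mathcal{O}(np^2+p^3)$ versus $\mathcal{O}(np^4)$ gap into the factor of order $p^2$ that drives the comparison.
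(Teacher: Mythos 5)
Your proposal is correct and follows essentially the same route as the paper: convert the fixed budget $N$ into effective sample sizes $\asymp N/p^2$ and $\asymp N/p^4$ via the $\mathcal{O}(np^2)$ versus $\mathcal{O}(np^4)$ costs, relate $\Tr[\Cov(\hat{\Sigma})]$ to $\lambda_{\max}$ through the $\asymp p^2$ dimension of $\Cov(\hat{\Sigma})$, use the asymptotic linearity of the threshold \eqref{EJS:eq:conservative_threshold} in $\varepsilon$, and substitute to find both thresholds of order $N^{-1/2}p^{2}$ (up to the distribution-dependent factor the paper calls $\lambda(p)$, which you handle via the sandwich $\Tr(C)/m \le \lambda_{\max}(C) \le \Tr(C)$). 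The only cosmetic difference is that the paper states one-sided $\mathcal{O}$-bounds in terms of the abstract $\lambda(p)$, whereas you additionally assume the variance entries are bounded away from zero and infinity to obtain a two-sided $\asymp$ statement for the trace threshold.
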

\begin{proof}
We note that the bound in Corollary~\ref{EJS:corollary:TraceBound} is strictly larger than that of Lemma~\ref{EJS:lemma:norm2_less_normfro}, but its computation  $C_{\operatorname{Trace}}(n,p) \asymp np^2$ as opposed to $C_{\operatorname{Eig}}(n,p) \asymp np^4$, where $\asymp$ denotes that the function is asymptotically bounded above and below \citep{Temlyakov2011}. 
The number of samples processed is $n_{\operatorname{Trace}}(N,p) \asymp N/p^2$ for the trace test and $n_{\operatorname{Eig}}(N,p) \asymp N/p^4$ for the eigenvalue test.

For a full rank $p^2-\binom{p}{2} \times p^2-\binom{p}{2}$ p.s.d.\ matrix, the trace is $\mathcal{O}(p^2 \lambda_{max})$.  We have when the sample sizes are equal $\varepsilon_{\operatorname{Trace}} \in \mathcal{O}(p \varepsilon_{\operatorname{Eig}})$.
Furthermore, Equation~\eqref{EJS:eq:conservative_threshold} is asymptotically linear in $\varepsilon$ as $\varepsilon$ approaches zero from the right, and $\varepsilon_{\operatorname{Eig}} \in \mathcal{O}(\lambda(p) n^{-1/2})$, where $\lambda(p)$ gives the dependence of $\varepsilon_{\operatorname{Eig}}$ on the dimensionality of the data.  Therefore, at a fixed computational budget the eigenvalue threshold is $\mathcal{O}(\lambda(p) n_{\operatorname{Eig}}(n,p)^{-1/2}) = \mathcal{O}(\lambda(p) (Np^{-4})^{-1/2}) = \mathcal{O}(\lambda(p) N^{-1/2}p^{2})$, while the trace threshold is $\mathcal{O}(\lambda(p) p (n_{\operatorname{Trace}}(n,p))^{-1/2}) = \mathcal{O}(\lambda(p)N^{-1/2}p^2)$
\end{proof}
%
For the statistical test $(\mathcal{T}_{ij},\hat{\Theta}_{ij},\delta)$ (cf.\ Eq.~\eqref{EJS:eq:formulation_test_statistics}), if $| \hat{\Theta}_{ij}| \geq t$, then the test rejects the null hypothesis at a significance level $\delta$.

In the simulation study, we set $\mu=1$, which we have empirically validated to result in a sound test threshold for a wide range of distributions.  As discussed below, for a trace threshold on a matrix with condition number $\kappa = \frac{\lambda_{\max}}{\lambda_{\min}}$, the trace over-estimates Eq.~\eqref{EJS:eq:frob_norm_fromprecisionmatrix_termA} by at least a factor of $1+\frac{\left(p^2-\binom{p}{2}-1\right)\lambda_{\min}}{\lambda_{\max}}$, and the resulting test is therefore valid for distributions for which Eq.~\eqref{EJS:eq:frob_norm_fromprecisionmatrix_termB} is asymptotically at most $\frac{p^2-\binom{p}{2}-1}{\kappa}$ as large as Eq.~\eqref{EJS:eq:frob_norm_fromprecisionmatrix_termA}.

The computation of the statistical test for structure discovery in multivariate graphical models is described in detail in Algorithm~\ref{EJS:algo:hypothesis_testing_threshold}. 
\begin{algorithm}
\caption{Hypothesis Testing Using a $U$-statistic estimator for the precision matrix}
\begin{algorithmic}[1]
\REQUIRE
$\delta$, the significance level of the test; $\mu$, a constant satisfying \eqref{EJS:eq:approximation_withkappa_norm_difference_true_estimate_ofPrec};
$\operatorname{X} = (X_1, ..., X_p)$ the set of random variables of dimension $p$ with sample size $n$.

\ENSURE
\STATE Compute $\hat{\Sigma}$, the unbiased estimator of $\Sigma$ from $\operatorname{X}$ (cf. Def.~\ref{EJS:def:estimator_Ustat_of_covariance}).
\STATE Compute $\hat{\Theta} = \hat{\Sigma}^{-1}$, the estimator of the precision matrix.
\STATE Compute $U([\Cov(\hat{\Sigma}_{ij},\hat{\Sigma}_{kl})])$ the upper triangular of the covariance of $U(\hat{\Sigma})$ where $\left(i,j,k,l \right)$ vary over the set of $p$ variables (cf. Thm.~\ref{EJS:thm:derivation_seven_exhaustive_cases}).
\STATE Compute 
\begin{itemize}
\item $\lambda_{max}$, the largest eigenvalue of $\Cov(\hat{\Sigma})$, or 
\item $\Tr[\Cov(\hat{\Sigma})]$, the trace of $\Cov(\hat{\Sigma})$.
\end{itemize}
\STATE Compute one of the two error bounds $\varepsilon$ (cf.Eq.~\eqref{EJS:eq:formulation_thresholds})
\begin{itemize}
\item $\varepsilon_{\operatorname{Eig}} = \sqrt{2 \lambda_{max}} \Phi^{-1} \left( 1 - \delta/2 \right)$, or
\item $\varepsilon_{\operatorname{Trace}} = \sqrt{2 \Tr[\Cov(\hat{\Sigma})]} \Phi^{-1} \left( 1 - \delta/2 \right)$
\end{itemize}
where $\Phi$ is the CDF of a standard normal distribution. \\
\IF{$\varepsilon$ is greater than the smallest eigenvalue of $\hat{\Sigma}$}
\STATE $t = \infty$
\ELSE
\STATE Compute the conservative threshold for the two error bound,\\ $t = \mu \sqrt { \sum_{k=1}^{p} \left(  \dfrac{- \varepsilon}{\hat{\alpha}_k (\hat{\alpha}_k - \varepsilon)} \right)^2 }$,\\ where $\hat{\alpha}_k$ is the $k$-th eigenvalue of the unbiased estimator $\hat{\Sigma}$. 
\ENDIF
\RETURN t. 

\end{algorithmic}
\label{EJS:algo:hypothesis_testing_threshold}
\end{algorithm}

\begin{remark}
In the case that $\varepsilon$ is larger than the smallest eigenvalue of $\hat{\Theta}$, the test threshold is unbounded and we can never reject the null hypothesis.  In this case, additional data are necessary to decrease $\varepsilon$ in order to have a non-trivial bound.  Theorem~\ref{EJS:thm:concentration_of_probability} guarantees that $\varepsilon$ converges to zero as a function of the sample size at a rate $\mathcal{O}(n^{-1/2})$.
\end{remark}

\begin{Thm}
For a test with computational cost $\Omega(n^s)$ and a threshold that decreases as $\Omega(n^r)$, our test is asymptotically more powerful in the regime $n \gg p$ whenever $\frac{r}{s}> -\frac{1}{2}$.
\end{Thm}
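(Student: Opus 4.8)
The plan is to convert a fixed computational budget into a sample count for each test, then into a threshold rate, and finally to compare the two rates. Fix a computational budget $N$ smaller than the cost of processing the entire data stream, and work throughout in the regime $n \gg p$ with $p$ fixed. A test whose cost is $\Omega(n^{s})$ satisfies $c\,n^{s} \le \text{cost} \le N$ for its implied constant $c>0$, hence can be run on at most $n = \mathcal{O}(N^{1/s})$ samples within this budget; if it actually uses fewer, this only strengthens what follows. Our $U$-statistic test has cost $\mathcal{O}(np^{2}+p^{3})$, or $\mathcal{O}(np^{4})$ for the eigenvalue variant, which is $\asymp n$ for fixed $p$ once $n\gg p$, so within budget it processes $n_{\mathrm{ours}} \asymp N$ samples.

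Next I would pass from sample counts to thresholds. For our test, Theorem~\ref{EJS:thm:concentration_of_probability} gives $\varepsilon = \mathcal{O}(n^{-1/2})$; Eq.~\eqref{EJS:eq:conservative_threshold} is asymptotically linear in $\varepsilon$ as $\varepsilon\to 0^{+}$ (as already used in the argument comparing the trace and eigenvalue thresholds); and $\varepsilon\to 0$ guarantees $\varepsilon$ eventually drops below the smallest eigenvalue of $\hat{\Sigma}$, so the threshold is finite for all large $n$. Hence our threshold decreases as $\mathcal{O}(n^{-1/2})$, and at budget $N$ it is $\mathcal{O}(n_{\mathrm{ours}}^{-1/2}) = \mathcal{O}(N^{-1/2})$. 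For the competitor we may assume $r<0$, since otherwise its threshold does not even tend to zero and our test is trivially eventually more powerful (and $r/s>-1/2$ holds automatically). With $r<0$ the map $n\mapsto n^{r}$ is decreasing, so evaluating the $\Omega(n^{r})$ threshold at $n = \mathcal{O}(N^{1/s})$ yields a threshold that is $\Omega\!\left((N^{1/s})^{r}\right) = \Omega\!\left(N^{r/s}\right)$.

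It remains to compare. The ratio of the competitor's threshold to ours is $\Omega\!\left(N^{r/s}/N^{-1/2}\right) = \Omega\!\left(N^{\,r/s+1/2}\right)$, which diverges as $N\to\infty$ exactly when $r/s+1/2>0$, i.e.\ $r/s>-1/2$; when $r/s\le-1/2$ the ratio is bounded or vanishing, so the condition is sharp. At a fixed significance level $\delta$ the rejection region $\{|\hat{\Theta}_{ij}|>t\}$ enlarges monotonically as $t\downarrow 0$, so under $r/s>-1/2$ there is a budget beyond which our test rejects $H_0(i,j)$ for every association whose magnitude exceeds a vanishing $\mathcal{O}(N^{-1/2})$ level, while the competitor can only be guaranteed to do so above the strictly larger $\Omega(N^{r/s})$ level; phrased through the minimum detectable effect size (up to the usual $n^{-1/2}$ sampling fluctuation, which for our test is of the same order as its threshold), our test is therefore asymptotically more powerful, which is the claim.

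The main obstacle is not any single estimate but keeping the directions of the $\mathcal{O}$/$\Omega$ bounds consistent throughout: because $r<0$, both the inverse map from budget to sample count and the map $n\mapsto n^{r}$ reverse inequalities, so a lower bound on the competitor's threshold must be produced from an \emph{upper} bound on the samples it can afford. The secondary points that must be handled for rigour are (i) invoking consistency of $\hat{\Theta}_{ij}$ — available under the moment conditions used for Lemma~\ref{EJS:lemma:precision_matrix_norm_eigenvalues} — together with the monotonicity of the rejection region, so as to turn the threshold-rate comparison into an honest power comparison via the minimum detectable effect size, and (ii) using $n\gg p$ so that our cost is genuinely linear in $n$ rather than dominated by the $p^{3}$ or $p^{4}$ term. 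None of these steps is deep, but they are what separates the clean rate comparison from a fully rigorous argument.
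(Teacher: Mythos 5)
Your argument is correct and follows essentially the same route as the paper's own proof: convert the fixed budget $N$ into sample counts ($\asymp N$ for the linear-time test, $\mathcal{O}(N^{1/s})$ for the competitor), translate these into threshold rates $\mathcal{O}(N^{-1/2})$ versus $\Omega(N^{r/s})$, and compare exponents to obtain the condition $\frac{r}{s} > -\frac{1}{2}$. Your added care about the direction of the $\mathcal{O}/\Omega$ bounds, the trivial case $r \ge 0$, and the passage from threshold rates to power is a welcome tightening of the same argument rather than a different proof.
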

\begin{proof}
Our tests have computation $C_{\operatorname{Trace}}(n) \asymp  C_{\operatorname{Eig}}(n) \asymp n$.
The convergence of our test threshold is $\mathcal{O}(n^{-1/2})$ so for a fixed computational budget $N$, the test threshold is $\mathcal{O}(N^{-1/2})$.  For a test with computational cost $\Omega(n^s)$ and a computational budget $N$, $\mathcal{O}(N^{1/s})$ samples will be processed.  As $n^r$ is decreasing in $n$ for any consistent test, this implies that the test threshold is $\Omega(N^{r/s})$ which is asymptotically larger than $\mathcal{O}(N^{-1/2})$ whenever $\frac{r}{s}> -\frac{1}{2}$.
\end{proof}
\begin{corollary}
Any test that is superlinear must have a threshold that converges faster than $\mathcal{O}(n^{-1/2})$ to be asymptotically more powerful at a fixed computational budget than the tests proposed here.
\end{corollary}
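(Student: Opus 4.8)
The plan is to obtain the corollary as an immediate contrapositive of the preceding theorem. First I would pin down the meaning of "superlinear": a competing test with computational cost $\Omega(n^s)$ is superlinear precisely when $s>1$. The preceding theorem asserts that, at a fixed computational budget and in the regime $n\gg p$, our linear-time test is asymptotically more powerful than any test with cost $\Omega(n^s)$ and threshold decreasing as $\Omega(n^r)$ whenever $r/s > -1/2$. Negating the conclusion, a \emph{necessary} condition for the competing test to be asymptotically more powerful than ours at a fixed budget is $r/s \le -1/2$, i.e.\ $r \le -s/2$.

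Next I would substitute the superlinearity hypothesis $s>1$ into this inequality to get $r \le -s/2 < -1/2$. Hence the competing test's threshold must decrease at a rate $\Omega(n^r)$ with exponent $r$ strictly below $-1/2$, which is exactly the claim that it "converges faster than $\mathcal{O}(n^{-1/2})$". For completeness I would also record that our own test realizes $s=1$ together with the $\mathcal{O}(n^{-1/2})$ threshold (linear computation of $\hat\Sigma$ and of $\Cov(\hat\Sigma)$, plus the $\mathcal{O}(n^{-1/2})$ convergence of $\varepsilon$ from Theorem~\ref{EJS:thm:concentration_of_probability}), so that the comparison is against a genuine baseline and the corollary is not vacuous.

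There is no real technical obstacle here — the statement is a one-line consequence of the previous theorem — so the only thing demanding care is the bookkeeping of strict versus non-strict inequalities: I must use that "asymptotically more powerful" is phrased with the strict inequality $r/s>-1/2$, so that its failure gives $r/s\le -1/2$, and then that $s>1$ promotes $r\le -s/2$ to the strict bound $r<-1/2$. Accordingly I would state the corollary with this reading in mind, interpreting "faster than $\mathcal{O}(n^{-1/2})$" as an exponent strictly smaller than $-1/2$ rather than one merely equal to it.
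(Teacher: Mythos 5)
Your proposal is correct and follows essentially the same route the paper intends: the corollary is the immediate contrapositive of the preceding theorem (the paper leaves this step implicit, giving no separate proof), and your bookkeeping that $s>1$ together with $r/s\le -1/2$ forces $r<-1/2$ is exactly the required observation.
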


\section{Simulation Studies}

In this section, we demonstrate the soundness and effectiveness of the proposed test which enables one to answer if an edge is significantly present in a graph.  This is demonstrated both in terms of experiments on randomly generated Gaussian graphical models with known analytic precision matrices $\Theta$. In all experiments, we have used a significance upper bound of $\delta<0.05$.

In the simulation, we generated the data $\operatorname{X}$ from multivariate Gaussian or Laplace distributions with known analytic precision matrices $\Theta = \Sigma^{-1}$, such that $\Sigma_{ij} = X_i^T X_j / \left( \Vert X_i \Vert_2 \Vert X_j \Vert_2 \right)$ for all $(i,j) \in \{1,...,p \}$.
\begin{enumerate}
\item Multivariate Gaussian distribution
\begin{equation}
f(\operatorname{X},\Sigma) = \frac{1}{\sqrt{2\pi^p \vert \Sigma \vert}} \operatorname{exp} \left\lbrace -\frac{1}{2} \operatorname{X}^T \Theta \operatorname{X} \right\rbrace.
\end{equation}

\item Multivariate Laplace distribution \citep{gomez1998multivariate}
\begin{equation}
f(\operatorname{X},\Sigma) = \frac{p \Gamma \left( \frac{p}{2} \right) }{\pi^{\frac{p}{2}}  \Gamma \left( 1+\frac{p}{\omega} \right) 2^{1 + \frac{d}{\omega}} \vert \Sigma \vert^{- \frac{1}{2}}} \operatorname{exp} \left\lbrace \frac{1}{2} \left[ \operatorname{X}^T \Theta \operatorname{X} \right] ^\frac{\omega}{2} \right\rbrace.
\end{equation}
For $\omega = 1$, the multivariate Laplace distribution is derived.

\end{enumerate}

In Figure~\ref{EJS:fig:sample_vs_threshold}, we plot the sample size sample size for 101 regularly spaces values of  $n \in [10000, 1010000]$ versus the empirical threshold $t_{\operatorname{Eig}}$ and $t_{\operatorname{Trace}}$ (cf.\ Eq.~\eqref{EJS:eq:conservative_threshold}) of the test for different numbers of variables $p$.  We clearly distinguish that the threshold $t_{\operatorname{Eig}}$ based on the eigenvalue bound in Eq.~\eqref{EJS:eq:largesteigenvalueBound} is less than the threshold $t_{\operatorname{Trace}}$ based on the trace bound in Eq.~\eqref{EJS:eq:TraceBound} as predicted by Corollary~\ref{EJS:corollary:TraceBound}.  Furthermore, we see that there is a dependence on the size of the graph, with the bounds growing with the number of variables $p$.

In Figure~\ref{EJS:fig:different_distribution_boxplot_and_samplesize_showing_correct_comportement_thresh}, we illustrate the inequality of Weyl's Theorem (Thm~\ref{EJS:theorem:Weylstheorem}). We show the boxplots of the accurate values of eigenvalues of $\Theta$ obtained from the simulation study described. As expected, for a known precision matrix $\Theta$, the eigenvalues $1 / \alpha_i, i \in \{1,...p\}$ is bounded by the two error bounds $\varepsilon_{\operatorname{Eig}}$ and $\varepsilon_{\operatorname{Trace}}$. As the sample size $n$ increases, the two bounds become tighter. 

Then, we compare our edge detection test with the eigenvalue threshold and the trace threshold (\textit{edgeTest-eig} and \textit{edgeTest-tr}) to the Fisher test (\textit{FisherTest}) described in Section~\ref{EJS:subsec:TestingCI} for different multivariate distributions. The simulations are repeated 100 times to provide statistical significance results.

In Figure~\ref{EJS:figure:different_distribution_fpr_witherrorbar}, we plot the significance level of the test $\delta$ against the false positive rate, which refers to the probability of falsely rejecting the null hypothesis for $n = 100 000 $ and $p=6$. The diagonal dotted black line indicates that the significance level of different tests is equal to false positive rate.  Curves above the diagonal indicate that the test does not obey the semantics of (a bound on) the false positive probability, while a curve under the diagonal indicates that the proposed test is conservative but sound.  For the Gaussian distribution (Fig.~\ref{EJS:fig:gaussian_fpr_Normalized}), the conditional independence test is well calibrated while the proposed test is sound. However, for the Laplace distribution (Fig.~\ref{EJS:fig:laplace_fpr_Normalized}), the Fisher test is not valid while the proposed test is sound. Therefore, in Fig.~\ref{EJS:fig:gaussian_fpr_allentries_Normalized} and Fig.~\ref{EJS:fig:laplace_fpr_allentries_Normalized}, we plot the probability of detecting an edge for all entries on the precision matrix $\Theta$, i.e.\ when $|\Theta_{ij} - \hat{\Theta}_{ij} | > t$ for all $(i,j) \in \{1,...,p\}$. 

In Figure~\ref{EJS:fig:samplesize_vs_power_test_for_differents_distributions}, we compare the power of the tests by plotting the sample size for 101 regularly spaced values of  $n \in [10000, 1010000]$ against the power of the test. As expected, in Figs.~\ref{EJS:fig:gaussian_power_Normalized} and~\ref{EJS:fig:laplace_power_Normalized}, we show that the power of the test increases as the sample size $n$ is increased. In Figs.~\ref{EJS:fig:gaussian_power_Normalized_seeEffect} and~\ref{EJS:fig:histogram_entries_Prec}, we take into account an effect in the graph in the sense that we want to detect edge only when there is a high correlation between two edges in the graph, i.e.\ when $|\Theta_ij| > 0.5$ for all $(i,j) \in \{1,...p\}$.

\begin{figure}[ht!]
\centering
\setlength{\tabcolsep}{0.2em}
\renewcommand{\arraystretch}{0.6}
\begin{tabular}{cc}
\begin{sideways} $\qquad\qquad\qquad$ $\log(t_{\operatorname{Eig}})$ and $\log(t_{\operatorname{Trace}})$ \end{sideways} &\includegraphics[width=0.75\columnwidth]{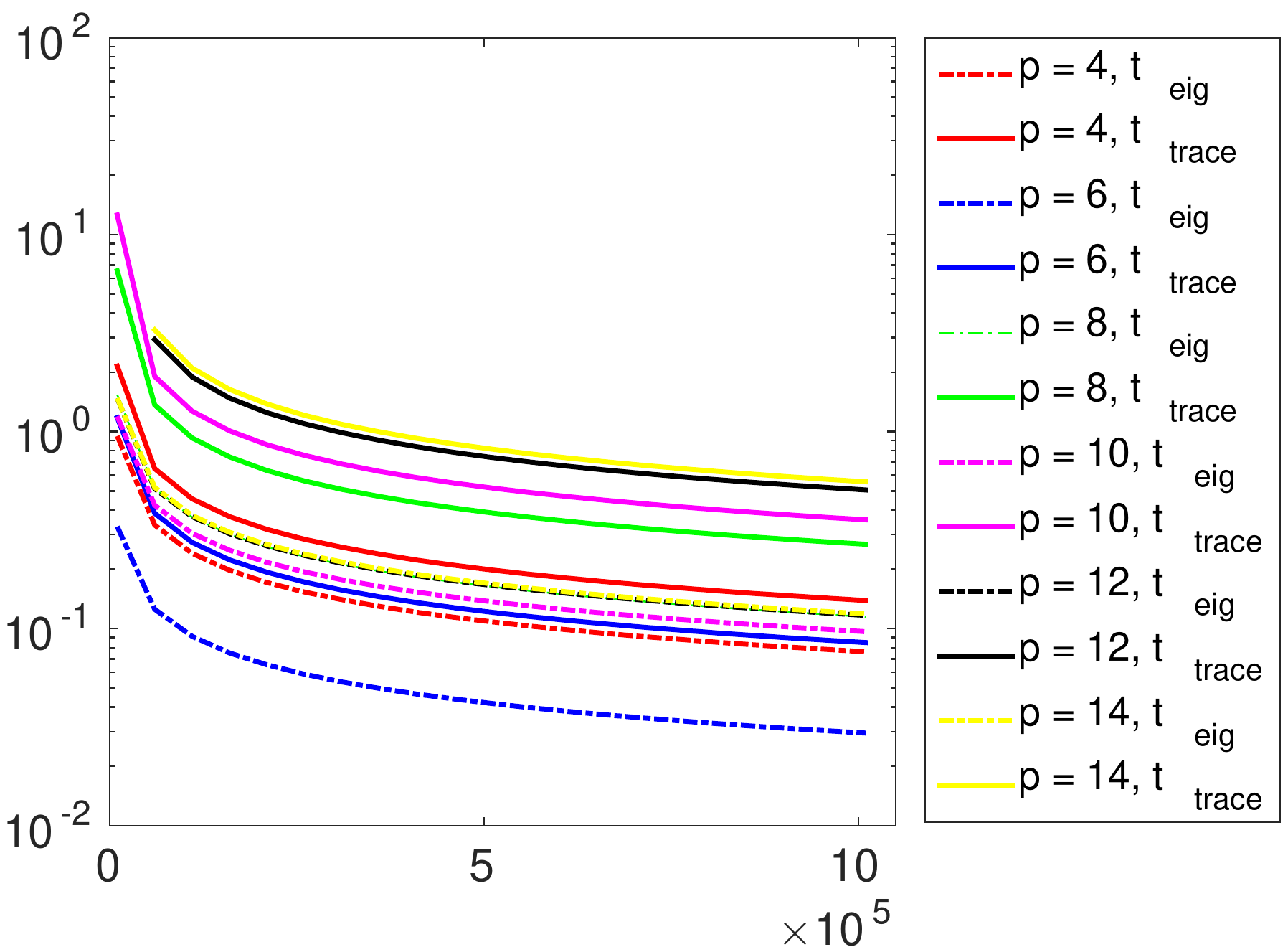} \\
 & Sample size $n$ \\
\end{tabular}
\caption{Illustration of the sample size for 101 regularly spaces values of  $n \in [10000, 1010000]$ versus the thresholds $t_{\operatorname{Eig}}$ and $t_{\operatorname{Trace}}$ (Eq.~\eqref{EJS:eq:conservative_threshold}).  We have plotted both the eigenvalue bound as well as the trace bound (cf.~Lemma~\ref{EJS:lemma:norm2_less_normfro}).  
}
\label{EJS:fig:sample_vs_threshold}
\end{figure}

\begin{figure*}
\centering
\begin{tabular}{cc}
\begin{subfigure}{.45\columnwidth}
\setlength{\tabcolsep}{0.1em}
\renewcommand{\arraystretch}{0.5}
\begin{tabular}{c}
 \includegraphics[width=\columnwidth]{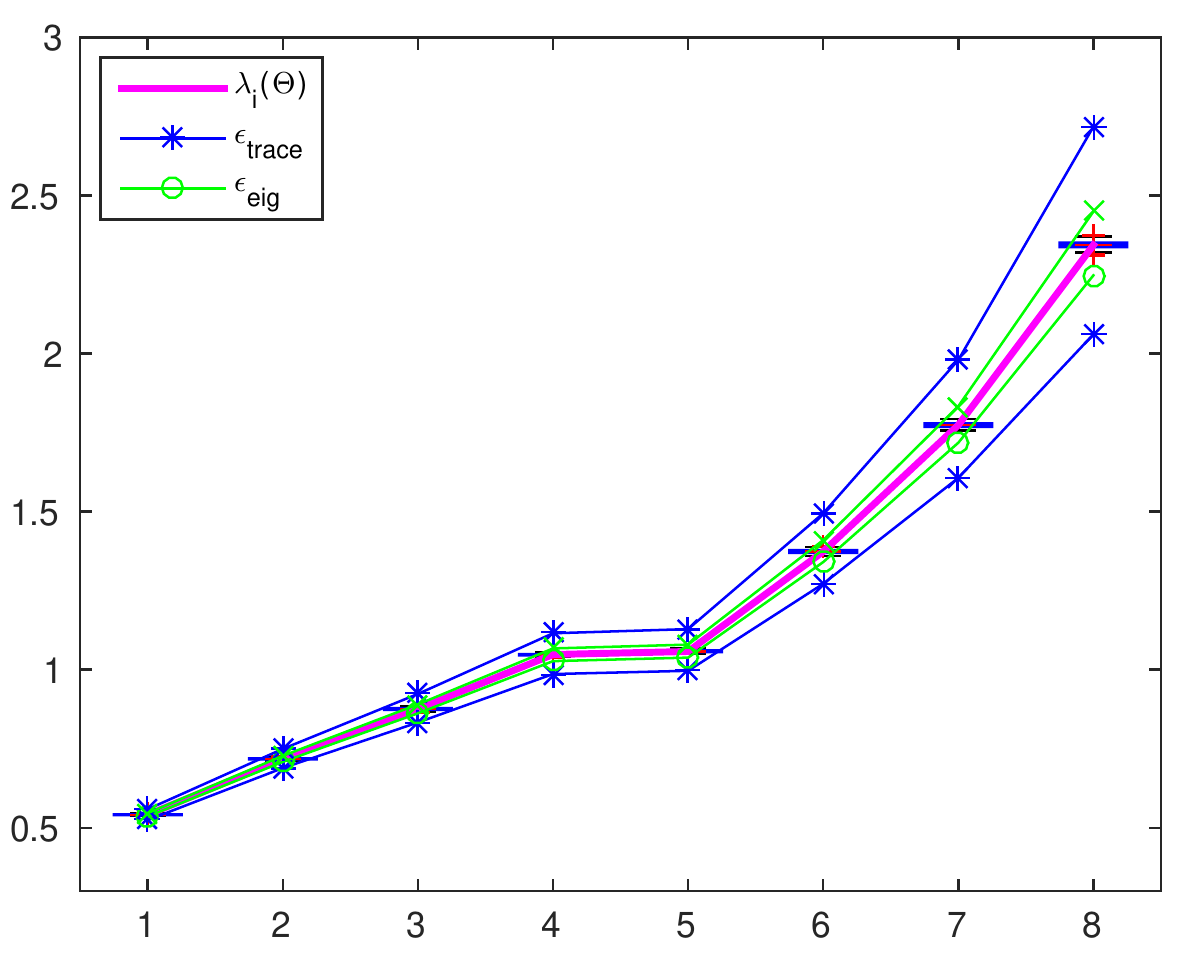} \\
 $1 / \alpha_i(\Theta)$, for $i\in\{1,...,8\}$.
\end{tabular}
\caption{Gaussian dist., $n = 100 000$, $p=6$. }\label{EJS:fig:gaussian_bloxplot_showing_correct_comportement_eigenvalues_m1e5}
\end{subfigure}\hfill
&
\begin{subfigure}{.45\textwidth}
\setlength{\tabcolsep}{0.1em}
\renewcommand{\arraystretch}{0.5}
\begin{tabular}{c}
 \includegraphics[width=\columnwidth]{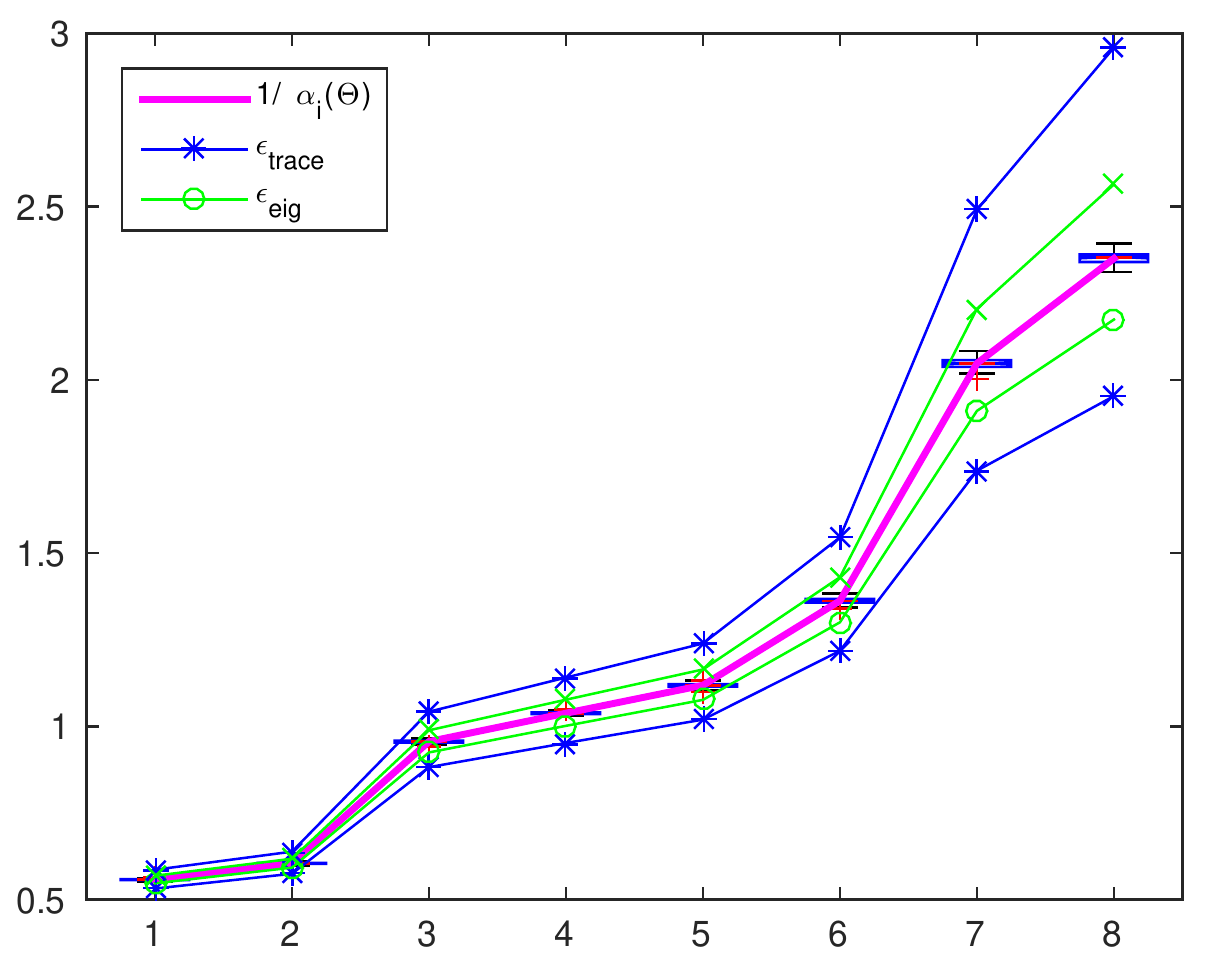} \\
 $1 / \alpha_i(\Theta)$, for $i\in\{1,...,8\}$.
\end{tabular}
\caption{Laplace dist., $n = 100 000$, $p=6$.}\label{EJS:fig:laplace_bloxplot_showing_correct_comportement_eigenvalues_m1e5}
\end{subfigure} 
\\
\begin{subfigure}{.45\columnwidth}
\setlength{\tabcolsep}{0.1em}
\renewcommand{\arraystretch}{0.5}
\begin{tabular}{c}
 \includegraphics[width=\columnwidth]{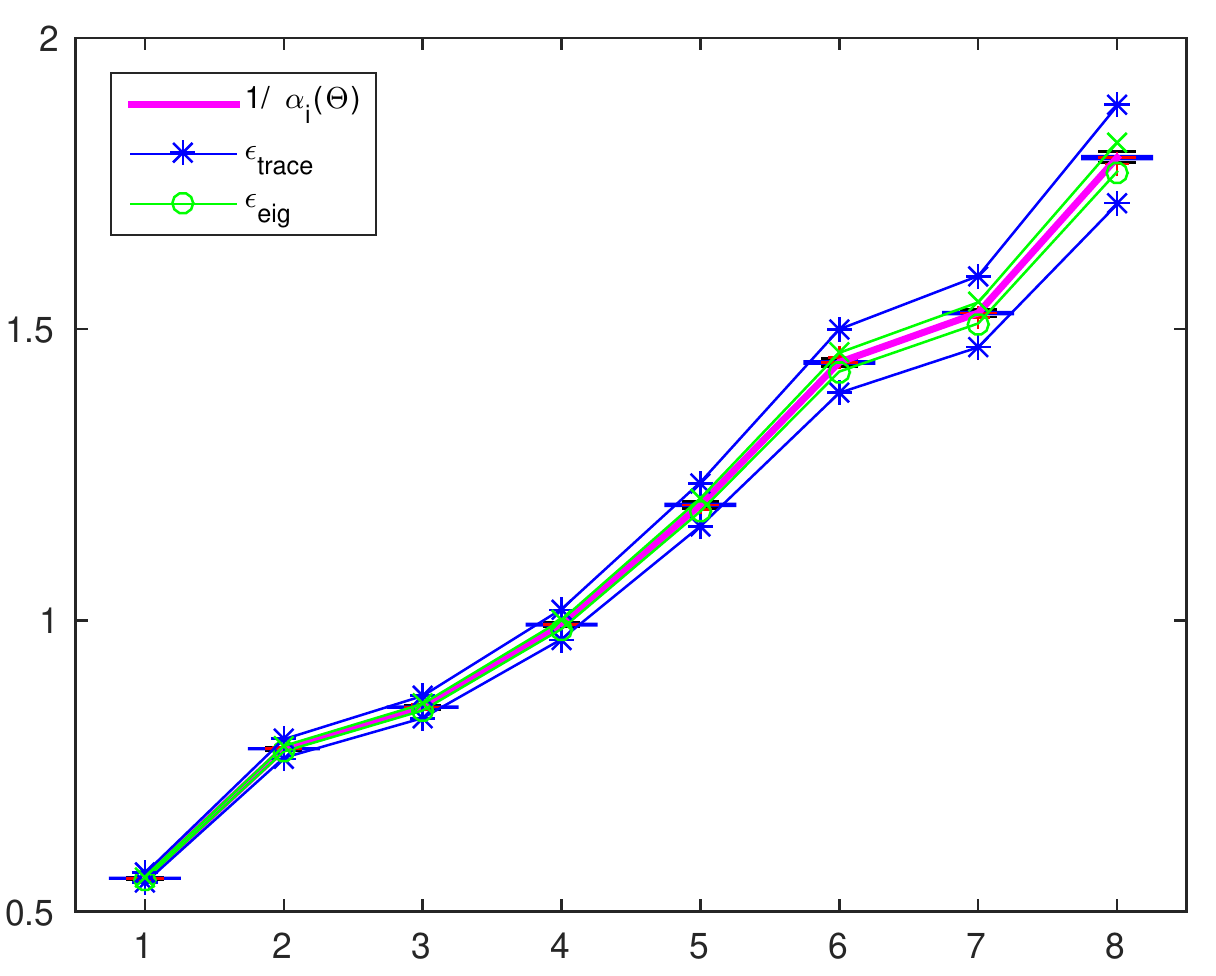} \\
 $1 / \alpha_i(\Theta)$, for $i\in\{1,...,8\}$.
\end{tabular}
\caption{ Gaussian dist, $n = 500 000$, $p=6$.}\label{EJS:fig:gaussian_bloxplot_showing_correct_comportement_eigenvalues_m5e5}
\end{subfigure}\hfill
&
\begin{subfigure}{.45\textwidth}
\setlength{\tabcolsep}{0.1em}
\renewcommand{\arraystretch}{0.5}
\begin{tabular}{c}
 \includegraphics[width=\columnwidth]{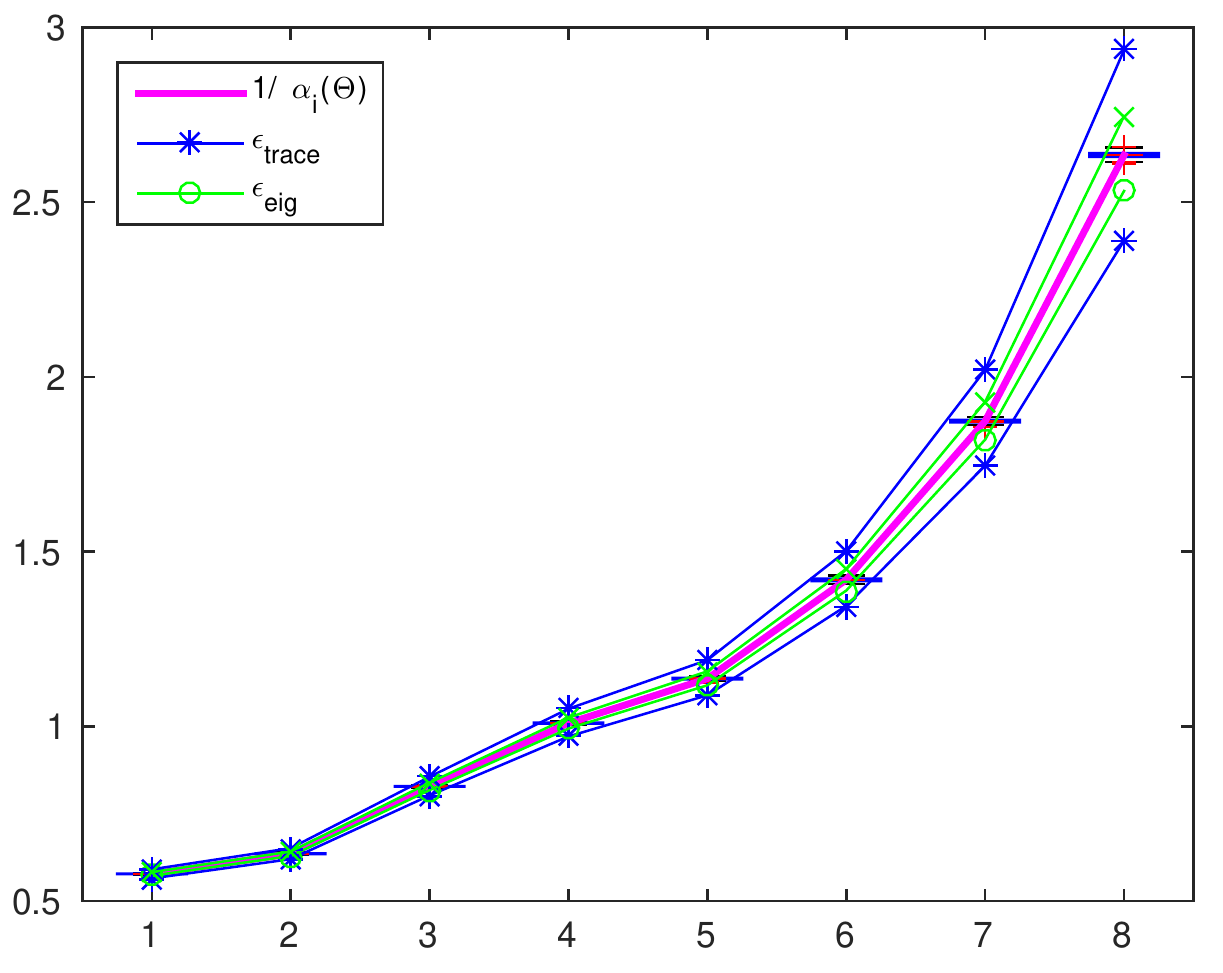} \\
 $1 / \alpha_i(\Theta)$, for $i\in\{1,...,8\}$.
\end{tabular}
\caption{ Laplace dist., $n = 500 000$, $p=6$.}\label{EJS:fig:laplace_bloxplot_showing_correct_comportement_eigenvalues_m5e5}
\end{subfigure}\par
\end{tabular}
\caption{For a known analytic precision matrix $\Theta$ of size $p=8$ and for two different sample sizes,  we show the boxplots of accuracy values of eigenvalues of 200 estimates matrices $\hat{\Theta}$ for the Gaussian (Figs~\ref{EJS:fig:gaussian_bloxplot_showing_correct_comportement_eigenvalues_m1e5}, \ref{EJS:fig:gaussian_bloxplot_showing_correct_comportement_eigenvalues_m5e5}) and Laplace (Figs~\ref{EJS:fig:laplace_bloxplot_showing_correct_comportement_eigenvalues_m1e5}, \ref{EJS:fig:laplace_bloxplot_showing_correct_comportement_eigenvalues_m5e5}) distributions with normalized data. In pink, we plot the true eigenvalue of $\Theta$ and in green and blue, we plot the upper and lower bound given by Weyl's theorem. As $n$ grows, we see that the bound more closely constrains the true eigenvalues of $\Theta$.}
\label{EJS:fig:different_distribution_boxplot_and_samplesize_showing_correct_comportement_thresh}
\end{figure*}

\begin{figure*}
\centering
\begin{tabular}{ccc}

\begin{subfigure}{.45\columnwidth}
\setlength{\tabcolsep}{0.1em}
\renewcommand{\arraystretch}{0.5}
\begin{tabular}{cc}
\begin{sideways} \qquad \qquad false positive rate \end{sideways} & \includegraphics[width=\linewidth]{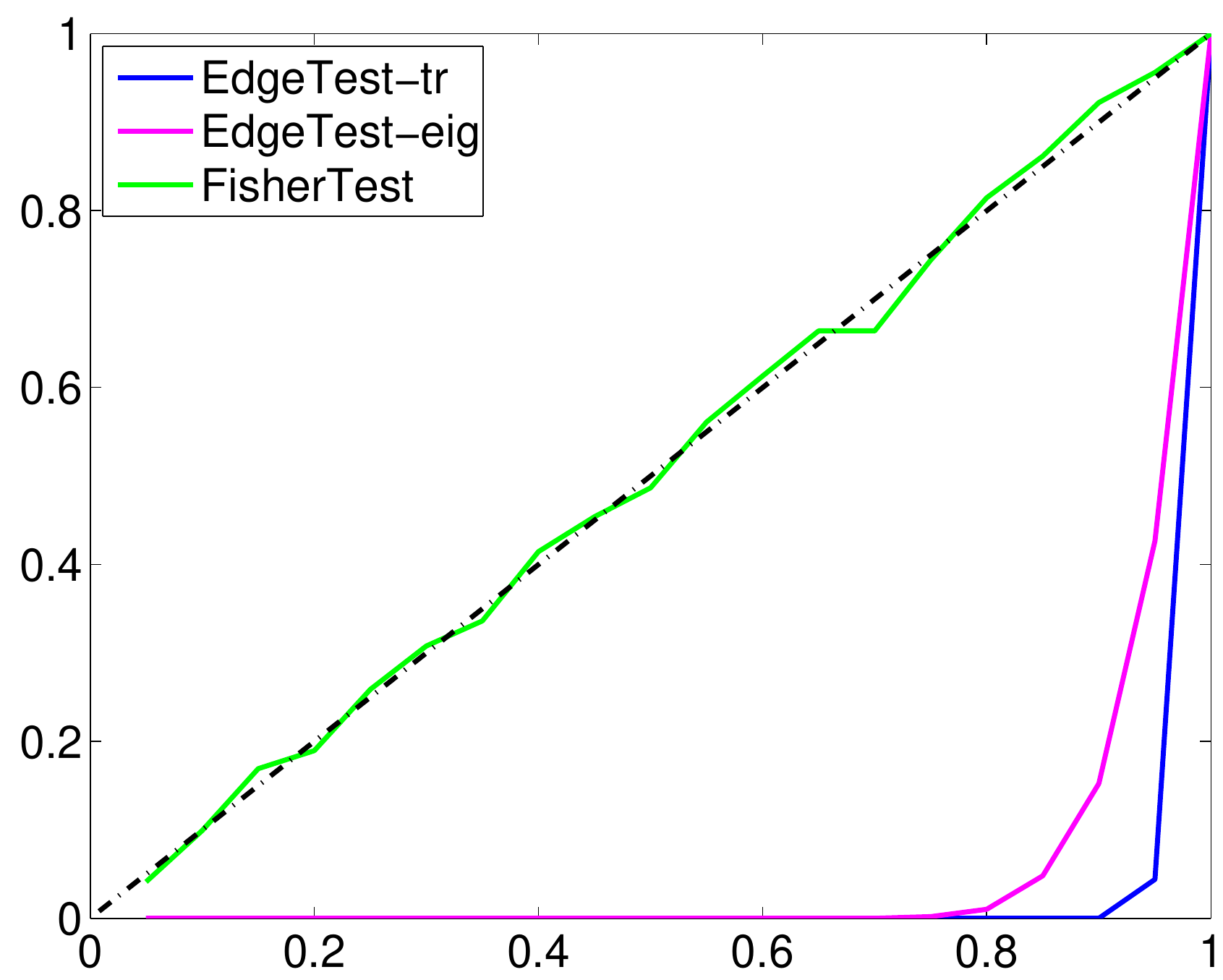} \\
& significance level $\delta$
\end{tabular}
\caption{Gaussian dist., $n = 100 000$, $p=6$.}\label{EJS:fig:gaussian_fpr_Normalized}
\end{subfigure}\hfill

&&

\begin{subfigure}{.45\textwidth}
\setlength{\tabcolsep}{0.1em}
\renewcommand{\arraystretch}{0.5}
\begin{tabular}{cc}
\begin{sideways} \qquad \qquad false positive rate \end{sideways}
& \includegraphics[width=\columnwidth]{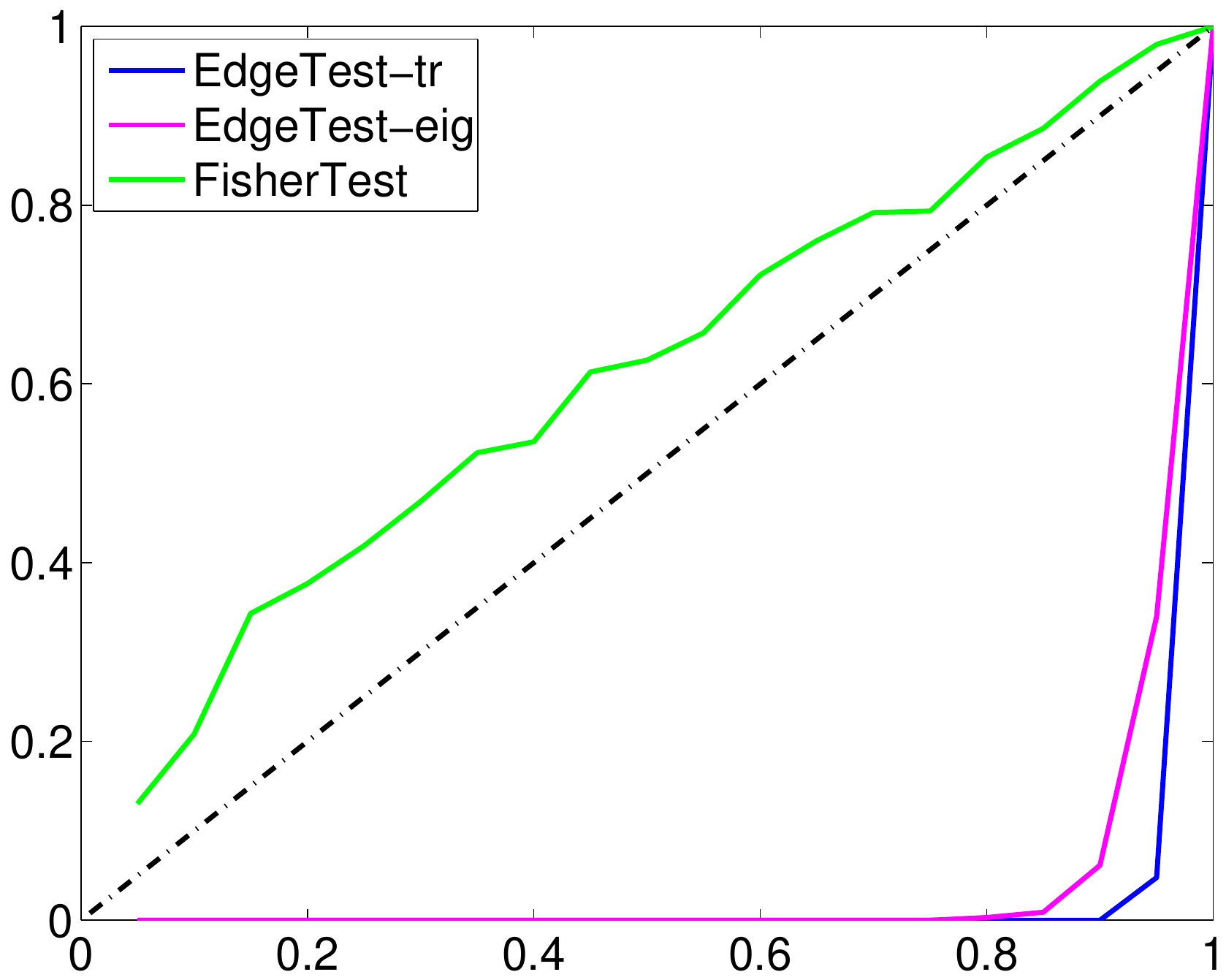} \\
& significance level $\delta$
\end{tabular}
\caption{Laplace dist., $n = 100 000$, $p=6$.}\label{EJS:fig:laplace_fpr_Normalized}
\end{subfigure}\par

\\

\begin{subfigure}{.45\columnwidth}
\setlength{\tabcolsep}{0.1em}
\renewcommand{\arraystretch}{0.5}
\begin{tabular}{cc}
&  \\
\begin{sideways} \qquad \qquad false positive rate \end{sideways} & \includegraphics[width=\linewidth]{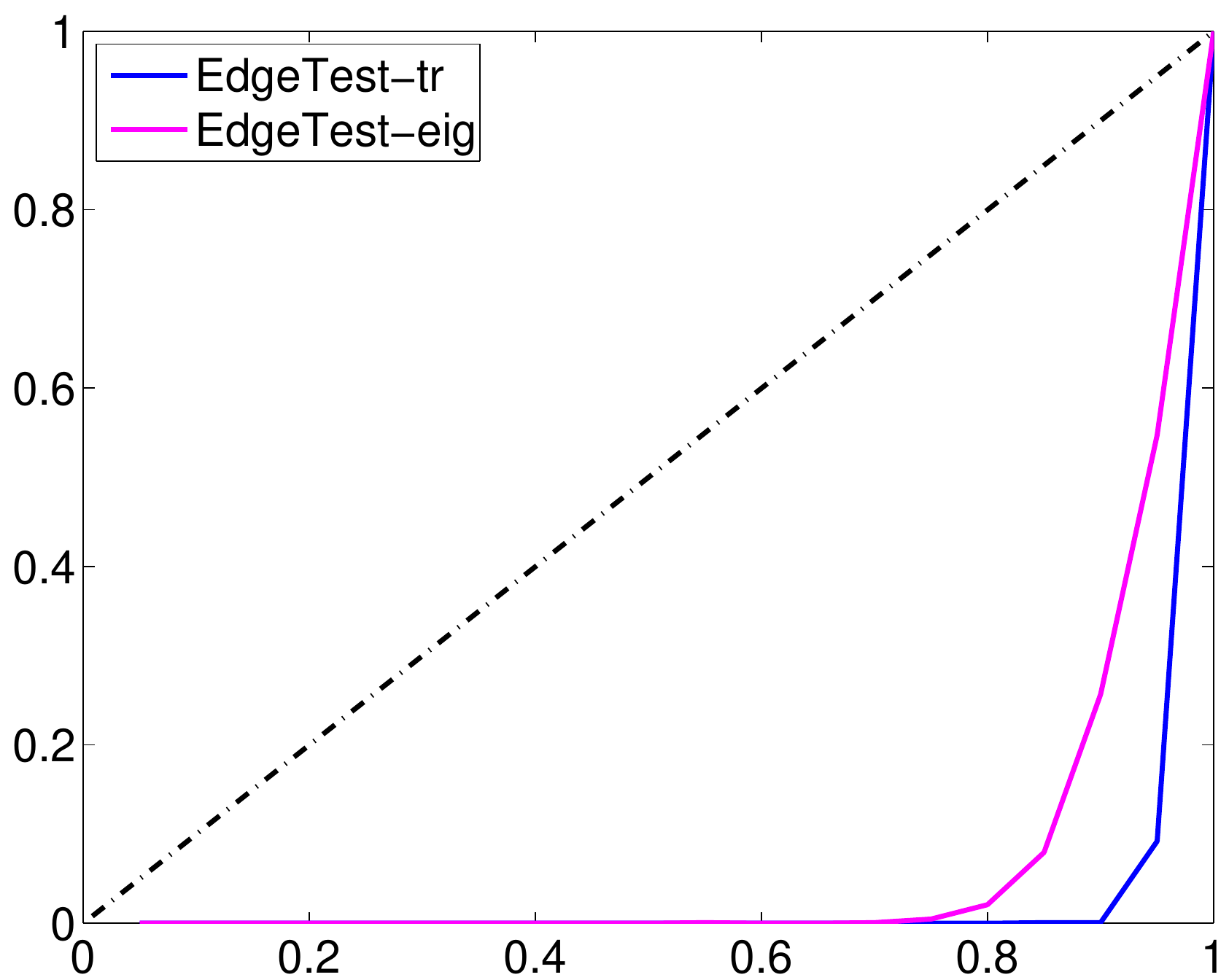} \\
& significance level $\delta$
\end{tabular}
\caption{Gaussian dist., $n= 100 000$, $p=6$.}\label{EJS:fig:gaussian_fpr_allentries_Normalized}
\end{subfigure}\hfill

&&

\begin{subfigure}{.45\textwidth}
\setlength{\tabcolsep}{0.1em}
\renewcommand{\arraystretch}{0.5}
\begin{tabular}{cc}
\begin{sideways} \qquad \qquad  false positive rate \end{sideways} & \includegraphics[width=\columnwidth]{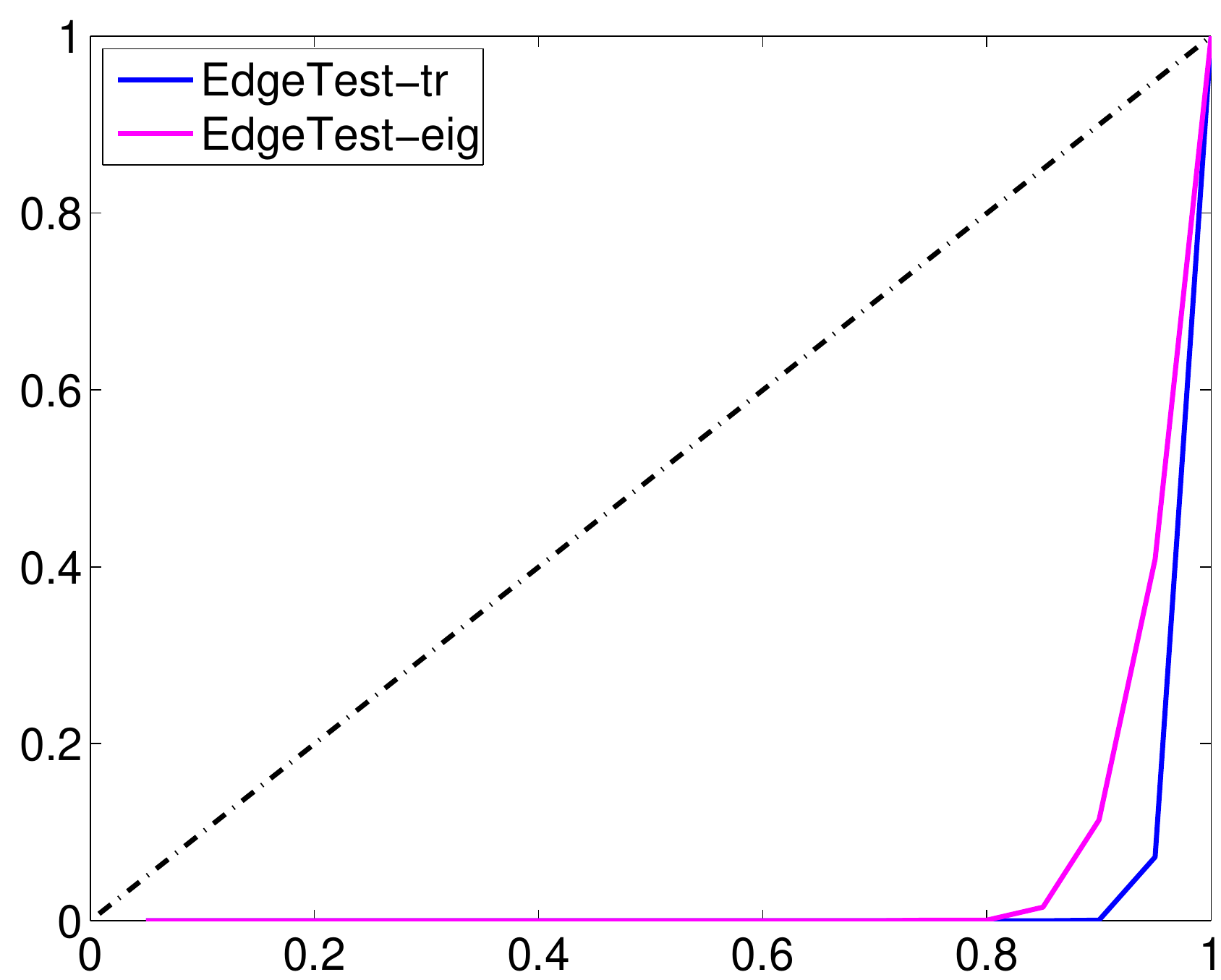} \\
& significance level $\delta$
\end{tabular}
\caption{Laplace dist., $n= 100 000$, $p=6$.}\label{EJS:fig:laplace_fpr_allentries_Normalized}
\end{subfigure}\par

\end{tabular}
\caption{We compare the false positive rate for the proposed test and the Fisher test. For the Gaussian distribution (Fig~\ref{EJS:fig:gaussian_fpr_Normalized}), the curves show that the Fisher test is well calibrated and that the proposed test is conservative (below the diagonal).  Furthermore, for the Laplace distribution (Fig~\ref{EJS:fig:laplace_fpr_Normalized}), the Fisher test does not obey the semantics of a bound on $\delta$ (the curve is above the diagonal) while by contrast, the proposed test remains conservative and sound. In Fig~\ref{EJS:fig:gaussian_fpr_allentries_Normalized} and Fig.~\ref{EJS:fig:laplace_fpr_allentries_Normalized}, we compare the rate of violating a bound on the true precision matrix as a function of $\delta$, i.e when $| \hat{\Theta}_{ij} - \Theta_{ij}| > t$ for an $(i,j)$ in $U(\Theta)$.}
\label{EJS:figure:different_distribution_fpr_witherrorbar}
\end{figure*}

\begin{figure*}
\centering
\begin{tabular}{ccc}

\begin{subfigure}{.45\columnwidth}
\setlength{\tabcolsep}{0.1em}
\renewcommand{\arraystretch}{0.5}
\begin{tabular}{cc}
\begin{sideways} \qquad \qquad \qquad  power \end{sideways} & \includegraphics[width=\linewidth]{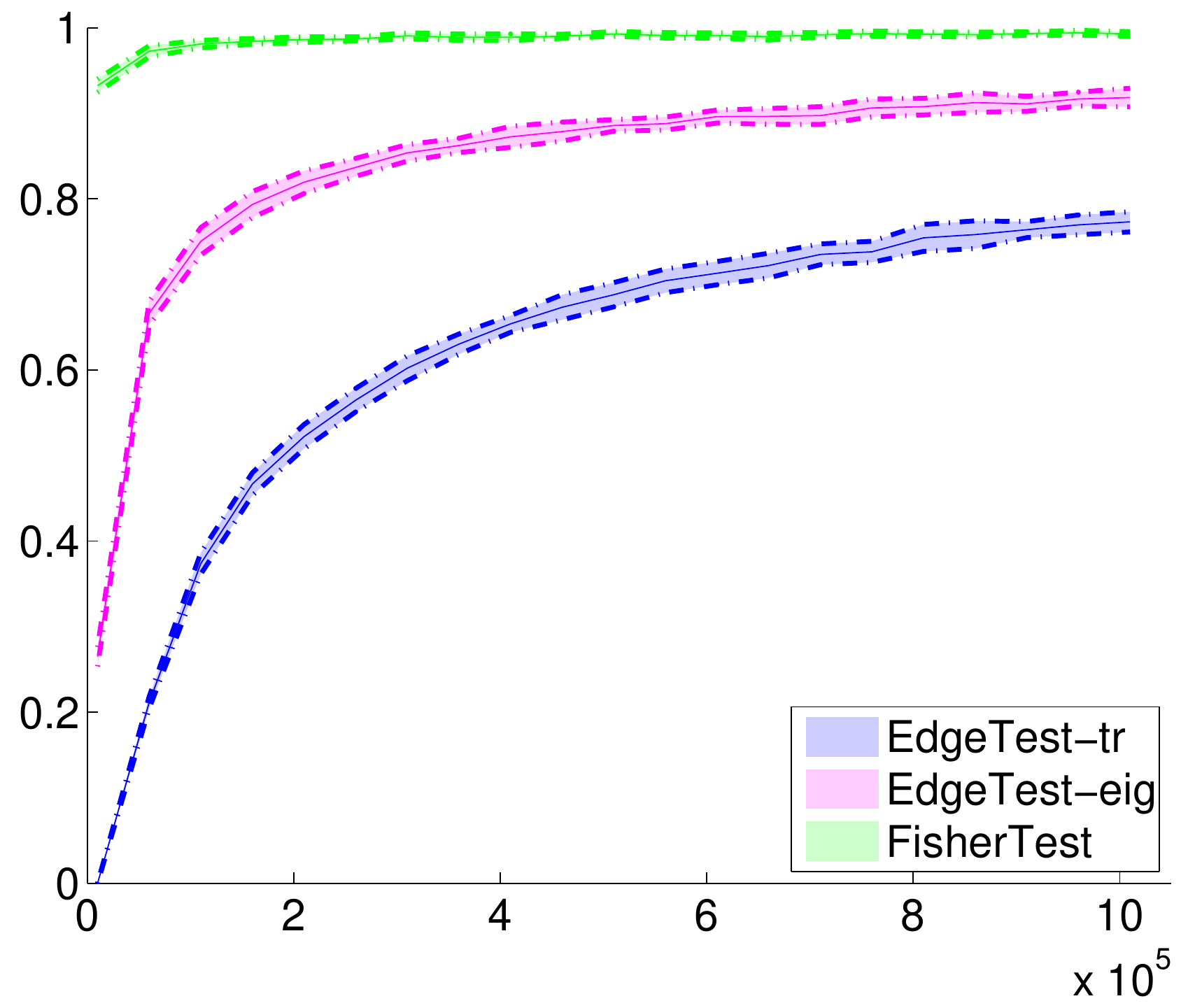} \\
& sample size $n$
\end{tabular}
\caption{Gaussian dist.}\label{EJS:fig:gaussian_power_Normalized}
\end{subfigure}\hfill

&&

\begin{subfigure}{.45\textwidth}
\setlength{\tabcolsep}{0.1em}
\renewcommand{\arraystretch}{0.5}
\begin{tabular}{cc}
\begin{sideways} \qquad \qquad \qquad power \end{sideways}
& \includegraphics[width=\columnwidth]{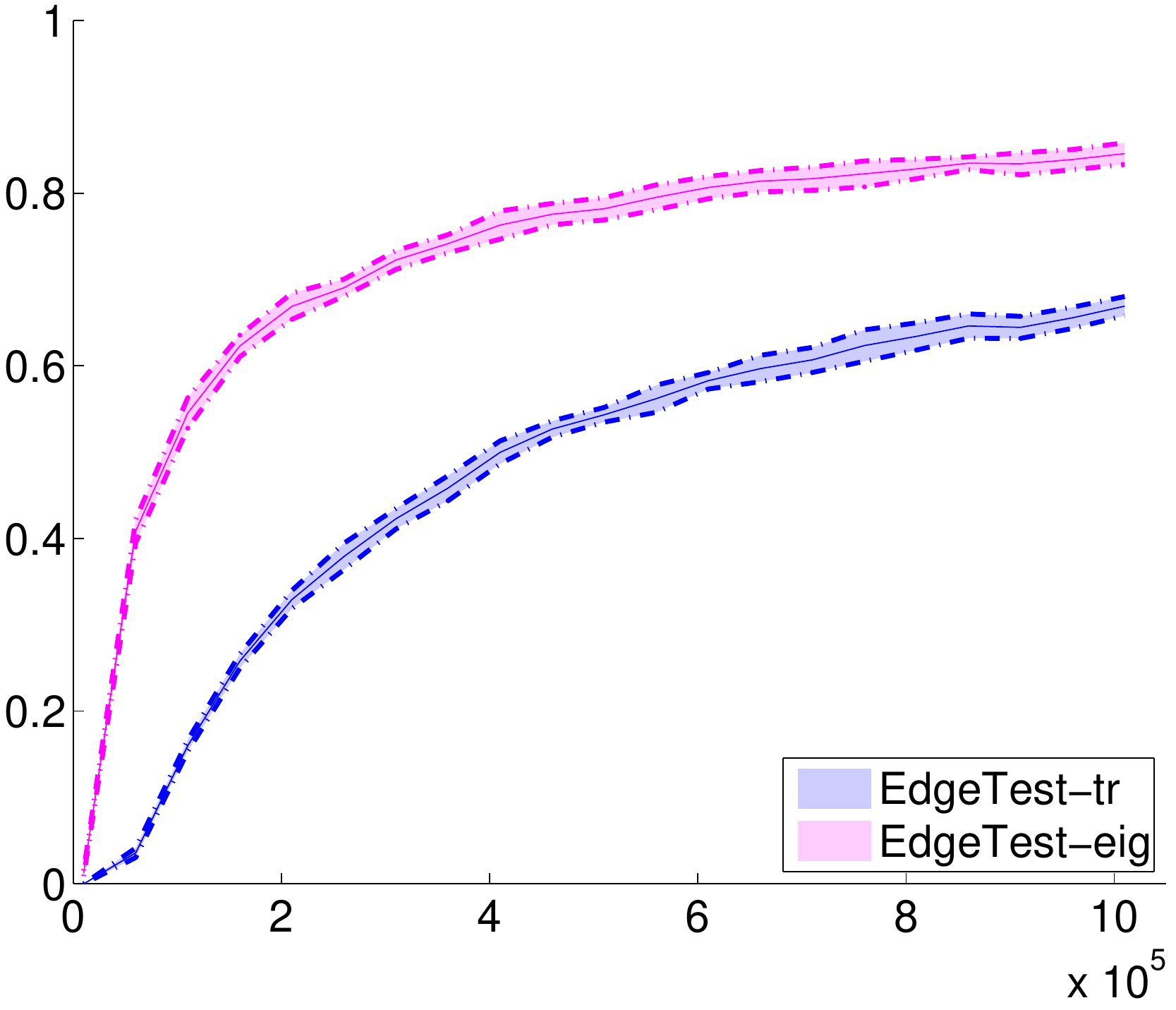} \\
& sample size $n$
\end{tabular}
\caption{Laplace dist.}\label{EJS:fig:laplace_power_Normalized}
\end{subfigure}\par

\\

\begin{subfigure}{.45\columnwidth}
\setlength{\tabcolsep}{0.1em}
\renewcommand{\arraystretch}{0.5}
\begin{tabular}{cc}
\begin{sideways} \qquad \qquad \qquad  power \end{sideways} & \includegraphics[width=\linewidth]{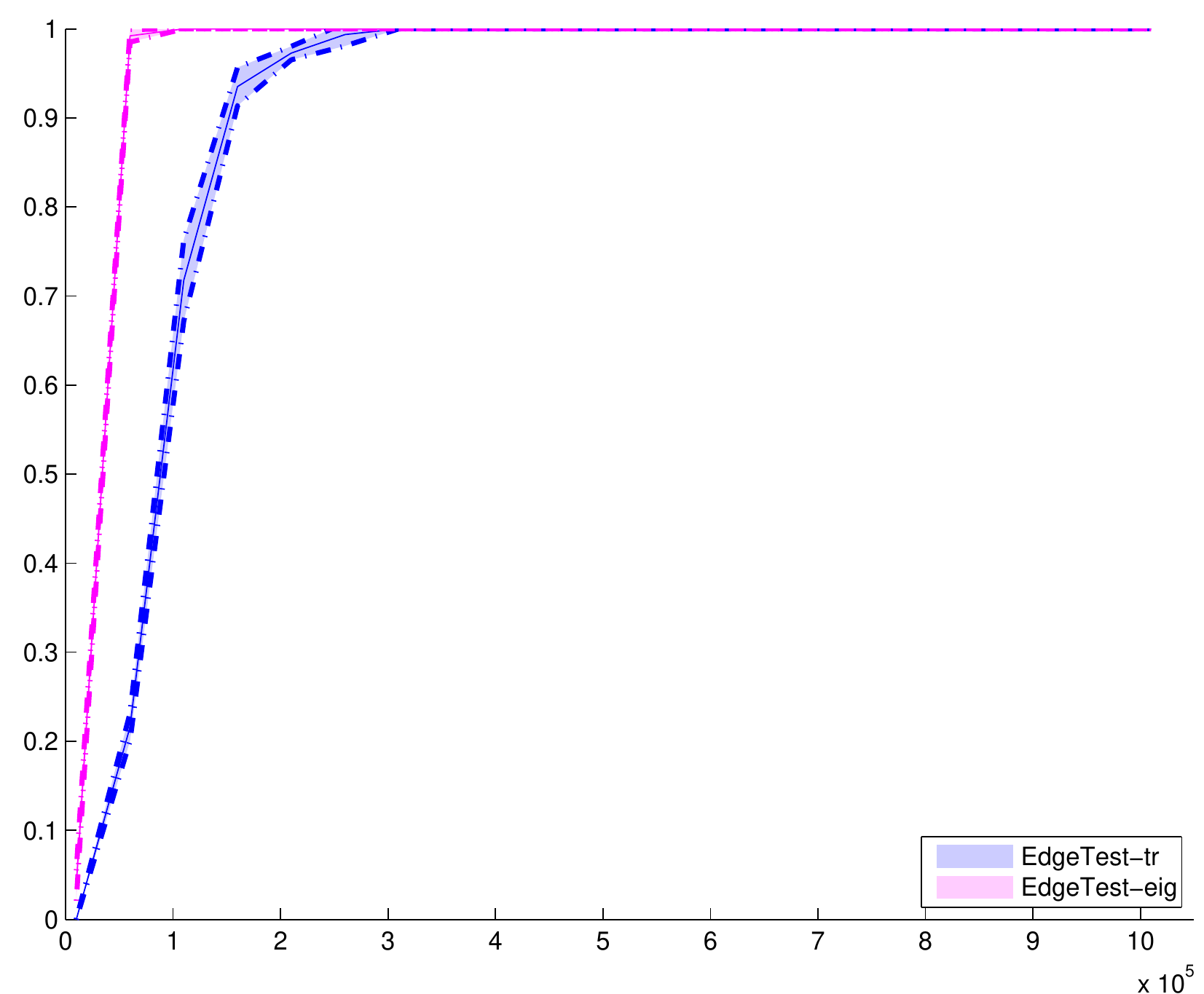} \\
& sample size $n$
\end{tabular}
\caption{Laplace dist.}\label{EJS:fig:gaussian_power_Normalized_seeEffect}
\end{subfigure}\hfill

&&

\begin{subfigure}{.45\textwidth}
\setlength{\tabcolsep}{0.1em}
\renewcommand{\arraystretch}{0.5}
\begin{tabular}{c}
\includegraphics[width=\columnwidth]{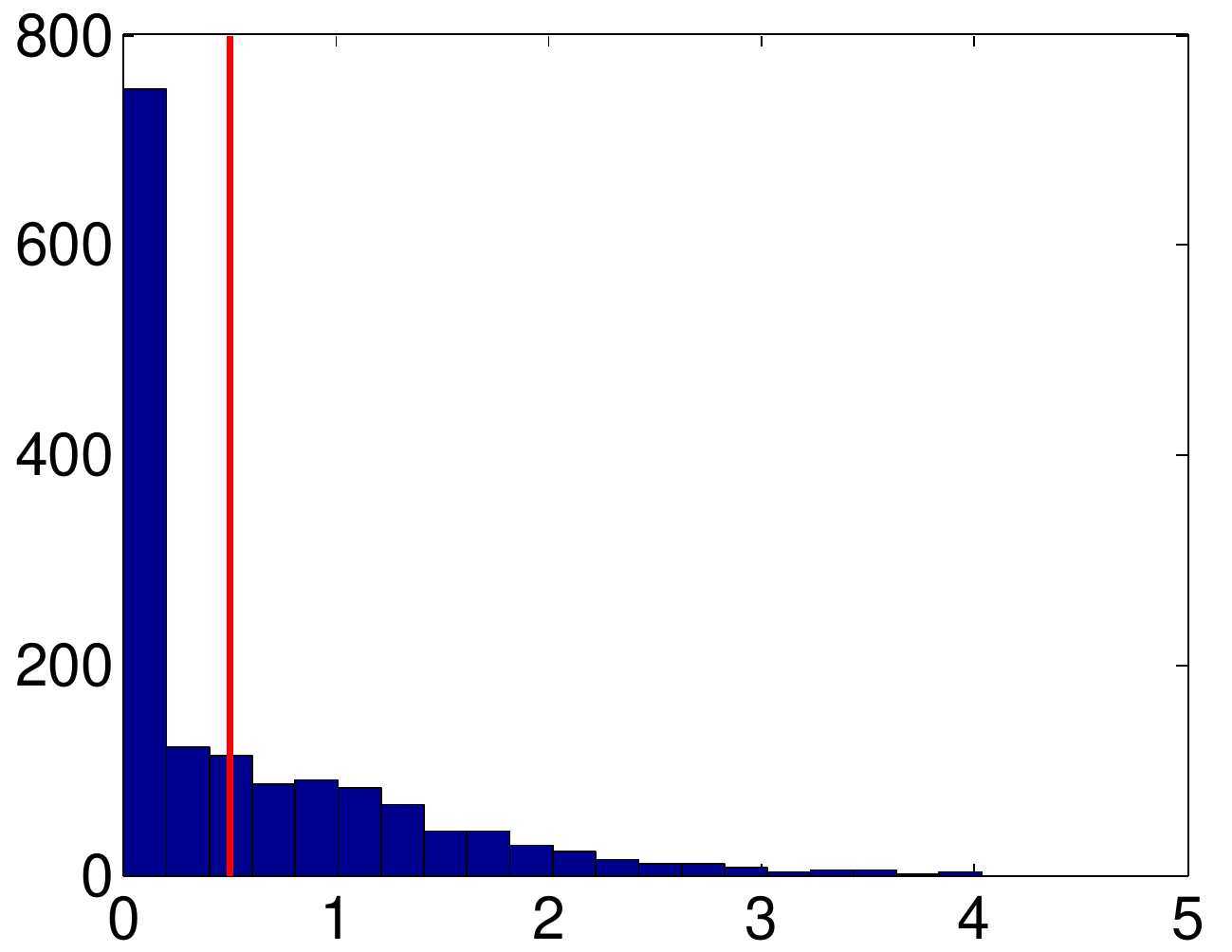} \\
\end{tabular}
\caption{Histogram of $|\Theta_{ij}|$ for 100 random graphs drawn from a Laplace distribution.}\label{EJS:fig:histogram_entries_Prec}
\end{subfigure}\par

\end{tabular}
\caption{As a function of the sample size $n$, we compare the power of the proposed test and the Fisher test for the Gaussian distribution (Fig.~\ref{EJS:fig:gaussian_power_Normalized}) and for the 
Laplace distribution (Fig.~\ref{EJS:fig:laplace_power_Normalized}). In Fig.~\ref{EJS:fig:gaussian_power_Normalized_seeEffect}, we plot the power of the proposed test when we reject the null hypothesis and when $|\Theta_{ij}| > 0.5$ (see histogram~\ref{EJS:fig:histogram_entries_Prec}). The shaded region indicates the standard error estimated from multiple repetitions.  The proposed tests are more generally applicable than the Fisher test, and have high power for edges with strong effects, i.e.\ those which are most important to detect and model.} \label{EJS:fig:samplesize_vs_power_test_for_differents_distributions}
\end{figure*}

\section{Discussion}
We have considered the problem of structure discovery for undirected graphical models in the context of non-Gaussian multivariate distributions, use a concentration bound for $U$-statistics, leading to two probabilistic bounds $t_{\operatorname{Eig}}$ and $t_{\operatorname{Trace}}$. As a baseline, we compare to the Fisher test which is only correct under the assumption of a Gaussian distribution. As shown in the simulation studies, for non-Gaussian distributions, the Fisher test is not calibrated, while alternatively, the proposed test is conservative
Among the two probabilistic bounds presented here, the eigenvalue bound is preferred when availability of data is more limited than computation, while $t_{\operatorname{Trace}}$ is a competitive test when we have a fixed computational budget $N$.

\section{Conclusion}

In this work, we have constructed a conservative threshold on the absolute value of the precision matrix as a hypothesis test of the presence of an edge in a graphical model.  For a wider range of distributions, we have developed a threshold based on a $U$-statistic empirical estimator of the covariance matrix.  This is achieved by probabilistically bounding the distortion of the true covariance matrix, and then using this fixed bound in conjunction with Weyl's theorem to bound the distortion of the precision matrix.  These bounds are applicable to the quantification of uncertainty in the magnitude of an effect between variables as measured by the value of the precision matrix, and can also be used to construct a hypothesis test of whether an edge is present in a graphical model by testing for significant deviations from zero.  The resulting test asymptotically converges at the same $\frac{1}{\sqrt{n}}$ rate as the $U$-statistic, which we have additionally verified empirically.  We have shown two alternative thresholds, one based on the largest eigenvalue of $\Cov(\hat{\Sigma})$, and a second based on the trace of $\Cov(\hat{\Sigma})$, which strictly upper bounds the first.  Given arbitrary computation, we clearly favor the eigenvalue based approach, but for larger graphs with a large number of samples, the tighter threshold yields a test with computational complexity $\mathcal{O}(np^4)$ (due to the requirement of estimating $\mathcal{O}(p^4)$ entries of $\Cov(\hat{\Sigma})$ each of which has linear complexity) while the second has reduced complexity $\mathcal{O}(np^2 + p^3)$ as we need only compute the $\mathcal{O}(p^2)$ diagonal elements of $\Cov(\hat{\Sigma})$.  We have shown that this reduced complexity makes the trace bound competitive when computation rather than data availability is the restrictive factor.

The construction of the test threshold has upper bounded the $\| \cdot \|_2$ matrix norm with the Frobenius norm, which leads to a systematic overestimation of the threshold proportional to the size of the graph.  This is clearly demonstrated in the simulation study section.  We have taken the approach of probabilistically bounding the distortion of the covariance, and then, given this fixed bound, uniformly bounding the distortion of the precision matrix.  It may be of interest to consider a non-uniform bound to reduce the growth of the bound in the number of variables.

Simulation studies show that the test successfully recovers the structure of undirected graphical models given a sufficient number of samples.  The sample complexity increases with the size of the smallest non-zero entry of $\Theta$ as well as with the number of variables in the model.  Figure~\ref{EJS:fig:sample_vs_threshold} demonstrates that the bound tends to grow with the size of the graph for a fixed sample size, while the size of the non-zero entries follows the same distribution in these experiments.  Nevertheless, large values of $\Theta$ can be recovered with significance even in these cases.  The fact that the test was able to compute correct results even for $n=10^6$ and $p = 14$ in a short time demonstrates the scalability and soundness of the approach.

\appendix
\section{Derivation of the covariance of the $U$-statistics for the covariance matrix} \label{EJS:sec:appendix_proof_covcov}

In this appendix, we show the details of the derivation of Theorem~\ref{EJS:theorem:covariance_covariance_ustatistic}. We derive low variance, unbiased estimates of the covariance between two $U$-statistics estimates $\hat{\Sigma}_{ij}$ and $\hat{\Sigma}_{kl}$, where $(i,j,k,l)$ range over each of the $d$ variates in a covariance matrix $\hat{\Sigma}$. We note $h$ and $g$ the corresponding kernel of order 2 for $\hat{\Sigma}_{ij}$ and  $\hat{\Sigma}_{kl}$, where
\begin{align}
h(u_1,u_2) &= \dfrac{1}{2} \left( X_{i_1} - X_{i_2} \right)  \left( X_{j_1} - X_{j_2} \right) \hspace{.05pt}, \mbox{with } u_r=(X_{i_r},X_{j_r})^T \\
g(v_1,v_2) &= \dfrac{1}{2} \left( X_{k_1} - X_{k_2} \right)  \left( X_{l_1} - X_{l_2} \right) \hspace{.05pt}, \mbox{with } v_r=(X_{k_r},X_{l_r})^T.
\end{align}
Then, the covariance $\Cov(\hat{\Sigma}_{ij},\hat{\Sigma}_{kl}) $ for the two $U$-statistics $\hat{\Sigma}_{ij}$ and $\hat{\Sigma}_{kl}$ is 
\begin{align} \label{EJS:appendix:eq:CovCov}
\Cov(\hat{\Sigma}_{ij},\hat{\Sigma}_{kl}) 
&= \binom{m}{2}^{-1} \left( 2 (m-2) \zeta_1 + \zeta_2 \right) \\
&= \binom{m}{2}^{-1} \left( 2 (m-2) \zeta_1  \right) + \mathcal{O}(m^{-2}) \nonumber
\end{align}
where $\zeta_1 = \Cov \left(  \E_{u_2}[h(u_1,u_2)], \E_{v_2}[g(v_1,v_2)] \right) $.

Depending on the equality and inequality of these four index variables, the empirical covariance estimate takes a different kernel form.  We have employed a computer assisted proof to determine that there are seven different forms and that each of the unique $\binom{p^2-\binom{p}{2}}{2}$ entries in $\operatorname{Cov}(\hat{\Sigma})$ (cf.\ Eq.~\eqref{EJS:eq:Covariance_of_Ustat_estimator}) can be mapped to one of these seven cases by a simple variable substitution.

In the sequel, we first describe the algorithm that determines the seven cases (Sec.~\ref{EJS:appendix:subsec:algorithm_forsevencases}), we derive empirical estimators for each of these seven cases (Sec.~\ref{EJS:appendix:subsec:derivation_sevencases}) and show that in all cases we have linear computation time in the number of samples (Sec.~\ref{EJS:appendix:subsec:proof_linear_computation_time}).

\subsection{Description of the algorithm providing the seven cases} \label{EJS:appendix:subsec:algorithm_forsevencases}

We formally described the algorithm that provided us 7 cases for the derivation of $\Cov(\hat{\Sigma}_{ij},\hat{\Sigma}_{kl})$ of Theorem~\ref{EJS:theorem:covariance_covariance_ustatistic}, where $\left(i,j,k,l \right)$ vary over the set of $d$ variables. 

\begin{description}
\item[Enumeration] 
First, we enumerate all configurations of $\Cov(\hat{\Sigma}_{ij},\hat{\Sigma}_{kl})$, which can be encoded as a non-unique assignment matrix of variables $i,j,k,l$ to instantiated variables $\left(a,b,c,d \right)$. For a fixed assignment of $i$ to variable $a$, we can list all possible assignments of the 3 remaining variables  $\left(j,k,l\right)$ to any $\left(a,b,c,d\right)$.  Na\"{i}vely, we have $4^3$ possible assignments, but many of them will be equivalent by variable substitution.  To test whether two forms are equivalent, it is sufficient to test a reduced form for equality.
\item[Reduced Form] We map a variable assignment to a reduced form by re-labeling variables sorted by the number of occurrences, which reduces the number of possible matches up-to non-uniqueness of the mapping due to equal numbers of variable occurrences.  This ambiguity is then resolved by testing for symmetries.  
\item[Symmetry] 
Symmetry of the covariance operator brings the following equally that we take into consideration in testing for equivalence:
\begin{align}
\Cov(\hat{\Sigma}_{ij},\hat{\Sigma}_{kl}) &= 
\Cov(\hat{\Sigma}_{kl},\hat{\Sigma}_{ij}) =       
\Cov(\hat{\Sigma}_{ij},\hat{\Sigma}_{lk}) =        
\Cov(\hat{\Sigma}_{lk},\hat{\Sigma}_{ij})  \\            
&= \Cov(\hat{\Sigma}_{lk},\hat{\Sigma}_{ji}) =        
\Cov(\hat{\Sigma}_{ji},\hat{\Sigma}_{kl}) =         
\Cov(\hat{\Sigma}_{ji},\hat{\Sigma}_{lk}) \nonumber.
\end{align}
\end{description}
The algorithm outputs each variable assignment that is not equivalent by variable substitution to any previously enumerated assignment. 
Open source code for the computer assisted proof is available at \projecturl.

The seven different cases are enumerated in Table~\ref{EJS:table:enumeration7cases}.

\begin{table}[h!]
\begin{tabular}{|r|l|l|}
\hline
Cases & Indices & Correspondence \\
\hline
1 & $i \ne j,k,l$; $j \ne k,l$; $k \ne l$  & $\Cov(\hat{\Sigma}_{ij},\hat{\Sigma}_{kl})$  \\
2 & $i=j$; $j \ne k,l$; $k = l$ &  $\Cov(\hat{\Sigma}_{ii},\hat{\Sigma}_{kk})$  \\
3 & $i=j$; $j \ne k,l$; $k \ne l$ & $\Cov(\hat{\Sigma}_{ii},\hat{\Sigma}_{kl})$  \\
4 & $i=k$; $j \ne i,k,l$; $k \ne l$ & $\Cov(\hat{\Sigma}_{ij},\hat{\Sigma}_{il})$  \\
5 & $i=k$; $i \ne j$; $j=l$; & $\Var(\hat{\Sigma}_{ij})$  \\
6 & $i=j=k$; $i \ne l$ & $\Cov(\hat{\Sigma}_{ii},\hat{\Sigma}_{il})$  \\
7 & $i=j,k,l$ & $ \Var(\hat{\Sigma}_{ii})$ \\
\hline
\end{tabular}
\caption{Enumeration and correspondence of the seven cases.} \label{EJS:table:enumeration7cases}
\end{table}

\subsection{The seven exhaustive cases} \label{EJS:appendix:subsec:derivation_sevencases}

We now derive linear-time finite-sample estimates of the covariance for each of the seven cases.

\paragraph{Notation}
\begin{itemize}
\item[-] $\Xbar{XYUV} = \E [XYUV]$
\item[-] $\Xbar{XYZ} = \E [XYZ]$
\item[-] $\Xbar{XY} = \E [XY]$
\item[-] $\Xbar{X} = \E [X]$
\item[-] $\Xbar{XYUV} \myspace \Xbar{X} = \E [XYUV] \times \E [X]$
\end{itemize}

\subsubsection{Case 1: $i \ne j,k,l$; $j \ne k,l$; $k \ne l$}

The kernels are 

\noindent\begin{minipage}{.5\linewidth}
\begin{align*}
&h(u_1,u_2) = \dfrac{1}{2} \left( X_{i_1} - X_{i_2} \right) \left( X_{j_1} - X_{j_2} \right); \\
&\E_{u_2}[h(u_1,u_2)] = \dfrac{1}{2} \left( X_{i_1} - \Xbar{X_i} \right) \left( X_{j_1} -\Xbar{X_j} \right);
\end{align*}
\end{minipage}\hfill
\begin{minipage}{.5\linewidth}
\begin{align*}
&g(v_1,v_2) = \dfrac{1}{2} \left( X_{k_1} - X_{k_2} \right)  \left( X_{l_1} - X_{l_2} \right) \\
&\E_{u_2}[g(v_1,v_2)] = \dfrac{1}{2} \left( X_{k_1} - \Xbar{X_k} \right)  \left( X_{l_1} - \Xbar{X_l} \right)
\end{align*}
\end{minipage}

\begin{align}
\zeta_1 &= \Cov \left[ 
\dfrac{1}{2} \left( X_{i_1} - \Xbar{X_i} \right) \left( X_{j_1} -\Xbar{X_j} \right), 
\dfrac{1}{2} \left( X_{k_1} - \Xbar{X_k} \right)  \left( X_{l_1} - \Xbar{X_l} \right)
\right] \\
&= \dfrac{1}{4} \biggl\{ \Cov \left[ 
X_{i_1} X_{j_1} - \Xbar{X_i} X_{j_1} - X_{i_1} \Xbar{X_j} ;
X_{k_1} X_{l_1} - \Xbar{X_k} X_{l_1} - X_{k_1}  \Xbar{X_l}
\right]
\biggr\} \nonumber \\
&= \dfrac{1}{4} \biggl\{ \E_{u_1} \big[ 
X_{i_1} X_{j_1} X_{k_1} X_{l_1} - \Xbar{X_i} X_{j_1} X_{k_1} X_{l_1} -  X_{i_1} \Xbar{X_j}  X_{k_1} X_{l_1} \nonumber \\
&\qquad\qquad\qquad - X_{i_1} X_{j_1} \Xbar{X_k} X_{l_1} + \Xbar{X_i} X_{j_1} \Xbar{X_k} X_{l_1} + X_{i_1} \Xbar{X_j} \myspace \Xbar{X_k}  X_{l_1} \nonumber \\
&\qquad\qquad\qquad - X_{i_1} X_{j_1} X_{k_1}  \Xbar{X_l} + \Xbar{X_i} X_{j_1} X_{k_1}  \Xbar{X_l} + X_{i_1} \Xbar{X_j} X_{k_1}  \Xbar{X_l} \big] \nonumber \\
& \qquad - \E_{u_1} \left[ X_{i_1} X_{j_1} - \Xbar{X_i} X_{j_1} - X_{i_1} \Xbar{X_j} \right] 
\E_{u_1} \left[ X_{k_1} X_{l_1} - \Xbar{X_k} X_{l_1} - X_{k_1}  \Xbar{X_l} \right]
\biggr\} \nonumber \\
&= \dfrac{1}{4} \biggl\{ 
\Xbar{X_{i} X_{j} X_{k} X_{l}} - \Xbar{X_i} \myspace \Xbar{X_{j} X_{k} X_{l}} -  \Xbar{X_j} \myspace  \Xbar{X_{i}  X_{k} X_{l}} \nonumber \\
&\qquad - \Xbar{X_k} \myspace  \Xbar{X_{i} X_{j}  X_{l}} + \Xbar{X_i} \myspace  \Xbar{X_k} \myspace  \Xbar{ X_{j}  X_{l}} + \Xbar{X_j} \myspace  \Xbar{X_k} \myspace  \Xbar{X_{i}  X_{l}} \nonumber \\
&\qquad - \Xbar{X_{i} X_{j} X_{k}} \myspace \Xbar{X_l} + \Xbar{X_i} \myspace \Xbar{X_l} \myspace \Xbar{ X_{j} X_{k}}   + \Xbar{X_j} \myspace \Xbar{X_l} \myspace \Xbar{X_{i}  X_{k}}    \nonumber \\
&\qquad - \left( \Xbar{X_i X_j} - 2  \myspace \Xbar{X_i} \myspace \Xbar{X_j} \right) \left( \Xbar{X_k X_l} - 2  \myspace\Xbar{X_k} \myspace \Xbar{X_l} \right)
\biggr\} \nonumber
\end{align}

\subsubsection{Case 2: $i=j$; $j \ne k,l$; $k = l$}

The kernels are 

\noindent\begin{minipage}{.5\linewidth}
\begin{align*}
&h(u_1,u_2) = \dfrac{1}{2} \left( X_{i_1} - X_{i_2} \right)^2;  \\
&\E_{u_2}[h(u_1,u_2)] = \dfrac{1}{2} \left( X_{i_1} - \Xbar{X_i} \right)^2;
\end{align*}
\end{minipage}\hfill
\begin{minipage}{.5\linewidth}
\begin{align*}
&g(v_1,v_2) = \dfrac{1}{2} \left( X_{k_1} - X_{k_2} \right)^2 \\
&\E_{u_2}[g(v_1,v_2)] = \dfrac{1}{2} \left( X_{k_1} - \Xbar{X_k} \right)^2
\end{align*}
\end{minipage}

Then, we have 
\begin{align}
\zeta_1 &= \Cov \left[ 
\dfrac{1}{2} \left( X_{i_1} - \Xbar{X_i} \right)^2;
\dfrac{1}{2} \left( X_{k_1} - \Xbar{X_k} \right)^2
\right] \\
&= \dfrac{1}{4} \biggl\{ \Cov \left[ 
X_{i_1}^2 - 2 X_{i_1} \Xbar{X_i} ;
X_{k_1}^2 - 2 X_{k_1} \Xbar{X_k}
\right] \biggr\} \nonumber \\
&= \dfrac{1}{4} \biggl\{ 
\E_{u_1} \left[ X_{i_1}^2 X_{k_1}^2 - 2 X_{i_1} \Xbar{X_i} X_{k_1}^2 - 2 X_{i_1}^2 X_{k_1} \Xbar{X_k} + 4 X_{i_1} \Xbar{X_i} X_{k_1} \Xbar{X_k} \right] \nonumber \\
&\qquad - \E_{u_1} \left[ X_{i_1}^2 - 2 X_{i_1} \Xbar{X_i}\right] \E_{u_1} \left[ X_{k_1}^2 - 2 X_{k_1} \Xbar{X_k} \right]
\biggr\} \nonumber \\
&= \dfrac{1}{4} \biggl\{ 
\Xbar{ X_{i}^2 X_{k}^2} - 2 \myspace \Xbar{X_i} \myspace \Xbar{X_{i} X_{k}^2}  - 2 \myspace \Xbar{X_{i}^2 X_{k_1}}  \myspace \Xbar{X_k} + 4 \Xbar{X_{i} X_{k}} \myspace \Xbar{X_i}  \myspace \Xbar{X_k} \nonumber \\
&\qquad -  \left( \Xbar{X_{i}^2} - 2 \myspace \Xbar{X_i}^2 \right) \left( \Xbar{X_{k}^2} - 2 \myspace \Xbar{X_k}^2 \right) 
\biggr\} \nonumber
\end{align}

\subsubsection{Case 3: $i=j$; $j \ne k,l$; $k \ne l$}

The kernels are 

\noindent\begin{minipage}{.5\linewidth}
\begin{align*}
&h(u_1,u_2) = \dfrac{1}{2} \left( X_{i_1} - X_{i_2} \right)^2 ;  \\
&\E_{u_2}[h(u_1,u_2)] = \dfrac{1}{2} \left( X_{i_1} - c \right)^2 ;
\end{align*}
\end{minipage}\hfill
\begin{minipage}{.5\linewidth}
\begin{align*}
&g(v_1,v_2) = \dfrac{1}{2} \left( X_{k_1} - X_{k_2} \right)  \left( X_{l_1} - X_{l_2} \right)  \\
&\E_{u_2}[g(v_1,v_2)] = \dfrac{1}{2} \left( X_{k_1} - \Xbar{X_k} \right)  \left( X_{l_1} - \Xbar{X_l} \right)
\end{align*}
\end{minipage}

Then, we have 
\begin{align}
\zeta_1 &= \Cov\left[ 
\dfrac{1}{2} \left( X_{i_1} - \Xbar{X_i} \right)^2 ; 
 \dfrac{1}{2} \left( X_{k_1} - \Xbar{X_k} \right)  \left( X_{l_1} - \Xbar{X_l} \right)
\right] \\
&= \dfrac{1}{4} \biggl\{ \Cov \left[
X_{i_1}^2 - 2 X_{i_1} \Xbar{X_i} ;
X_{k_1} X_{l_1} - \Xbar{X_k} X_{l_1} - X_{k_1} \Xbar{X_l}
\right] \biggr\} \nonumber \\
&= \dfrac{1}{4} \biggl\{ 
\E_{u_1} \big[  X_{i_1}^2 X_{k_1} X_{l_1} - 2 X_{i_1} \Xbar{X_i} X_{k_1} X_{l_1} - X_{i_1}^2 \Xbar{X_k} X_{l_1} \nonumber \\
&\qquad\qquad + 2 X_{i_1} \Xbar{X_i} \myspace \Xbar{X_k} X_{l_1} - X_{i_1}^2 X_{k_1} \Xbar{X_l} + 2 X_{i_1} \Xbar{X_i} X_{k_1} \Xbar{X_l} \big] \nonumber \\
&\qquad - \E_{u_1} \left[ X_{i_1}^2 - 2 X_{i_1} \Xbar{X_i}  \right] 
\E_{u_1} \left[ X_{k_1} X_{l_1} - \Xbar{X_k} X_{l_1} - X_{k_1} \Xbar{X_l} \right] 
\biggr\} \nonumber \\
&= \dfrac{1}{4} \biggl\{ 
\Xbar{ X_{i}^2 X_{k} X_{l} } - 2 \myspace \Xbar{X_{i} X_{k} X_{l} } \myspace \Xbar{X_i} - \Xbar{X_{i}^2 X_{l}} \myspace \Xbar{X_k}  \nonumber \\
&\qquad\qquad + 2 \myspace \Xbar{X_{i} X_{l}} \myspace \Xbar{X_i} \myspace \Xbar{X_k}  - \Xbar{X_{i}^2 X_{k_1}} \myspace \Xbar{X_l} + 2 \myspace \Xbar{X_{i} X_{k}} \myspace \Xbar{X_i} \myspace \Xbar{X_l} \nonumber \\
&\qquad - \left( \Xbar{X_{i}^2} - 2 \myspace \Xbar{X_i}^2 \right) \left( \Xbar{X_{k} X_{l} } - 2 \myspace \Xbar{X_k} \myspace \Xbar{X_l}\right)
\biggr\} \nonumber
\end{align}

\subsubsection{Case 4: $i=k$; $j \ne i,k,l$; $k \ne l$}

The kernels are 

\noindent\begin{minipage}{.5\linewidth}
\begin{align*}
&h(u_1,u_2) = \dfrac{1}{2} \left( X_{i_1} - X_{i_2} \right) \left( X_{j_1} - X_{j_2} \right)  ;  \\
&\E_{u_2}[h(u_1,u_2)] = \dfrac{1}{2} \left( X_{i_1} - \Xbar{X_i} \right) \left( X_{j_1} - \Xbar{X_j}\right) ;
\end{align*}
\end{minipage}\hfill
\begin{minipage}{.5\linewidth}
\begin{align*}
&g(v_1,v_2) = \dfrac{1}{2} \left( X_{i_1} - X_{i_2} \right)  \left( X_{l_1} - X_{l_2} \right) \\
&\E_{u_2}[g(v_1,v_2)] = \dfrac{1}{2} \left( X_{i_1} - \Xbar{X_i} \right)  \left( X_{l_1} - \Xbar{X_l} \right)
\end{align*}
\end{minipage}

Then, we have 
\begin{align}
\zeta_1 &= \Cov \left[ 
\dfrac{1}{2} \left( X_{i_1} - \Xbar{X_i} \right) \left( X_{j_1} - \Xbar{X_j}\right)  ;
\dfrac{1}{2} \left( X_{i_1} - \Xbar{X_i} \right)  \left( X_{l_1} - \Xbar{X_l} \right)
\right] \\
&= \dfrac{1}{4} \biggl\{ \Cov \left[ 
X_{i_1} X_{j_1} - \Xbar{X_i} X_{j_1} - X_{i_1} \Xbar{X_j}  ;
X_{i_1} X_{l_1} - \Xbar{X_i} X_{l_1} - X_{i_1} \Xbar{X_l} 
\right] \biggr\} \nonumber \\
&= \dfrac{1}{4} \biggl\{ \E_{u_1} \big[ 
X_{i_1}^2 X_{j_1} X_{l_1} - \Xbar{X_i} X_{j_1} X_{i_1} X_{l_1} - X_{i_1}^2 \Xbar{X_j} X_{l_1} \nonumber \\ 
&\qquad\qquad - X_{i_1} X_{j_1} \Xbar{X_i} X_{l_1} + \Xbar{X_i}^2 \myspace X_{j_1} X_{l_1} + X_{i_1} \Xbar{X_j} \myspace \Xbar{X_i} X_{l_1} \nonumber \\
&\qquad\qquad - X_{i_1}^2 X_{j_1} \Xbar{X_l}  + \Xbar{X_i} X_{j_1} X_{i_1} \Xbar{X_l}  + X_{i_1}^2 \Xbar{X_j}  \Xbar{X_l} 
\big] \nonumber \\
&\qquad -  \E_{u_1} \left[ X_{i_1} X_{j_1} - \Xbar{X_i} X_{j_1} - X_{i_1} \Xbar{X_j} \right]  
\E_{u_1} \left[ X_{i_1} X_{l_1} - \Xbar{X_i} X_{l_1} - X_{i_1} \Xbar{X_l} \right] 
\biggr\} \nonumber \\
&= \dfrac{1}{4} \biggl\{ 
\Xbar{X_{i_1}^2 X_{j_1} X_{l_1}} - \Xbar{X_i} \myspace \Xbar{X_{j_1} X_{i_1} X_{l_1}} - \Xbar{X_{i_1}^2 X_{l_1} } \myspace \Xbar{X_j} \nonumber \\ 
&\qquad\qquad - \Xbar{X_{i_1} X_{j_1} X_{l_1}} \myspace \Xbar{X_i}  + \Xbar{X_i}^2 \myspace \Xbar{X_{j_1} X_{l_1}}  + \Xbar{X_{i_1} X_{l_1}} \myspace \Xbar{X_j} \myspace \Xbar{X_i}  \nonumber \\
&\qquad\qquad - \Xbar{X_{i_1}^2 X_{j_1} } \myspace \Xbar{X_l}  + \Xbar{X_i} \myspace \Xbar{X_{j_1} X_{i_1}} \myspace \Xbar{X_l}  + \Xbar{X_{i_1}^2} \myspace \Xbar{X_j} \myspace  \Xbar{X_l} 
\big] \nonumber \\
&\qquad -  \left( \Xbar{X_{i} X_{j}} - 2 \myspace \Xbar{X_i} \myspace \Xbar{X_j} \right)
\left(  \Xbar{X_{i} X_{l}} - 2 \myspace \Xbar{X_i} \myspace \Xbar{X_l} \right) 
\biggr\} \nonumber
\end{align}

\subsubsection{Case 5: $i=k$; $i \ne j$; $j=l$; }

\noindent\begin{minipage}{.5\linewidth}
\begin{align*}
&h(u_1,u_2) = \dfrac{1}{2} \left( X_{i_1} - X_{i_2} \right) \left( X_{j_1} - X_{j_2} \right);  \\
&\E_{u_2}[h(u_1,u_2)] = \dfrac{1}{2} \left( X_{i_1} - \Xbar{X_i} \right) \left( X_{j_1} - \Xbar{X_j} \right);
\end{align*}
\end{minipage}\hfill
\begin{minipage}{.5\linewidth}
\begin{align*}
&g(v_1,v_2) = h(u_1,u_2) \\
&\E_{u_2}[g(v_1,v_2)] = \E_{u_2}[h(u_1,u_2)]
\end{align*}
\end{minipage}

Then, we have 
\begin{align}
\zeta_1 &= \Var \left[ \dfrac{1}{2} \left( X_{i_1} - \Xbar{X_i} \right) \left( X_{j_1} - \Xbar{X_j} \right) \right]  \\
&= \dfrac{1}{4} \biggl\{ \Var \left[ X_{i_1} X_{j_1} - \Xbar{X_i} X_{j_1} - X_{i_1} \Xbar{X_j}  \right]  \biggr\} \nonumber \\
&= \dfrac{1}{4} \biggl\{ \E_{u_1} \left[ (X_{i_1} X_{j_1} - \Xbar{X_i} X_{j_1} - X_{i_1}  \Xbar{X_j} )^2 \right]- \E_{u_1} \left[ X_{i_1} X_{j_1} - \Xbar{X_i} X_{j_1} - X_{i_1} \Xbar{X_j}  \right]^2   \biggr\}  \nonumber \\
&= \dfrac{1}{4} \biggl\{ \E_{u_1} \big[ X_{i_1}^2 X_{j_1}^2 - 2 X_{i_1} X_{j_1}^2 \Xbar{X_i} + \Xbar{X_i}^2 X_{j_1}^2 
 - 2 X_{i_1}^2 X_{j_1}  \Xbar{X_j} + 2 \Xbar{X_i} X_{j_1} X_{i_1}  \Xbar{X_j} + X_{i_1}^2  \Xbar{X_j}^2 \big]  \nonumber \\
&\qquad \qquad  - \left( \Xbar{X_i X_j} - 2 (\Xbar{X_i} \myspace \Xbar{X_j}) \right)^2   \biggr\}  \nonumber \\
&= \dfrac{1}{4} \biggl\{  \Xbar{X_{i}^2 X_{j}^2} - 2 \Xbar{X_{i} X_{j}^2} \myspace \Xbar{X_i} + \Xbar{X_i}^2 \myspace\Xbar{X_{j}^2} 
 - 2 \Xbar{X_{i}^2 X_{j}} \myspace  \Xbar{X_j} + 2 \Xbar{X_i} \myspace \Xbar{X_j}  \myspace\Xbar{X_{j} X_{i}} + \Xbar{X_{i}^2} \myspace  \Xbar{X_j}^2 \nonumber \\
&\qquad \qquad - \left( \Xbar{X_i X_j} - 2 (\Xbar{X_i} \myspace \Xbar{X_j}) \right)^2   \biggr\}  \nonumber
\end{align}

\subsubsection{Case 6: $i=j=k$; $i \ne l$}

The kernels are 

\noindent\begin{minipage}{.5\linewidth}
\begin{align*}
&h(u_1,u_2) = \dfrac{1}{2} \left( X_{i_1} - X_{i_2} \right)^2 ;  \\
&\E_{u_2}[h(u_1,u_2)] = \dfrac{1}{2} \left( X_{i_1} - \Xbar{X_i} \right)^2 ;
\end{align*}
\end{minipage}\hfill
\begin{minipage}{.5\linewidth}
\begin{align*}
&g(v_1,v_2) = \dfrac{1}{2} \left( X_{i_1} - X_{i_2} \right)  \left( X_{l_1} - X_{l_2} \right) \\
&\E_{u_2}[g(v_1,v_2)] = \dfrac{1}{2} \left( X_{i_1} -\Xbar{X_i} \right)  \left( X_{l_1} - \Xbar{X_l} \right)
\end{align*}
\end{minipage}

Then, we have 
\begin{align}
\zeta_1 &= \Cov \left[
\dfrac{1}{2} \left( X_{i_1} - \Xbar{X_i} \right)^2 ;
\dfrac{1}{2} \left( X_{i_1} -\Xbar{X_i} \right)  \left( X_{l_1} - \Xbar{X_l} \right)
\right]  \\
&= \dfrac{1}{4} \biggl\{  \Cov \left[
X_{i_1}^2 - 2 X_{i_1} \Xbar{X_i};
X_{i_1} X_{l_1} - \Xbar{X_i} X_{l_1} - X_{i_1} \Xbar{X_l}
\right] \biggr\}  \nonumber \\
&= \dfrac{1}{4} \biggl\{  \E_{u_1} \big[
X_{i_1}^2 X_{i_1} X_{l_1} - 2 X_{i_1} \Xbar{X_i} X_{i_1} X_{l_1} - X_{i_1}^2 \Xbar{X_i} X_{l_1} \nonumber \\
&\qquad \qquad + 2 X_{i_1} \Xbar{X_i} \myspace \Xbar{X_i} X_{l_1} - X_{i_1}^2 X_{i_1} \Xbar{X_l} + 2 X_{i_1} \Xbar{X_i} X_{i_1} \Xbar{X_l} \big] \nonumber \\
&\qquad - \E_{u_1} \left[ X_{i_1}^2 - 2X_{i_1} \Xbar{X_i} \right] 
\E_{u_1} \left[ X_{i_1} X_{l_1} - \Xbar{X_i} X_{l_1} - X_{i_1} \Xbar{X_l} \right]  
\biggr\}  \nonumber \\
&= \dfrac{1}{4} \biggl\{  
\Xbar{X_{i}^3 X_{l}} - 3 \myspace \Xbar{X_{i}^2 X_{l}} \myspace \Xbar{X_i}  + 2 \myspace \Xbar{ X_{i} X_{l}} \myspace \Xbar{X_i}^2 - \Xbar{X_{i}^3} \myspace \Xbar{X_l} + 2 \myspace \Xbar{X_{i}^2} \myspace \Xbar{X_i} \myspace \Xbar{X_l}  \nonumber \\
&\qquad - \left( \Xbar{X_i^2} - 2 \myspace  \Xbar{X_i}^2 \right)
\left( \Xbar{X_i X_l} - 2 \myspace  \Xbar{X_i} \myspace \Xbar{X_l} \right)
\biggr\}  \nonumber
\end{align}

\subsubsection{Case 7: $i=j,k,l$}

The kernels are 

\noindent\begin{minipage}{.5\linewidth}
\begin{align*}
&h(u_1,u_2) = \dfrac{1}{2} \left( X_{i_1} - X_{i_2} \right)^2 ;  \\
&\E_{u_2}[h(u_1,u_2)] = \dfrac{1}{2} \left( X_{i_1} - \Xbar{X_i} \right)^2 ;
\end{align*}
\end{minipage}\hfill
\begin{minipage}{.5\linewidth}
\begin{align*}
&g(v_1,v_2) = h(u_1,u_2) \\
&\E_{u_2}[g(v_1,v_2)] =\E_{u_2}[h(u_1,u_2)]
\end{align*}
\end{minipage}

Then, we have 
\begin{align}
\zeta_1 &= \Var \left[ \dfrac{1}{2} \left( X_{i_1} - \Xbar{X_i} \right)^2 \right]  \\
&= \dfrac{1}{4} \Var \left[  X_{i_1}^2 - 2 X_{i_1} \Xbar{X_i} \right] \nonumber \\
&= \dfrac{1}{4} \biggl\{
\E_{u_1} \left[ \left( X_{i_1}^2 - 2 X_{i_1} \Xbar{X_i} \right)^2\right]
- \E_{u_1} \left[ X_{i_1}^2 - 2 X_{i_1} \Xbar{X_i} \right]^2
\biggr\}\nonumber \\
&= \dfrac{1}{4} \biggl\{
\Xbar{X_{i}^4} - 4 \Xbar{X_{i}^3} \myspace \Xbar{X_i} + 4 \Xbar{X_{i}^2} \myspace \Xbar{X_i}^2 
- \left( \Xbar{X_i^2} - 2 \Xbar{X_i}^2 \right)^2
\biggr\}\nonumber
\end{align}

\subsection{Derivation in $\mathcal{O}(n)$ time for all terms} \label{EJS:appendix:subsec:proof_linear_computation_time}

In section~\ref{EJS:appendix:subsec:derivation_sevencases}, all terms are in the form of $\E[X]$,$\E[XY]$,$\E[XYZ]$ and $\E[XYUV]$ and can be computed in $\mathcal{O}(n)$ as following

\begin{align}
\E[X] &= \frac{1}{m} \sum_{q=1}^n X_q \\
\E[XY] &= \frac{1}{m} \sum_{q=1}^n X_q \odot Y_q \\
\E[XYZ] &= \frac{1}{m} \sum_{q=1}^n X_q \odot Y_q \odot Z_q \\
\E[XYUV] &= \frac{1}{m} \sum_{q=1}^n X_q \odot Y_q \odot U_q \odot V_q
\end{align}

\bibliography{bibliography}

\end{document}